\title[Exp-Concavity of Proper Composite Losses]{Exp-Concavity of Proper Composite Losses}
\definecolor{islamicgreen}{rgb}{0.0, 0.56, 0.0}
\DeclareMathOperator*{\argmin}{argmin}
\begin{document}

\maketitle

\begin{abstract}
The goal of online prediction with expert advice is to find a decision strategy which will perform almost as well as the best expert in a given pool of experts, on any sequence of outcomes. This problem has been widely studied and $O(\sqrt{T})$ and $O(\log{T})$ regret bounds can be achieved for convex losses (\cite{zinkevich2003online}) and strictly convex losses with bounded first and second derivatives (\cite{hazan2007logarithmic}) respectively. In special cases like the Aggregating Algorithm (\cite{vovk1995game}) with mixable losses and the Weighted Average Algorithm (\cite{kivinen1999averaging}) with exp-concave losses, it is possible to achieve $O(1)$ regret bounds. \cite{van2012exp} has argued that mixability and exp-concavity are roughly equivalent under certain conditions. Thus by understanding the underlying relationship between these two notions we can gain the best of both algorithms (strong theoretical performance guarantees of the Aggregating Algorithm and the computational efficiency of the Weighted Average Algorithm). In this paper we provide a complete characterization of the exp-concavity of any proper composite loss. Using this characterization and the mixability condition of proper losses (\cite{van2012mixability}), we show that it is possible to transform (re-parameterize) any $\beta$-mixable binary proper loss into a $\beta$-exp-concave composite loss with the same $\beta$. In the multi-class case, we propose an approximation approach for this transformation.
\end{abstract}

\begin{keywords}
proper scoring rules, link functions, composite losses, sequential prediction, regret bound, aggregating algorithm, weighted average algorithm, mixability, exp-concavity, substitution functions.
\end{keywords}

\section{Introduction}
\label{sec:intro}
Loss functions are the means by which the quality of a prediction in learning problem is evaluated. A composite loss (the composition of a class probability estimation (CPE) loss with an invertible link function which is essentially just a re-parameterization) is proper if its risk is minimized when predicting the true underlying class probability (a formal definition is given later). In \cite{vernet2011composite}, there is an argument that shows that there is no point in using losses that are neither proper nor proper composite as they are inadmissible. Flexibility in the choice of loss function is important to tailor the solution to a learning problem (\cite{buja2005loss}, \cite{hand1994deconstructing}, \cite{hand2003local}), and it could be attained by characterizing the set of loss functions using natural parameterizations.

The goal of the learner in a \textit{game of prediction with expert advice} (which is formally described in section \ref{sec:games}) is to predict as well as the best expert in the given pool of experts. The regret bound of the learner depends on the merging scheme used to merge the experts' predictions and the type of loss function used to measure the performance. It has already been shown that constant regret bounds are achievable for mixable losses when the Aggregating Algorithm is the merging scheme (\cite{vovk1995game}), and for exp-concave losses when the Weighted Average Algorithm is the merging scheme (\cite{kivinen1999averaging}). We can see that the exp-concavity trivially implies mixability. Even though the converse implication is not true in general, under some re-parameterization we can make it possible. This paper discusses general conditions on proper losses under which they can be transformed to an exp-concave loss through a suitable link function. In the binary case, these conditions give two concrete formulas (Proposition \ref{geoprop} and Corollary \ref{specialcoro}) for link functions that can transform $\beta$-mixable proper losses into $\beta$-exp-concave, proper, composite losses. The explicit form of the link function given in Proposition \ref{geoprop} is derived using the same geometric construction used in \cite{van2012exp}. 

Further we extend the work by \cite{vernet2011composite}, to provide a complete characterization of the exp-concavity of the proper composite multi-class losses in terms of the Bayes risk associated with the underlying proper loss, and the link function. The mixability of proper losses (mixability of a proper composite loss is equivalent to the mixability of its generating proper loss) is studied in \cite{van2012mixability}. Using these characterizations (for the binary case), in Corollary \ref{specialcoro} we derive an \textit{exp-concavifying link} function that can also transform any $\beta$-mixable proper loss into a $\beta$-exp-concave composite loss. Since for the multi-class losses these conditions do not hold in general, we propose a geometric approximation approach (Proposition \ref{exp_concave_approx}) which takes a parameter $\epsilon$ and transforms the mixable loss function appropriately on a subset $S_\epsilon$ of the prediction space. When the prediction space is $\Delta^n$, any prediction belongs to the subset $S_\epsilon$ for sufficiently small $\epsilon$. In the conclusion we provide a way to use the Weighted Average Algorithm with learning rate $\beta$ for proper $\beta$-mixable but non-exp-concave loss functions to achieve $O(1)$ regret bound.

The exp-concave losses achieve $O(\log{T})$ regret bound in online convex optimization algorithms, which is a more general setting of online learning problems. Thus the exp-concavity characterization of composite losses could be helpful in constructing exp-concave losses for online learning problems.

The paper is organized as follows. In Section \ref{sec:preli} we formally introduce the loss function, several loss types, conditional risk, proper composite losses and a game of prediction with expert advice. In Section \ref{sec:expsec} we consider our main problem --- whether one can always find a link function to transform $\beta$-mixable losses into $\beta$-exp-concave losses. Section \ref{sec:conc} concludes with a brief discussion. The impact of the choice of substitution function on the regret of the learner is explored via experiments in Appendix \ref{sec:subfunc}. In Appendix \ref{sec:probability}, we discuss the mixability conditions of \textit{probability games} with continuous outcome space. Detailed proofs are in Appendix \ref{sec:proof}.

\section{Preliminaries and Background}
\label{sec:preli}

This section provides the necessary background on loss functions, conditional risks, and the sequential prediction problem.

\subsection{Notation}

We use the following notation throughout. A superscript prime, $A'$ denotes transpose of the matrix or vector $A$, except when applied to a real-valued function where it denotes derivative ($f'$). We denote the matrix multiplication of compatible matrices $A$ and $B$ by $A \cdot B$, so the inner product of two vectors $x,y \in \mathbb{R}^n$ is $x' \cdot y$. Let $[n]:=\{1,...,n\}$, $\mathbb{R}_{+} := [0,\infty)$ and the $n$-simplex $\Delta^n:=\{(p_1,...,p_n)': 0 \leq p_i \leq 1, \forall{i \in [n]}, \mathrm{and} \sum_{i \in [n]} p_i = 1\}$. If $x$ is a $n$-vector, $A=\mathrm{diag}(x)$ is the $n \times n$ matrix with entries $A_{i,i}=x_i$ , $i \in [n]$ and $A_{i,j}=0$ for $i \neq j$. If $A-B$ is positive definite (resp. semi-definite), then we write $A \succ B$ (resp. $A \succcurlyeq B$). We use $e_i^n$ to denote the $i$th $n$-dimensional unit vector, $e_i^n=(0,...,0,1,0,...0)'$ when $i \in [n]$, and define $e_i^n=0_n$ when $i > n$. The $n$-vector $\vone_n:=(1,...,1)'$. We write $\llbracket P \rrbracket=1$ if $P$ is true and $\llbracket P \rrbracket=0$ otherwise. Given a set $S$ and a weight vector $w$, the \textit{convex combination} of the elements of the set w.r.t the weight vector is denoted by $\mathrm{co}_w S$, and the \textit{convex hull} of the set which is the set of all possible convex combinations of the elements of the set is denoted by $\mathrm{co} S$ (\cite{rockafellar1970convex}). If $S,T \subset \RR^n$, then the \textit{Minkowski sum} $S \varoplus T := \{ s+t: s  \in S, t \in T \}$. $\mathcal{Y}^{\mathcal{X}}$ represents the set of all functions $f:\mathcal{X}\rightarrow\mathcal{Y}$. Other notation (the Kronecker product $\otimes$, the Jacobian $\textsf{D}$, and the Hessian $\textsf{H}$) is defined in Appendix A of \cite{van2012mixability}. $\textsf{D}f(v)$ and $\textsf{H}f(v)$ denote the Jacobian and Hessian of $f(v)$ w.r.t. $v$ respectively. When it is not clear from the context, we will explicitly mention the variable; for example $\textsf{D}_{\tilde{v}}f(v)$ where $v=h(\tilde{v})$.

\subsection{Loss Functions}

For a prediction problem with an instance space $\mathcal{X}$, outcome space $\mathcal{Y}$ and prediction space $\mathcal{V}$, a loss function $\ell:\mathcal{Y} \times \mathcal{V} \rightarrow \mathbb{R}_{+}$ (bivariate function representation) can be defined to assign a penalty $\ell(y,v)$ for predicting $v \in \mathcal{V}$ when the actual outcome is $y \in \mathcal{Y}$. When the outcome space $\mathcal{Y}=[n], \, n \geq 2$, the loss function $\ell$ is called a \textit{multi-class loss} and it can be expressed in terms of its partial losses $\ell_{i}:=\ell(i,\cdot)$ for any outcome $i \in [n]$, as $\ell(y,v) = \sum_{i \in [n]} \llbracket y=i \rrbracket \ell_{i}(v)$. The vector representation of the multi-class loss is given by $\ell:\mathcal{V}\rightarrow \mathbb{R}_+^n$, which assigns a vector $\ell(v)=(\ell_1(v),...,\ell_n(v))'$ to each prediction $v \in \mathcal{V}$. A loss is differentiable if all of its partial losses are differentiable. In this paper, we will use the bivariate function representation ($\ell(y,v)$) to denote a general loss function and the vector representation for multi-class loss functions.

The \textit{super-prediction set} of a binary loss $\ell$ is defined as
\begin{equation*}
S_{\ell} := \{ x \in \mathbb{R}^n : \exists v \in \mathcal{V}, x \geq \ell(v) \},
\end{equation*}
where inequality is component-wise. For any dimension $n$ and $\beta \geq 0$, the $\beta$-exponential operator $E_{\beta}:[0,\infty]^n \rightarrow [0,1]^n$ is defined by $E_{\beta}(x):=(e^{-\beta x_1},...,e^{-\beta x_n})'$. For $\beta > 0$ it is clearly invertible with inverse $E_{\beta}^{-1}(z)=-\beta^{-1}(\ln{z_1},...,\ln{z_n})'$. The $\beta$-exponential transformation of the super-prediction set is given by
\begin{equation*}
E_{\beta}(S_{\ell}) := \{ (e^{-\beta x_1},...,e^{-\beta x_n})' \in \mathbb{R}^n : (x_1,...,x_n)' \in S_{\ell} \}, \quad \beta > 0.
\end{equation*}

A multi-class loss $\ell$ is \textit{convex} if $f(v)=\ell_{y}(v)$ is convex in $v$ for all $y \in [n]$, $\alpha$\textit{-exp-concave} (for $\alpha > 0$) if $f(v)=e^{-\alpha \ell_{y}(v)}$ is concave in $v$ for all $y \in [n]$, \textit{weakly mixable} if the super-prediction set $S_{\ell}$ is convex (\cite{kalnishkan2005weak}), and $\beta$\textit{-mixable} (for $\beta > 0$) if the set $E_{\beta}(S_{\ell})$ is convex (\cite{vovk2009prediction,vovk1995game}). The \textit{mixability constant} $\beta_{\ell}$ of a loss $\ell$ is the largest $\beta$ such that $\ell$ is $\beta$-mixable; i.e. $\beta_{\ell} := \sup{\{ \beta > 0 : \ell \, \mbox{is} \, \beta\mbox{-mixable}\}}$. If the loss function $\ell$ is $\alpha$-exp-concave (resp. $\beta$-mixable) then it is $\alpha'$-exp-concave for any $0 < \alpha' \leq \alpha$ (resp. $\beta'$-mixable for any $0 < \beta' \leq \beta$), and its $\lambda$-scaled version ($\lambda \ell$) for some $\lambda > 0$ is $\frac{\alpha}{\lambda}$-exp-concave (resp. $\frac{\beta}{\lambda}$-mixable). If the loss $\ell$ is $\alpha$-exp-concave, then it is convex and $\alpha$-mixable.

For a multi-class loss $\ell$, if the prediction space $\mathcal{V}=\Delta^n$ then it is said to be \textit{multi-class probability estimation (CPE) loss}, where the predicted values are directly interpreted as probability estimates: $\ell:\Delta^n \rightarrow \mathbb{R}_+^n$. We will say a multi-CPE loss is \textit{fair} whenever $\ell_{i}(e_i^n)=0$, for all $i \in [n]$. That is, there is no loss incurred for perfect prediction. Examples of multi-CPE losses include the \textit{square loss} $\ell_{i}^{\mathrm{sq}}(q):=\sum_{j \in [n]}(\llbracket i=j \rrbracket - q_j)^2$, the \textit{log loss} $\ell_{i}^{\mathrm{log}}(q):=-\log{q_i}$, the \textit{absolute loss} $\ell_{i}^{\mathrm{abs}}(q):=\sum_{j \in [n]}|\llbracket i=j \rrbracket - q_j|$, and the \textit{0-1 loss} $\ell_{i}^{\mathrm{01}}(q):=\llbracket i \in \arg \max_{j \in [n]} q_j \rrbracket$.

\subsection{Conditional and Full Risks}

Let $\textsf{X}$ and $\textsf{Y}$ be random variables defined on the instance space $\mathcal{X}$ and the outcome space $\mathcal{Y}=[n]$ respectively. Let $D$ be the joint distribution of $(\textsf{X},\textsf{Y})$ and for $x \in \mathcal{X}$, denote the conditional distribution by $p(x)=(p_1(x),...,p_n(x))'$ where $p_i(x):=P(\textsf{Y}=i|\textsf{X}=x), \, \forall{i \in [n]}$, and the marginal distribution by $M(x):=P(\textsf{X}=x)$. For any multi-CPE loss $\ell$, the \textit{conditional risk} is defined as
\begin{equation}
\label{condrisk}
L_{\ell} : \Delta^n \times \Delta^n \ni (p,q) \mapsto L_{\ell}(p,q)=\mathbb{E}_{\textsf{Y}\sim p}[\ell_{\textsf{Y}}(q)]=p' \cdot \ell(q)=\sum_{i \in [n]}p_i \ell_i(q) \in \mathbb{R}_+ \, ,
\end{equation}
where $\textsf{Y}\sim p$ represents a Multinomial distribution with parameter $p \in \Delta^n$. The \textit{full risk} of the estimator function $q:\mathcal{X}\rightarrow\Delta^n$ is defined as
\begin{equation*}
\mathbb{L}_{\ell}(p,q,M):=\mathbb{E}_{(\textsf{X},\textsf{Y})\sim D}[\ell_{\textsf{Y}}(q(\textsf{X}))]=\mathbb{E}_{\textsf{X}\sim M}[L_{\ell}(p(\textsf{X}),q(\textsf{X}))].
\end{equation*}
Furthermore the \textit{Bayes risk} is defined as
\begin{equation*}
\underline{\mathbb{L}}_{\ell}(p,M):=\inf_{q \in (\Delta^n)^{\mathcal{X}}}{\mathbb{L}_{\ell}(p,q,M)}=\mathbb{E}_{\textsf{X}\sim M}[\Lubar_{\ell}(p(\textsf{X}))],
\end{equation*}
where $\Lubar_{\ell}(p)=\inf_{q \in \Delta^n}{L_{\ell}(p,q)}$ is the \textit{conditional Bayes risk} and is always concave (\cite{gneiting2007strictly}). If $\ell$ is fair, $\Lubar_{\ell}(e_i^n)=\ell_{i}(e_i^n)=0$. One can understand the effect of choice of loss in terms of the conditional perspective (\cite{reid2011information}), which allows one to ignore the marginal distribution $M$ of $\textsf{X}$ which is typically unknown.

\subsection{Proper and Composite Losses}
\label{sec:proper}

A multi-CPE loss $\ell:\Delta^n \rightarrow \mathbb{R}_+^n$ is said to be \textit{proper} if for all $p \in \Delta^n$, $\Lubar_{\ell}(p) = L_{\ell}(p,p)=p' \cdot \ell(p)$ (\cite{buja2005loss}, \cite{gneiting2007strictly}), and \textit{strictly proper} if $\Lubar_{\ell}(p) < L_{\ell}(p,q)$ for all $p,q \in \Delta^n$ and $p \neq q$. It is easy to see that the log loss, square loss, and 0-1 loss are proper while absolute loss is not. Furthermore, both log loss and square loss are strictly proper while 0-1 loss is proper but not strictly proper.

Given a proper loss $\ell: \Delta^n \to \RR_+^n$ with differentiable Bayes conditional risk $\Lubar_\ell: \Delta^n \mapsto \RR_+$, in order to be able to calculate derivatives easily, following \cite{van2012mixability} we define
\begin{align}
  \Deltatil^n &:= \cbr{(p_1, \ldots, p_{n-1})' : p_i \ge 0, \ \sum_{i=1}^{n-1} p_i \le 1} \\
  \Pi_\Delta &: \RR_+^n \ni p = (p_1, \ldots, p_n)' \mapsto \ptil = (p_1, \ldots, p_{n-1})' \in
  \RR_+^{n-1} \\
  \Pi_\Delta^{-1} &: \Deltatil^n \ni \ptil = (\ptil_1, \ldots, \ptil_{n-1})' \mapsto p = (\ptil_1, \ldots, \ptil_{n-1}, 1 - \sum\nolimits_{i=1}^{n-1} \ptil_i)' \in \Delta^n \\
  \Lubartil_\ell &: \Deltatil^n \ni \ptil \mapsto \Lubar_\ell(\Pi_\Delta^{-1} (\ptil)) \in \RR_+ \\
  \elltil &: \Deltatil^n \ni \ptil \mapsto \Pi_\Delta (\ell (\Pi_\Delta^{-1}(\ptil))) \in \RR_+^{n-1}.
\end{align}
Let $\psitil: \Deltatil^n \to \mathcal{V} \subseteq \RR_+^{n-1}$ be continuous and strictly monotone (hence invertible) for some convex set $\mathcal{V}$.
It induces $\psi: \Delta^n \to \mathcal{V}$ via
\begin{align}
\label{eq:link_def}
  \psi := \psitil \circ \Pi_\Delta.
\end{align}
Clearly $\psi$ is continuous and invertible with $\psi^{-1} = \Pi_\Delta^{-1} \circ \psitil^{-1}$. We can now extend the notion of properness to the prediction space $\mathcal{V}$ from $\Delta^n$ using this link function. Given a proper loss $\ell: \Delta^n \to \RR_+^n$, a \textit{proper composite loss} $\ell^{\psi}:\mathcal{V} \rightarrow \mathbb{R}_{+}^n$ for multi-class probability estimation is defined as $\ell^{\psi}:=\ell \circ \psi^{-1} = \ell \circ \Pi_\Delta^{-1} \circ \psitil^{-1}$. We can easily see that the conditional Bayes risks of the composite loss $\ell^{\psi}$ and the underlying proper loss $\ell$ are equal ($\Lubar_{\ell}=\Lubar_{\ell^{\psi}}$). Every continuous proper loss has a convex super-prediction set (\cite{vernet2011composite}). Thus they are weakly mixable. Since by applying a link function the super-prediction set won't change (as it is just a re-parameterization), all proper composite losses are also weakly mixable.

\subsection{Game of Prediction with Expert Advice}
\label{sec:games}
Let $\mathcal{Y}$ be the outcome space, $\mathcal{V}$ be the prediction space, and $\ell:\mathcal{Y} \times \mathcal{V} \rightarrow \mathbb{R}_{+}$ be the loss function, then a \textit{game of prediction with expert advice} represented by the tuple ($\mathcal{Y}, \mathcal{V}, \ell$) can be described as follows: for each trial $t=1,...,T$,
\begin{itemize}
\item{$N$ experts make their prediction $v_t^1,...,v_t^N \in \mathcal{V}$}
\item{the learner makes his own decision $v_t \in \mathcal{V}$}
\item{the environment reveals the actual outcome $y_t \in \mathcal{Y}$}
\end{itemize}
Let $S=(y_1,...,y_T)$ be the outcome sequence in $T$ trials. Then the \textit{cumulative loss} of the learner over $S$ is given by $L_{S,\ell}:=\sum_{t=1}^T \ell(y_t,v_t)$, of the \textit{i}-th expert is given by $L_{S,\ell}^i:=\sum_{t=1}^T \ell(y_t,v_t^i)$, and the \textit{regret} of the learner is given by $R_{S,\ell}:=L_{S,\ell}-\min_i L_{S,\ell}^i$. The goal of the learner is to predict as well as the best expert; to which end the learner tries to minimize the regret.

When using the exponential weights algorithm (which is an important family of algorithms in game of prediction with expert advice), at the end of each trial, the weight of each expert is updated as $w_{t+1}^i \varpropto w_t^i \cdot e^{-\eta \ell(y_t,v_t^i)}$ for all $i \in [N]$, where $\eta$ is the learning rate and $w_t^i$ is the weight of the $i^{\mathrm{th}}$ expert at time $t$ (the weight vector of experts at time $t$ is denoted by $w_t=(w_t^1,...,w_t^N)'$). Then based on the weights of experts, their predictions are merged using different merging schemes to make the learner's prediction. The Aggregating Algorithm and the Weighted Average Algorithm are two important algorithms in the family of exponential weights algorithm.

Consider multi-class games with outcome space $\mathcal{Y}=[n]$. In the Aggregating Algorithm with learning rate $\beta$, first the loss vectors of the experts and their weights are used to make a \textit{generalized prediction} $g_t=(g_t(1),...,g_t(n))'$ which is given by
\begin{equation*}
g_t := E_{\beta}^{-1}\left( \mathrm{co}_{w_t}\{E_{\beta}\left( (\ell_1(v_t^i),...,\ell_n(v_t^i))' \right) \}_{i \in [N]} \right) = E_{\beta}^{-1}\left(\sum_{i \in [N]}w_t^i (e^{-\beta \ell_1(v_t^i)},...,e^{-\beta \ell_n(v_t^i)})' \right).
\end{equation*}
Then this generalized prediction is mapped into a permitted prediction $v_t$ via a \textit{substitution function} such that $(\ell_1(v_t),...,\ell_n(v_t))' \leq c(\beta) (g_t(1),...,g_t(n))'$, where the inequality is element-wise and the constant $c(\beta)$ depends on the learning rate. If $\ell$ is $\beta$-mixable, then $E_{\beta}(S_{\ell})$ is convex, so $\mathrm{co}\{E_{\beta}\left( \ell(v) \right) : v \in \mathcal{V} \} \subseteq E_{\beta}(S_{\ell})$, and we can always choose a substitution function with $c(\beta)=1$. Consequently for $\beta$-mixable losses, the learner of the Aggregating Algorithm is guaranteed to have regret bounded by $\frac{\log{N}}{\beta}$ (\cite{vovk1995game}).

In the Weighted Average Algorithm with learning rate $\alpha$, the experts' predictions are simply merged according to their weights to make the learner's prediction $v_t=\mathrm{co}_{w_t}\{v_t^i\}_{i \in [N]}$, and this algorithm is guaranteed to have a $\frac{\log{N}}{\alpha}$ regret bound for $\alpha$-exp-concave losses (\cite{kivinen1999averaging}). In either case it is preferred to have bigger values for the constants $\beta$ and $\alpha$ to have better regret bounds. Since an $\alpha$-exp-concave loss is $\beta$-mixable for some $\beta \geq \alpha$, the regret bound of the Weighted Average Algorithm is worse than that of the Aggregating Algorithm by a small constant factor. In \cite{vovk2001competitive}, it is noted that $(\ell_1(\mathrm{co}_{w_t}\{v_t^i\}_i),...,\ell_n(\mathrm{co}_{w_t}\{v_t^i\}_i))' \leq g_t = E_{\beta}^{-1}\left( \mathrm{co}_{w_t} \{ E_{\beta} \left( (\ell_1(v_t^i),...,\ell_n(v_t^i))' \right) \}_i \right)$ is always guaranteed only for $\beta$-exp-concave losses. Thus for $\alpha$-exp-concave losses, the Weighted Average Algorithm is equivalent to the Aggregating Algorithm with the weighted average of the experts' predictions as its substitution function and $\alpha$ as the learning rate for both algorithms.

Even though the choice of substitution function will not have any impact on the regret bound and the weight update mechanism of the Aggregating Algorithm, it will certainly have impact on the actual regret of the learner over a given sequence of outcomes. According to the results given in Appendix \ref{sec:subfunc} (where we have empirically compared some substitution functions), this impact on the actual regret varies depending on the outcome sequence, and in general the regret values for practical substitution functions don't differ much --- thus we can stick with a computationally efficient substitution function.

\section{Exp-Concavity of Proper Composite Losses}
\label{sec:expsec}
Exp-concavity of a loss is desirable for better (logarithmic) regret bounds in online convex optimization algorithms, and for efficient implementation of exponential weights algorithms. In this section we will consider whether one can always find a link function that can transform a $\beta$-mixable proper loss into $\beta$-exp-concave composite loss --- first by using the geometry of the set $E_\beta(S_\ell)$ (Section \ref{sec:geometry}), and then by using the characterization of the composite loss in terms of the associated Bayes risk (Sections \ref{sec:calculus}, and \ref{sec:link}).

\subsection{Geometric approach}
\label{sec:geometry}
In this section we will use the same construction used by \cite{van2012exp} to derive an explicit closed form of a link function that could re-parameterize any $\beta$-mixable loss into a $\beta$-exp-concave loss, under certain conditions which are explained below. Given a multi-class loss $\ell:\mathcal{V}\rightarrow \RR_+^n$, define
\begin{eqnarray}
\ell(\mathcal{V}) &:=& \{ \ell(v) : v \in \mathcal{V} \}, \\
\mathcal{B}_{\beta} &:=& \mathrm{co} E_\beta (\ell(\mathcal{V})).
\end{eqnarray}
For any $g \in \mathcal{B}_{\beta}$ let $c(g):=\sup{\{ c \geq 0 : (g+c\vone_n) \in \mathcal{B}_{\beta}\}}$. Then the ``north-east" boundary of the set $\mathcal{B}_{\beta}$ is given by $\partial_{\vone_n} \mathcal{B}_{\beta} := \{ g + c(g) : g \in \mathcal{B}_{\beta} \}$. The following proposition is the main result of this section.
\begin{proposition}
\label{geoprop}
Assume $\ell$ is strictly proper and it satisfies the condition : $\partial_{\vone_n} \mathcal{B}_{\beta} \subseteq E_\beta (\ell(\mathcal{V}))$ for some $\beta > 0$. Define $\psi(p) := J E_\beta(\ell(p))$ for all $p \in \Delta^n$, where $J=[I_{n-1},-\vone_{n-1}]$. Then $\psi$ is invertible, and $\ell \circ \psi^{-1}$ is $\beta$-exp-concave over $\psi(\Delta^n)$, which is a convex set.
\end{proposition}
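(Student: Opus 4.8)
Since $\ell$ is proper, its prediction space is $\mathcal{V}=\Delta^n$; write $\Phi:=E_\beta\circ\ell:\Delta^n\to[0,1]^n$, so that $\psi=J\Phi$ and $\mathcal{B}_\beta=\mathrm{co}\,\Phi(\Delta^n)$, a compact convex set. Note $J\vone_n=0$ and $\ker J=\mathrm{span}\{\vone_n\}$. The plan is to establish three things: (i) $\psi$ is injective on $\Delta^n$; (ii) $\psi(\Delta^n)$ is convex; (iii) for each $y\in[n]$ the map $v\mapsto e^{-\beta\ell_y(\psi^{-1}(v))}$ is concave on $\psi(\Delta^n)$.

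The first step is to record the order-theoretic content of strict properness: $\Phi(\Delta^n)$ is an \emph{antichain} for the coordinatewise order. Indeed, $\Phi(p)\le\Phi(q)$ implies $\ell(p)\ge\ell(q)$ (as $E_\beta^{-1}$ is coordinatewise decreasing), so $L_\ell(p,q)=p'\cdot\ell(q)\le p'\cdot\ell(p)=\Lubar_\ell(p)$, which by strict properness forces $p=q$. In particular $\Phi$ is injective, and injectivity of $\psi$ follows: $\psi(p)=\psi(q)$ gives $\Phi(p)-\Phi(q)\in\ker J=\mathrm{span}\{\vone_n\}$, i.e.\ $\Phi(p)=\Phi(q)+c\vone_n$, and taking $c\ge0$ without loss of generality we get $\Phi(q)\le\Phi(p)$, hence $p=q$. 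Next I would combine the antichain property with the hypothesis $\partial_{\vone_n}\mathcal{B}_\beta\subseteq\Phi(\Delta^n)$ to obtain the equality $\Phi(\Delta^n)=\partial_{\vone_n}\mathcal{B}_\beta$: for any $p$, the point $\Phi(p)+c(\Phi(p))\vone_n$ lies in $\partial_{\vone_n}\mathcal{B}_\beta\subseteq\Phi(\Delta^n)$, say it equals $\Phi(q)$; since $\Phi(q)\ge\Phi(p)$, the antichain property gives $\Phi(q)=\Phi(p)$, so $c(\Phi(p))=0$ and $\Phi(p)\in\partial_{\vone_n}\mathcal{B}_\beta$. Convexity of $\psi(\Delta^n)$ is then immediate: since $J\vone_n=0$, $\psi(\Delta^n)=J\Phi(\Delta^n)=J\,\partial_{\vone_n}\mathcal{B}_\beta=J\mathcal{B}_\beta$, a linear image of a convex set.

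For (iii), fix $v\in\psi(\Delta^n)$, set $p:=\psi^{-1}(v)$ and $z:=\Phi(p)\in\partial_{\vone_n}\mathcal{B}_\beta$, so $Jz=v$ and $e^{-\beta\ell_y(\psi^{-1}(v))}=z_y$. The key claim is that $z_n$ equals $t^{\ast}(v):=\max\{w_n:w\in\mathcal{B}_\beta,\ Jw=v\}$. Any feasible $w$ equals $z+c\vone_n$; if $c>0$ then $w+c(w)\vone_n\in\partial_{\vone_n}\mathcal{B}_\beta=\Phi(\Delta^n)$ equals some $\Phi(q)$ with $\Phi(q)\ge z+c\vone_n>z=\Phi(p)$, contradicting the antichain property; hence $c\le0$, so $w_n\le z_n$, and the maximum --- attained at $z$ itself --- is $z_n$. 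Therefore $e^{-\beta\ell_n(\psi^{-1}(v))}=t^{\ast}(v)$ and, for $y\in[n-1]$, $e^{-\beta\ell_y(\psi^{-1}(v))}=z_y=v_y+z_n=v_y+t^{\ast}(v)$. It remains to note that $t^{\ast}$ is concave on $\psi(\Delta^n)=J\mathcal{B}_\beta$: given $v,v'$ there with maximizers $w,w'\in\mathcal{B}_\beta$, the point $\lambda w+(1-\lambda)w'\in\mathcal{B}_\beta$ is feasible for $\lambda v+(1-\lambda)v'$ and has $n$-th coordinate $\lambda t^{\ast}(v)+(1-\lambda)t^{\ast}(v')$, so $t^{\ast}(\lambda v+(1-\lambda)v')\ge\lambda t^{\ast}(v)+(1-\lambda)t^{\ast}(v')$. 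Hence every coordinate of $v\mapsto e^{-\beta\ell(\psi^{-1}(v))}$ is a concave function of $v$ (constant or linear, plus concave), i.e.\ $\ell\circ\psi^{-1}$ is $\beta$-exp-concave on $\psi(\Delta^n)$.

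The main obstacle is the identity $\Phi(\Delta^n)=\partial_{\vone_n}\mathcal{B}_\beta$ and the sharper fact $z_n=t^{\ast}(v)$: this is exactly where both hypotheses (strict properness and $\partial_{\vone_n}\mathcal{B}_\beta\subseteq E_\beta(\ell(\mathcal{V}))$) are needed, and it is what makes $\psi^{-1}$ recoverable from the north-east envelope of $\mathcal{B}_\beta$, reducing $\beta$-exp-concavity to the elementary concavity of that envelope. Some care is also needed for compactness bookkeeping --- finiteness and attainment of $c(g)$ and $t^{\ast}(v)$, and continuity of $\Phi$ into $[0,1]^n$ (hence compactness of $\mathcal{B}_\beta$) even when $\ell$ takes the value $+\infty$, as for log loss --- but this is routine.
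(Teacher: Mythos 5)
Your proof is correct, and it reaches the conclusion by a somewhat different route than the paper. The paper argues pointwise at midpoints: it shows $\tfrac{1}{2}(\psi(p)+\psi(q))=\psi(r)$ for some $r$ (giving midpoint convexity of $\psi(\Delta^n)$), and then verifies midpoint $\beta$-exp-concavity by writing $\tfrac{1}{2}(E_\beta(\ell(p))+E_\beta(\ell(q)))=E_\beta(\ell(r))+c\vone_n$ and using the boundary hypothesis to force $c\le 0$; strict properness enters only to make the ``lifting'' map $T$ (and hence $\psi^{-1}$) well defined. You instead isolate two structural facts --- that strict properness makes $E_\beta(\ell(\Delta^n))$ an antichain, and that together with the hypothesis this forces $E_\beta(\ell(\Delta^n))=\partial_{\vone_n}\mathcal{B}_\beta$ --- and then identify $e^{-\beta\ell_n(\psi^{-1}(v))}$ with the envelope $t^{\ast}(v)=\max\{w_n: w\in\mathcal{B}_\beta,\ Jw=v\}$, whose concavity is the elementary fact that partial maximization of a linear functional over a convex set along affine fibers is concave. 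This buys you two things the paper's write-up glosses over: you get $\psi(\Delta^n)=J\mathcal{B}_\beta$ and genuine (not just midpoint) convexity and concavity for free, without having to invoke continuity to upgrade midpoint statements, and the antichain lemma makes explicit exactly where strict properness is used (both for injectivity of $\psi$ and for showing the exp-loss surface coincides with the north-east boundary). The cost is a little more bookkeeping (finiteness and attainment of $c(g)$ and $t^{\ast}$, compactness of $\mathcal{B}_\beta$), which you correctly flag as routine. Both proofs rest on the same geometric picture; yours is the more robust and self-contained rendering of it.
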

The condition stated in the above proposition is satisfied by any $\beta$-mixable proper loss in the binary case ($n=2$), but it is not guaranteed in the multi-class case where $n > 2$. In the binary case the link function can be given as $\psi(p)=e^{-\beta \ell_1(p)} - e^{-\beta \ell_2(p)}$ for all $p \in \Delta^2$.

Unfortunately, the condition that
$\partial_{\vone_n} \mathcal{B}_{\beta} \subseteq E_\beta (\ell(\mathcal{V}))$
is generally not satisfied;
an example based on squared loss ($\beta= 1$ and $n=3$ classes) is shown in Figure \ref{fig:hole_square},
where for $A$ and $B$ in $E_\beta (\ell(\mathcal{V}))$, the mid-point $C$ can travel along the ray of direction $\vone_3$ without hitting any point in the exp-prediction set $E_\beta(\ell(\mathcal{V}))$.
Therefore we resort to \emph{approximating} a given $\beta$-mixable loss by a sequence of $\beta$-exp-concave losses parameterised by positive constant $\epsilon$,
while the approximation approaches the original loss in some appropriate sense as $\epsilon$ tends to 0.
Without loss of generality, we assume $\Vcal = \Delta^n$.

Inspired by Proposition~\ref{geoprop}, a natural idea to construct the approximation is by adding ``faces" to the exp-prediction set such that all rays in the $\vone_n$ direction will be blocked.
Technically, it turns out more convenient to add faces that block rays in (almost) all directions of positive orthant.
See Figure \ref{fig:extend_3d} for an illustration.
In particular, we extend the ``rim" of the exp-prediction set by hyperplanes that are $\epsilon$ close to  axis-parallel.
The key challenge underlying this idea is to design an appropriate \emph{parameterisation} of the surrogate loss $\elltil_\epsilon$,
which not only produces such an extended exp-prediction set,
but also ensures that $\elltil_\epsilon(p) = \ell(p)$ for almost all $p \in \Delta^n$ as $\epsilon \downarrow 0$.

\begin{figure*}[t]
\begin{minipage}[b]{0.31\textwidth}
\includegraphics[width=1\textwidth,height=0.9\textwidth]{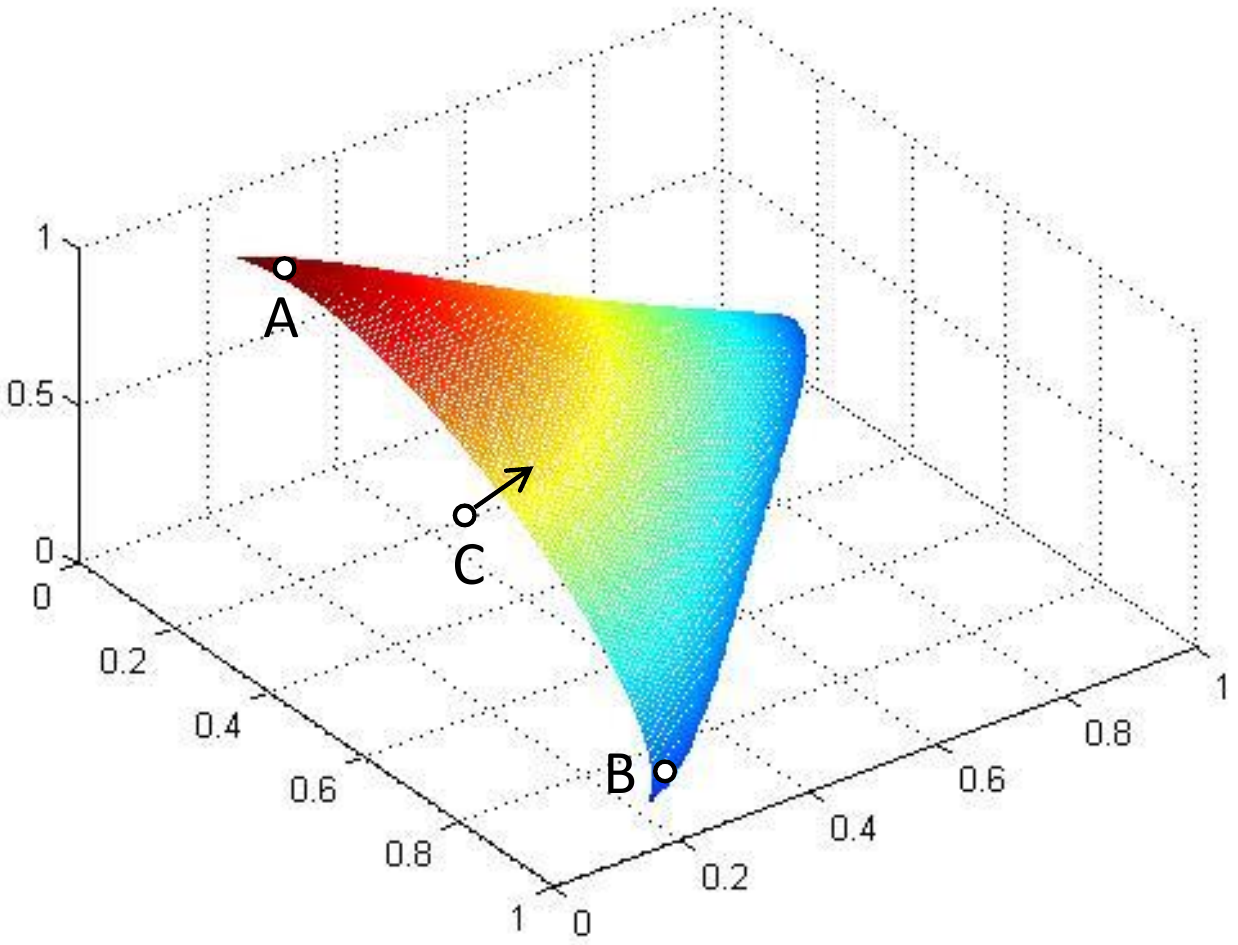}
  \caption{Ray ``escaping" in $\vone_n$ direction. More evidence in Figure \ref{fig:proj_square} in Appendix \ref{sec:square_loss}.}
  \label{fig:hole_square}
\label{fig:facade}
\end{minipage}
~
\begin{minipage}[b]{0.31\textwidth}
\includegraphics[width=1\textwidth,height=0.9\textwidth]{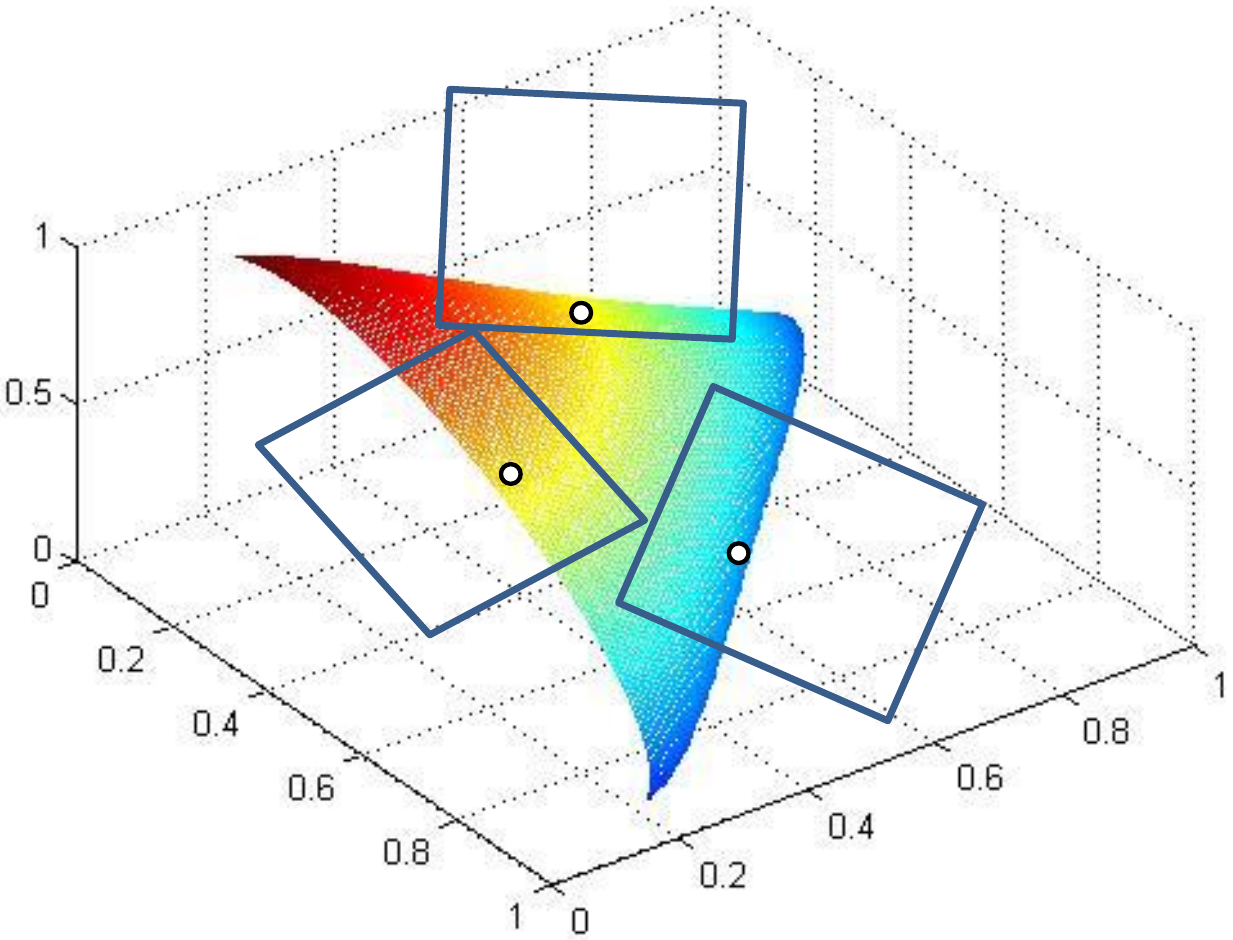}
  \caption{Adding ``faces" to block rays in (almost) all positive directions.}
  \label{fig:extend_3d}
\label{fig:facade}
\end{minipage}
~
\begin{minipage}[b]{0.34\textwidth}
\centering
\includegraphics[width=0.94\textwidth,height=0.84\textwidth]{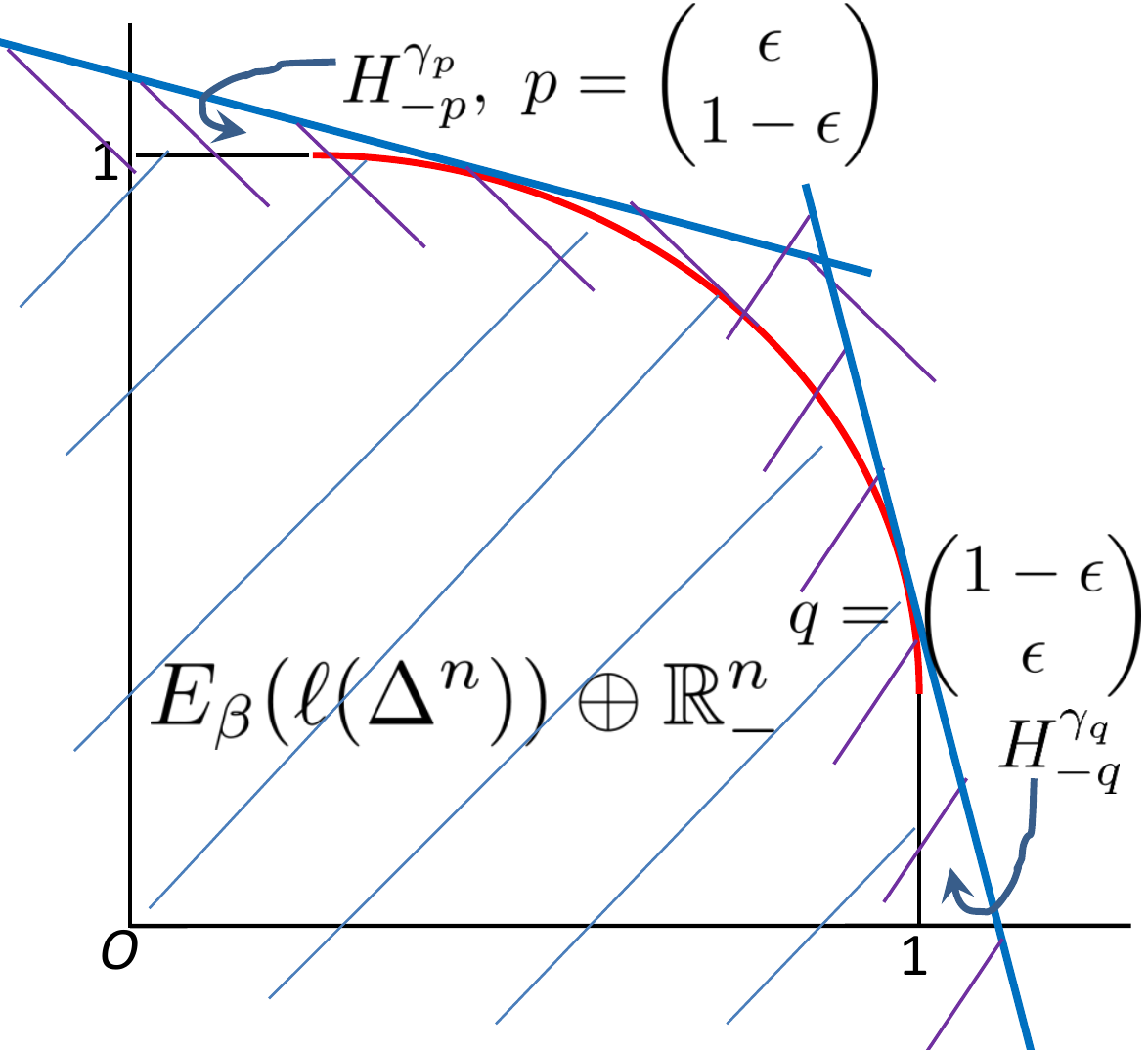}
  \caption{Sub-exp-prediction set extended by removing near axis-parallel supporting hyperplanes.}
  \label{fig:extend_2d}
\end{minipage}
\end{figure*}

Given a $\beta$-mixable loss $\ell$, its sub-exp-prediction set defined as follows must be convex:
\begin{align}
  T_\ell := E_\beta(\ell(\Delta^n))\varoplus\RR_-^n = \{ E_\beta(\ell(p)) - x : p \in \Delta^n, x \in \RR_+^n\}.
\end{align}
Note $T_\ell$ extends infinitely in any direction $p \notin \RR_+^n$.
Therefore it can be written in terms of supporting hyperplanes as
\begin{align}
\label{eq:supp_T_ell}
  T_\ell = \mathop{\bigcap}\limits_{p \in \Delta^n} H_{-p}^{\gamma_p}, \where \gamma_p = \min_{x \in T_\ell} x' \cdot (-p), \text{ and } H_{-p}^{\gamma_p} := \{x: x' \cdot (-p) \ge \gamma_p\}.
\end{align}
To extend the sub-exp-prediction set with ``faces", we remove some hyperplanes involved in \eqref{eq:supp_T_ell} that correspond to the $\epsilon$ ``rim" of the simplex (see Figure \ref{fig:extend_2d} for an illustration in 2-D)
\begin{align}
\label{eq:ext_T_ell}
  T_\ell^\epsilon := \mathop{\bigcap}\limits_{p \in \Delta^n_\epsilon} H_{-p}^{\gamma_p}, \where \Delta^n_\epsilon := \{ p \in \Delta^n: \min_i p_i > \epsilon\}.
\end{align}
Since $\epsilon > 0$, for any $p \in \Delta^n_\epsilon$,
$E_\beta^{-1} (H_{-p}^{\gamma_p} \cap \RR_+^n)$ is exactly the super-prediction set of a log-loss with appropriate scaling and shifting (see proof in Appendix \ref{sec:proof}).
So it must be convex.
Therefore $E_\beta^{-1}(T_\ell^\epsilon \cap \RR_+^n) = \mathop{\bigcap}_{p \in \Delta^n_\epsilon} E_\beta^{-1} (H_{-p}^{\gamma_p} \cap \RR_+^n)$ must be convex, and its recession cone is clearly $\RR_+^n$.
This guarantees that the following loss is proper over $p \in \Delta^n$ \citep[][Proposition 2]{williamson2014geometry}:
\begin{align}
\label{def:elltil}
  \elltil_\epsilon (p) = \argmin_{z \in E_\beta^{-1}(T_\ell^\epsilon \cap \RR_+^n)} p' \cdot z,
\end{align}
where the argmin must be attained uniquely (Appendix \ref{sec:proof}). Our next proposition states that $\elltil_\epsilon$ meets all the requirements of approximation suggested above.

\begin{proposition}
\label{exp_concave_approx}
For any $\epsilon > 0$, $\tilde{\ell}_\epsilon$ satisfies the condition $\partial_{\vone_n} \mathcal{B}_{\beta} \subseteq E_\beta (\ell(\mathcal{V}))$. In addition, $\elltil_\epsilon = \ell$ over a subset $S_\epsilon \subseteq \Delta^n$,
where for any $p$ in the relative interior of $\Delta^n$, 
$p \in S_\epsilon$ for sufficiently small $\epsilon$, i.e., $\lim_{\epsilon\downarrow 0}\mathrm{vol}(\Delta^n\setminus S_\epsilon)=0$.
\end{proposition}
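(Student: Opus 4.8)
The plan is to establish the two halves of the statement in turn. \emph{First, that the condition of Proposition~\ref{geoprop}, read with $\elltil_\epsilon$ in place of $\ell$ and $\mathcal{V}=\Delta^n$, holds.} I would begin by recording that $\elltil_\epsilon$ is itself $\beta$-mixable with a rigid exp-prediction set: by \citep[][Proposition 2]{williamson2014geometry} the set $C_\epsilon:=E_\beta^{-1}(T_\ell^\epsilon\cap\RR_+^n)$ is the super-prediction set of the proper loss $\elltil_\epsilon$, and as an intersection $\bigcap_{p\in\Delta^n_\epsilon}E_\beta^{-1}(H_{-p}^{\gamma_p}\cap\RR_+^n)$ of (scaled and shifted) log-loss super-prediction sets it is convex with recession cone $\RR_+^n$; hence $E_\beta(S_{\elltil_\epsilon})=T_\ell^\epsilon\cap\RR_+^n$ is convex. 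The crucial geometric fact is that $T_\ell^\epsilon$ is cut out by half-spaces whose normals $-p$ have \emph{all} coordinates strictly negative ($p\in\Delta^n_\epsilon\subseteq\RR_{++}^n$), so each obeys $(-p)'\cdot\vone_n=-1<0$; therefore, from any $g\in T_\ell^\epsilon\cap\RR_+^n$ the ray $g+c\,\vone_n$, $c\ge 0$, stays in $\RR_+^n$ but must leave $T_\ell^\epsilon$ at a unique finite $c^\star(g)$, exiting through a non-coordinate face (moving along $+\vone_n$ increases every coordinate and so cannot reach a face $\{w_i=0\}$). Next, rewriting $\elltil_\epsilon(q)=\argmin_{z\in C_\epsilon}q'z$ via the substitution $w=E_\beta(z)$ as $E_\beta(\elltil_\epsilon(q))=\argmax_{w\in T_\ell^\epsilon\cap\RR_+^n}\sum_i q_i\ln w_i$ and solving the first-order conditions, one checks that as $q$ ranges over $\Delta^n$ the maximiser ranges over exactly the union of the non-coordinate faces of $T_\ell^\epsilon\cap\RR_+^n$; that is, this ``north-east'' boundary equals $E_\beta(\elltil_\epsilon(\Delta^n))$. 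Combining: for $g\in\mathcal{B}_\beta:=\co E_\beta(\elltil_\epsilon(\Delta^n))\subseteq T_\ell^\epsilon\cap\RR_+^n$ the exit point $g+c^\star(g)\vone_n$ lies on this north-east boundary, hence in both $E_\beta(\elltil_\epsilon(\Delta^n))$ and $\mathcal{B}_\beta$; since $\mathcal{B}_\beta$ is convex and contains $g$ and the exit point, the whole segment between them lies in $\mathcal{B}_\beta$, so the escape distance $c(g)$ inside $\mathcal{B}_\beta$ equals $c^\star(g)$ and $g+c(g)\vone_n\in E_\beta(\elltil_\epsilon(\Delta^n))$; ranging over $g$ gives $\partial_{\vone_n}\mathcal{B}_\beta\subseteq E_\beta(\elltil_\epsilon(\Delta^n))$.

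\emph{Second, that $\elltil_\epsilon=\ell$ on a set $S_\epsilon$ with $\mathrm{vol}(\Delta^n\setminus S_\epsilon)\to 0$.} I would define $q^\star(p)\in\Delta^n$ by $q^\star_i(p)\propto p_i\,e^{\beta\ell_i(p)}$ and take $S_\epsilon:=\{p\in\Delta^n:\min_i q^\star_i(p)>\epsilon\}$. Since $\ell$ is proper and $\beta$-mixable, $S_\ell\subseteq C_0:=E_\beta^{-1}(T_\ell\cap\RR_+^n)\subseteq C_\epsilon$, so $\ell(p)\in C_\epsilon$ and $\min_{z\in C_\epsilon}p'z\le\min_{z\in S_\ell}p'z=\Lubar_\ell(p)$ for every $p$; it thus suffices to exhibit a surviving constraint of $C_\epsilon$ — an index $q\in\Delta^n_\epsilon$ — with $\min_{z\in E_\beta^{-1}(H_{-q}^{\gamma_q}\cap\RR_+^n)}p'z=\Lubar_\ell(p)$, because then $\min_{z\in C_\epsilon}p'z\ge\Lubar_\ell(p)$ as well, forcing equality and, by uniqueness of the argmin in \eqref{def:elltil}, $\elltil_\epsilon(p)=\ell(p)$. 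A Lagrangian computation gives $\min_{z\in E_\beta^{-1}(H_{-q}^{\gamma_q})}p'z=-\beta^{-1}\ln(-\gamma_q)-\beta^{-1}\KL(p\,\|\,q)$, attained uniquely; and differentiating $E_\beta$ shows $E_\beta(\ell(p))$ is exposed in the convex set $E_\beta(S_\ell)$ by the normal direction $q^\star(p)$ — equivalently $\argmax_{v\in\Delta^n}\sum_i q^\star_i(p)\,e^{-\beta\ell_i(v)}=p$ — so the constraint indexed by $q=q^\star(p)$ is tight at $\ell(p)$ with that minimiser equal to $\ell(p)$ and value $\Lubar_\ell(p)$, and the $\RR_+^n$ constraint is not binding there. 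Hence $q=q^\star(p)$ is the required index precisely when $q^\star(p)\in\Delta^n_\epsilon$, i.e.\ when $p\in S_\epsilon$. Finally, $\Delta^n\setminus S_\epsilon=\{p:\min_i q^\star_i(p)\le\epsilon\}$ is a decreasing family converging as $\epsilon\downarrow 0$ to $\{p:\min_i q^\star_i(p)=0\}$; since $p\mapsto q^\star(p)$ is continuous and maps $\ri\Delta^n$ into $\ri\Delta^n$, this limit is contained in $\bd\Delta^n$ and is $\mathrm{vol}$-null, so $\mathrm{vol}(\Delta^n\setminus S_\epsilon)\to 0$ by continuity of measure, while any fixed $p\in\ri\Delta^n$ lies in $S_\epsilon$ as soon as $\epsilon<\min_i q^\star_i(p)$.

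The step I expect to be the main obstacle is the convex-geometry bookkeeping around $C_\epsilon$ that underlies both halves: one must show that $\bd C_\epsilon$, being the envelope of a continuum of smooth strictly convex log-loss super-prediction sets indexed by $\Delta^n_\epsilon$, carries no spurious flat faces or corners on the relevant portion — this is what makes the argmin in \eqref{def:elltil} unique, makes $\elltil_\epsilon$ strictly proper and the link of Proposition~\ref{geoprop} invertible, and legitimises the ``unique exposed point / unique minimiser'' statements used above — together with carefully treating the coordinate-plane faces of $T_\ell^\epsilon\cap\RR_+^n$ and the null set $\bd\Delta^n$ in the edge cases. These are the details deferred to Appendix~\ref{sec:proof}.
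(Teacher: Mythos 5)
Your proposal is correct and follows essentially the same route as the paper's proof: both halves rest on the same two ideas, namely that the half-spaces defining $T_\ell^\epsilon$ have strictly negative normals (so rays in positive directions exit through a boundary point that is realised as $E_\beta(\elltil_\epsilon(q))$ via the log-loss structure of each supporting half-space), and that $\elltil_\epsilon(p)=\ell(p)$ precisely when the normal direction $q^\star_i(p)\propto p_i e^{\beta \ell_i(p)}$ lies in $\Delta^n_\epsilon$. Your version is somewhat more explicit in places (the KL-divergence computation for the single-half-space minimum, the $\RR_+^n$ membership of the exit point, the measure-theoretic limit), while the surjectivity onto the north-east boundary that you defer is exactly the step the paper handles via the supporting hyperplane of the associated shifted log loss.
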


Note $\|\elltil_\epsilon(p)-\ell(p)\|$ is not bounded for $p\notin S_\epsilon$. While the result does not show that \emph{all} $\beta$-mixable losses can be made $\beta$-exp-concave, it is suggestive that such a result may be obtainable by a different argument.


\subsection{Calculus approach}
\label{sec:calculus}
Proper composite losses are defined by the proper loss $\ell$ and the link $\psi$. In this section we will characterize the exp-concave proper composite losses in terms of $(\textsf{H}\Lubartil_{\ell}(\tilde{p}),\textsf{D}\tilde{\psi}(\tilde{p}))$. The following proposition provides the identities of the first and second derivatives of the proper composite losses (\cite{vernet2011composite}).
\begin{proposition}
\label{multiderivatives}
For all $i \in [n]$, $\tilde{p} \in \mathring{\tilde{\Delta}}^n$ (the interior of ${\tilde{\Delta}}^n$), and $v=\tilde{\psi}(\tilde{p}) \in \mathcal{V} \subseteq \mathbb{R}_+^{n-1}$ (so $\tilde{p}={\tilde{\psi}}^{-1}(v)$),
\begin{eqnarray}
\textnormal{\textsf{D}}\ell_i^{\psi}(v) &=& - (e_i^{n-1}-\tilde{p})' \cdot k(\tilde{p}), \\
\textnormal{\textsf{H}}\ell_i^{\psi}(v) &=& - \left( (e_i^{n-1}-\tilde{p})' \otimes I_{n-1} \right) \cdot \textnormal{\textsf{D}}_v[k(\tilde{p})] + k(\tilde{p})' \cdot [\textnormal{\textsf{D}}\tilde{\psi}(\tilde{p})]^{-1},
\end{eqnarray}
where
\begin{equation}
\label{kpeq}
k(\tilde{p}):=-\textnormal{\textsf{H}}\Lubartil_\ell(\tilde{p}) \cdot [\textnormal{\textsf{D}}\tilde{\psi}(\tilde{p})]^{-1}.
\end{equation}
\end{proposition}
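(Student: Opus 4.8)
The plan is to reduce the whole computation to one explicit representation of the partial losses in terms of the Bayes risk $\Lubartil_\ell$ coming from properness, and then differentiate it twice via the chain rule and the inverse function theorem. Write $\tilde{L}_\ell(\tilde{p},\tilde{q}) := L_\ell(\Pi_\Delta^{-1}\tilde{p},\Pi_\Delta^{-1}\tilde{q})$; this is affine in $\tilde{p}$ with $i$th partial derivative $\ell_i(\Pi_\Delta^{-1}\tilde{q}) - \ell_n(\Pi_\Delta^{-1}\tilde{q})$ for $i\in[n-1]$. Properness means $\tilde{q}\mapsto\tilde{L}_\ell(\tilde{p},\tilde{q})$ is minimized at $\tilde{q}=\tilde{p}$, and for $\tilde{p}\in\mathring{\tilde{\Delta}}^n$ this interior minimum yields the stationarity identity $\textsf{D}_{\tilde{q}}\tilde{L}_\ell(\tilde{p},\tilde{q})\big|_{\tilde{q}=\tilde{p}}=0$. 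Combined with the envelope relation $\textsf{D}\Lubartil_\ell(\tilde{p})=\textsf{D}_{\tilde{p}}\tilde{L}_\ell(\tilde{p},\tilde{q})\big|_{\tilde{q}=\tilde{p}}$ (obtained by differentiating $\Lubartil_\ell(\tilde{p})=\tilde{L}_\ell(\tilde{p},\tilde{p})$) and the value $\Lubartil_\ell(\tilde{p})=\tilde{L}_\ell(\tilde{p},\tilde{p})$ itself, one solves first for $\ell_n$ and then for each $\ell_i$, obtaining
$$\ell_i^\psi(v)\;=\;\ell_i(\Pi_\Delta^{-1}\tilde{p})\;=\;\Lubartil_\ell(\tilde{p})+(e_i^{n-1}-\tilde{p})'\cdot\textsf{D}\Lubartil_\ell(\tilde{p})',\qquad \tilde{p}=\tilde{\psi}^{-1}(v),$$
for all $i\in[n]$, using the paper's convention $e_n^{n-1}=0_{n-1}$.

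I would then differentiate this identity in $v$ using the inverse function theorem $\textsf{D}_v\tilde{p}=[\textsf{D}\tilde{\psi}(\tilde{p})]^{-1}$. Differentiating the right-hand side in $\tilde{p}$, the $\Lubartil_\ell$ term contributes $\textsf{D}\Lubartil_\ell(\tilde{p})$, while the bilinear term $(e_i^{n-1}-\tilde{p})'\textsf{D}\Lubartil_\ell(\tilde{p})'$ contributes $-\textsf{D}\Lubartil_\ell(\tilde{p})+(e_i^{n-1}-\tilde{p})'\textsf{H}\Lubartil_\ell(\tilde{p})$; the two copies of $\textsf{D}\Lubartil_\ell(\tilde{p})$ cancel, so $\textsf{D}_{\tilde{p}}[\ell_i(\Pi_\Delta^{-1}\tilde{p})]=(e_i^{n-1}-\tilde{p})'\textsf{H}\Lubartil_\ell(\tilde{p})$, hence $\textsf{D}\ell_i^\psi(v)=(e_i^{n-1}-\tilde{p})'\textsf{H}\Lubartil_\ell(\tilde{p})[\textsf{D}\tilde{\psi}(\tilde{p})]^{-1}=-(e_i^{n-1}-\tilde{p})'\cdot k(\tilde{p})$ by the definition \eqref{kpeq} of $k$; this also makes clear why \eqref{kpeq} is the natural normalization. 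For the Hessian I would differentiate $v\mapsto-(e_i^{n-1}-\tilde{p})'k(\tilde{p})$ once more with a product rule of the form $\textsf{D}_v[a(\tilde{p})'B(\tilde{p})]=(a(\tilde{p})'\otimes I_{n-1})\,\textsf{D}_v[B(\tilde{p})]+B(\tilde{p})'\,\textsf{D}_v[a(\tilde{p})]$ (in the Jacobian and Kronecker conventions of Appendix A of \cite{van2012mixability}), with $a=e_i^{n-1}-\tilde{p}$, $B=k(\tilde{p})$, $\textsf{D}_v[a]=-[\textsf{D}\tilde{\psi}(\tilde{p})]^{-1}$ and $\textsf{D}_v[k(\tilde{p})]=\textsf{D}_{\tilde{p}}[k(\tilde{p})][\textsf{D}\tilde{\psi}(\tilde{p})]^{-1}$: the first summand yields $-((e_i^{n-1}-\tilde{p})'\otimes I_{n-1})\,\textsf{D}_v[k(\tilde{p})]$ and the second yields $+k(\tilde{p})'[\textsf{D}\tilde{\psi}(\tilde{p})]^{-1}$, which is exactly the claimed $\textsf{H}\ell_i^\psi(v)$.

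All the conceptual content sits in the first step (the Savage-type representation of a proper loss and the envelope/stationarity argument); the rest is differentiation. I therefore expect the main obstacle to be entirely notational: fixing one consistent convention for $\textsf{D}$, $\textsf{H}$ and $\otimes$, and verifying that the row-vector times matrix product rule and the chain-rule identity $\textsf{D}_v[k(\tilde{p})]=\textsf{D}_{\tilde{p}}[k(\tilde{p})][\textsf{D}\tilde{\psi}(\tilde{p})]^{-1}$ hold verbatim in that convention, so that the two error-prone terms in the Hessian line up with the stated formula. Two minor points to state carefully: properness is used at interior points $\tilde{p}$ so that the first-order condition in the first step is legitimate, and $\textsf{D}\tilde{\psi}(\tilde{p})$ is invertible because $\tilde{\psi}$ is continuous and strictly monotone; differentiability of $\ell$, $\Lubartil_\ell$ and $\tilde{\psi}$ is assumed throughout the proposition.
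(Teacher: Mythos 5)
Your proof is correct. The paper itself gives no proof of Proposition~\ref{multiderivatives}: it imports the identities from \cite{vernet2011composite}, and your derivation is essentially the one in that source --- the Savage-type representation $\ell_i(\Pi_\Delta^{-1}\tilde{p})=\Lubartil_\ell(\tilde{p})+(e_i^{n-1}-\tilde{p})'\cdot\textnormal{\textsf{D}}\Lubartil_\ell(\tilde{p})'$ obtained from stationarity plus the envelope identity, followed by two applications of the chain rule with $\textnormal{\textsf{D}}_v\tilde{p}=[\textnormal{\textsf{D}}\tilde{\psi}(\tilde{p})]^{-1}$. The cancellation of the two $\textnormal{\textsf{D}}\Lubartil_\ell$ terms in the first derivative and the two-term product rule for the Hessian both check out (and are consistent with the paper's binary specializations \eqref{binaryfirstder}--\eqref{binarysecder2}); the only residual care needed is, as you note, pinning down the vectorization convention behind $\textnormal{\textsf{D}}_v[k(\tilde{p})]$ and the orientation of the Kronecker factor, which is a matter of matching Appendix~A of \cite{van2012mixability} rather than of substance.
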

The term $k(\tilde{p})$ can be interpreted as the curvature of the Bayes risk function $\Lubartil_\ell$ relative to the rate of change of the link function $\tilde{\psi}$. In the binary case where $n=2$, above proposition reduces to
\begin{eqnarray}
(\ell_1^{\psi})'(v) &=& -(1-\tilde{p}) k(\tilde{p}) \quad ; \quad (\ell_2^{\psi})'(v) = \tilde{p} k(\tilde{p}) , \label{binaryfirstder} \\
(\ell_1^{\psi})''(v) &=& \frac{-(1-\tilde{p}) k'(\tilde{p}) + k(\tilde{p})}{\tilde{\psi}'(\tilde{p})} , \label{binarysecder1}\\
(\ell_2^{\psi})''(v) &=& \frac{\tilde{p} k'(\tilde{p}) + k(\tilde{p})}{\tilde{\psi}'(\tilde{p})} , \label{binarysecder2}
\end{eqnarray}
where $k(\tilde{p})=\frac{-{\Lubartil_\ell}''(\tilde{p})}{{\tilde{\psi}}'(\tilde{p})} \geq 0$ and so $\frac{d}{dv}k(\tilde{p})=\frac{d}{d\tilde{p}}k(\tilde{p}) \cdot \frac{d}{dv}\tilde{p}=\frac{k'(\tilde{p})}{{\tilde{\psi}}'(\tilde{p})}$.


A loss $\ell:\Delta^n \rightarrow \mathbb{R}_+^n$ is $\alpha$-exp-concave (i.e. $\Delta^n \ni q \mapsto \ell_y(q)$ is $\alpha$-exp-concave for all $y \in [n]$) if and only if the map $\Delta^n \ni q \mapsto L_{\ell}(p,q)=p' \cdot \ell(q)$ is $\alpha$-exp-concave for all $p \in \Delta^n$. It can be easily shown that the maps $v \mapsto \ell_y^\psi (v)$ are $\alpha$-exp-concave if and only if $\textsf{H}\ell_y^\psi (v) \succcurlyeq \alpha \textsf{D}\ell_y^\psi(v)' \cdot \textsf{D}\ell_y^\psi(v)$. By applying Proposition \ref{multiderivatives} we obtain the following characterization of the $\alpha$-exp-concavity of the composite loss $\ell^\psi$.
\begin{proposition}
\label{multiexpprop}
A proper composite loss $\ell^\psi=\ell \circ \psi^{-1}$ is $\alpha$-exp-concave (with $\alpha > 0$ and $v=\tilde{\psi}(\tilde{p})$) if and only if for all $\tilde{p} \in \mathring{\tilde{\Delta}}^n$ and for all $i \in [n]$
\begin{equation}
\left( (e_i^{n-1}-\tilde{p})' \otimes I_{n-1} \right) \cdot \textnormal{\textsf{D}}_v[k(\tilde{p})] \preccurlyeq k(\tilde{p})' \cdot [\textnormal{\textsf{D}}\tilde{\psi}(\tilde{p})]^{-1} - \alpha k(\tilde{p})' \cdot (e_i^{n-1}-\tilde{p}) \cdot (e_i^{n-1}-\tilde{p})' \cdot k(\tilde{p}). \label{multiexpcondition}
\end{equation}
\end{proposition}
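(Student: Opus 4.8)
The plan is to reduce the $\alpha$-exp-concavity of the vector-valued composite loss $\ell^\psi$ to a pointwise matrix inequality on each partial loss, and then simply substitute the first- and second-derivative identities supplied by Proposition~\ref{multiderivatives}.

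First I would invoke the characterisation of exp-concavity recalled just before the statement: for a twice-differentiable scalar function $f$, the map $e^{-\alpha f}$ is concave if and only if $\textsf{H}f \succcurlyeq \alpha\,\textsf{D}f' \cdot \textsf{D}f$, which follows from the identity $\textsf{H}(e^{-\alpha f}) = -\alpha e^{-\alpha f}\bigl(\textsf{H}f - \alpha\,\textsf{D}f' \cdot \textsf{D}f\bigr)$ together with $-\alpha e^{-\alpha f} < 0$. Applying this with $f = \ell_i^\psi$, and using that by definition $\ell^\psi$ is $\alpha$-exp-concave precisely when every partial loss $\ell_i^\psi$ is, the assertion to be proved becomes: for all $i \in [n]$ and all $v$ in the open image $\tilde\psi(\mathring{\tilde\Delta}^n)$, $\textsf{H}\ell_i^\psi(v) \succcurlyeq \alpha\,\textsf{D}\ell_i^\psi(v)' \cdot \textsf{D}\ell_i^\psi(v)$. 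Verifying this only on the interior suffices: $e^{-\alpha\ell_i^\psi}$ is continuous on the convex set $\mathcal V$ and $\tilde\psi(\mathring{\tilde\Delta}^n)$ is a dense open subset of it, so concavity there extends to all of $\mathcal V$.

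It remains to substitute. From $\textsf{D}\ell_i^\psi(v) = -(e_i^{n-1}-\tilde p)' \cdot k(\tilde p)$ the rank-one term is $\textsf{D}\ell_i^\psi(v)' \cdot \textsf{D}\ell_i^\psi(v) = k(\tilde p)' \cdot (e_i^{n-1}-\tilde p) \cdot (e_i^{n-1}-\tilde p)' \cdot k(\tilde p)$; inserting this and $\textsf{H}\ell_i^\psi(v) = -\bigl((e_i^{n-1}-\tilde p)' \otimes I_{n-1}\bigr) \cdot \textsf{D}_v[k(\tilde p)] + k(\tilde p)' \cdot [\textsf{D}\tilde\psi(\tilde p)]^{-1}$ into the matrix inequality and rearranging so that the $\textsf{D}_v[k(\tilde p)]$ term stands alone on the left yields exactly \eqref{multiexpcondition}. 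Since each step above is an equivalence, the ``if and only if'' follows at once.

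I do not expect a genuine obstacle here. The substance of the result is already packed into Proposition~\ref{multiderivatives}; what remains is routine matrix bookkeeping with Kronecker products, plus the elementary facts that $\textsf{D}\tilde\psi(\tilde p)$ is invertible by hypothesis (so that $k(\tilde p)$ and all displayed quantities make sense) and that exp-concavity, being a closed condition, may be checked on the interior. The only point needing slight care is making that last density argument precise, which is standard.
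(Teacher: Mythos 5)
Your proposal is correct and follows exactly the route the paper takes (the paper gives only the two-sentence sketch preceding the proposition: reduce exp-concavity of each partial loss to $\textsf{H}\ell_i^\psi(v) \succcurlyeq \alpha\,\textsf{D}\ell_i^\psi(v)'\cdot\textsf{D}\ell_i^\psi(v)$, then substitute the derivative identities of Proposition~\ref{multiderivatives} and rearrange). Your additional remarks on the sign of $-\alpha e^{-\alpha f}$ and on checking the condition only on $\mathring{\tilde{\Delta}}^n$ merely make explicit what the paper leaves implicit.
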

Based on this characterization, we can determine which loss functions can be exp-concavified by a chosen link function and how much a link function can exp-concavify a given loss function. In the binary case ($n=2$), the above proposition reduces to the following.
\begin{proposition}
\label{complexversion}
Let $\tilde{\psi}:[0,1]\rightarrow \mathcal{V} \subseteq \mathbb{R}$ be an invertible link and $\ell:\Delta^2 \rightarrow \mathbb{R}_+^2$ be a strictly proper binary loss with weight function $w(\tilde{p}):=-\textnormal{\textsf{H}}\Lubartil_\ell(\tilde{p})=-{\Lubartil_\ell}''(\tilde{p})$. Then the binary composite loss $\ell^\psi := \ell \circ \Pi_\Delta^{-1} \circ \tilde{\psi}^{-1}$ is $\alpha$-exp-concave (with $\alpha > 0$) if and only if
\begin{equation}
\label{maineq}
-\frac{1}{\tilde{p}} + \alpha w(\tilde{p}) \tilde{p} \enspace \leq \enspace \frac{w'(\tilde{p})}{w(\tilde{p})} - \frac{{\tilde{\psi}}''(\tilde{p})}{{\tilde{\psi}}'(\tilde{p})} \enspace \leq \enspace \frac{1}{1-\tilde{p}} - \alpha w(\tilde{p}) (1-\tilde{p}), \quad \forall \tilde{p} \in (0,1).
\end{equation}
\end{proposition}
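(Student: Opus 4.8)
The plan is to specialise Proposition~\ref{multiexpprop} to $n = 2$ by plugging in the binary derivative identities \eqref{binaryfirstder}--\eqref{binarysecder2} and rearranging; since every manipulation will be an equivalence, the two-sided inequality \eqref{maineq} will emerge as an exact restatement of $\alpha$-exp-concavity. Concretely, $\ell^{\psi}$ is $\alpha$-exp-concave iff each partial loss $v \mapsto \ell_y^{\psi}(v)$, $y \in \{1,2\}$, is $\alpha$-exp-concave on the interval $\mathcal{V} \subseteq \RR$, and a twice-differentiable scalar $f$ on an interval is $\alpha$-exp-concave iff $f''(v) \geq \alpha\,(f'(v))^2$ --- because $(e^{-\alpha f})'' = \alpha\, e^{-\alpha f}\big(\alpha (f'(v))^2 - f''(v)\big)$ and $\alpha,\, e^{-\alpha f} > 0$. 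So, fixing $\tilde{p} \in (0,1)$ and writing $v = \tilde{\psi}(\tilde{p})$, the claim is that
\begin{align}
  (\ell_1^{\psi})''(v) \;\geq\; \alpha\big((\ell_1^{\psi})'(v)\big)^2
  \quad\text{and}\quad
  (\ell_2^{\psi})''(v) \;\geq\; \alpha\big((\ell_2^{\psi})'(v)\big)^2
\end{align}
is equivalent to \eqref{maineq}.

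First I would record the sign facts the algebra rests on. Strict properness makes ${\Lubartil_\ell}$ strictly concave, so $w(\tilde{p}) = -{\Lubartil_\ell}''(\tilde{p}) > 0$ on $(0,1)$; continuity and invertibility make $\tilde{\psi}$ strictly monotone, and --- replacing $\tilde{\psi}$ by $-\tilde{\psi}$ if necessary, which changes neither $\alpha$-exp-concavity (it only reparameterises $v$ affinely) nor the two sides of \eqref{maineq} (it leaves $w$ and $\frac{{\tilde{\psi}}''}{{\tilde{\psi}}'}$ invariant) --- we may take ${\tilde{\psi}}'(\tilde{p}) > 0$, whence $k(\tilde{p}) = w(\tilde{p})/{\tilde{\psi}}'(\tilde{p}) > 0$, and of course $\tilde{p}, 1 - \tilde{p} > 0$. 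Substituting \eqref{binaryfirstder}--\eqref{binarysecder2}, the two inequalities above read $\frac{-(1-\tilde{p})k'(\tilde{p}) + k(\tilde{p})}{{\tilde{\psi}}'(\tilde{p})} \geq \alpha (1-\tilde{p})^2 k(\tilde{p})^2$ and $\frac{\tilde{p}\,k'(\tilde{p}) + k(\tilde{p})}{{\tilde{\psi}}'(\tilde{p})} \geq \alpha\,\tilde{p}^2 k(\tilde{p})^2$. Dividing each by $k(\tilde{p}) > 0$, multiplying by ${\tilde{\psi}}'(\tilde{p}) > 0$, and using $k(\tilde{p}){\tilde{\psi}}'(\tilde{p}) = w(\tilde{p})$, they become $1 - (1-\tilde{p})\frac{k'(\tilde{p})}{k(\tilde{p})} \geq \alpha(1-\tilde{p})^2 w(\tilde{p})$ and $1 + \tilde{p}\,\frac{k'(\tilde{p})}{k(\tilde{p})} \geq \alpha\,\tilde{p}^2 w(\tilde{p})$; dividing the first by $1 - \tilde{p} > 0$ and the second by $\tilde{p} > 0$ and isolating $\frac{k'(\tilde{p})}{k(\tilde{p})}$ sandwiches it exactly between $-\frac{1}{\tilde{p}} + \alpha w(\tilde{p})\tilde{p}$ and $\frac{1}{1-\tilde{p}} - \alpha w(\tilde{p})(1-\tilde{p})$.

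Finally I would eliminate $k$ via the logarithmic-derivative identity $\frac{k'(\tilde{p})}{k(\tilde{p})} = \frac{w'(\tilde{p})}{w(\tilde{p})} - \frac{{\tilde{\psi}}''(\tilde{p})}{{\tilde{\psi}}'(\tilde{p})}$, which is just $\tfrac{d}{d\tilde{p}}\ln k = \tfrac{d}{d\tilde{p}}\ln w - \tfrac{d}{d\tilde{p}}\ln{\tilde{\psi}}'$ applied to $k = w/{\tilde{\psi}}'$. Substituting gives precisely \eqref{maineq}, and because each step was an equivalence (all divisions were by strictly positive quantities), this yields the stated ``if and only if''. I expect the only point needing care to be the sign bookkeeping --- getting $w > 0$, ${\tilde{\psi}}' > 0$, $k > 0$ right --- since any sign slip would flip an inequality and swap the two bounds in \eqref{maineq}; everything else is a routine rearrangement. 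I would also note the implicit regularity hypothesis that $w$ be differentiable (equivalently ${\Lubartil_\ell}$ three times differentiable), which is needed only for $w'$ in \eqref{maineq} to be meaningful.
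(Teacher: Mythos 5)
Your proof is correct and follows essentially the same route as the paper's: specialise the second-order exp-concavity criterion to $n=2$ via the binary derivative identities \eqref{binaryfirstder}--\eqref{binarysecder2}, divide through by the positive quantities $k(\tilde{p})$, $\tilde{\psi}'(\tilde{p})$, $\tilde{p}$ and $1-\tilde{p}$ to sandwich $k'/k$, and then eliminate $k$ via $k = w/\tilde{\psi}'$. The additional care you take over the sign of $\tilde{\psi}'$ (reducing to $\tilde{\psi}'>0$) and the implicit differentiability of $w$ is welcome but does not change the argument.
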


The following proposition gives an easier to check necessary condition for the binary proper losses that generate an $\alpha$-exp-concave (with $\alpha > 0$) binary composite loss given a particular link function. Since scaling a loss function will not affect what a sensible learning algorithm will do, it is possible to normalize the loss functions by normalizing their weight functions by setting $w(\frac{1}{2}) = 1$. By this normalization we are scaling the original loss function by $\frac{1}{w(\frac{1}{2})}$ and the super-prediction set is scaled by the same factor. If the original loss function is $\beta$-mixable (resp. $\alpha$-exp-concave), then the normalized loss function is $\beta w(\frac{1}{2})$-mixable (resp. $\alpha w(\frac{1}{2})$-exp-concave).
\begin{proposition}
\label{simplerversion}
Let $\tilde{\psi}:[0,1]\rightarrow \mathcal{V} \subseteq \mathbb{R}$ be an invertible link and $\ell:\Delta^2 \rightarrow \mathbb{R}_+^2$ be a strictly proper binary loss with weight function $w(\tilde{p}):=-\textnormal{\textsf{H}}\Lubartil_\ell(\tilde{p})=-{\Lubartil_\ell}''(\tilde{p})$ normalised such that $w(\frac{1}{2}) = 1$. Then the binary composite loss $\ell^\psi := \ell \circ \Pi_\Delta^{-1} \circ \tilde{\psi}^{-1}$ is $\alpha$-exp-concave (with $\alpha > 0$) only if
\begin{equation}
\label{compsimple}
\frac{{\tilde{\psi}}'(\tilde{p})}{\tilde{p} (2{\tilde{\psi}}'(\frac{1}{2}) - \alpha ({\tilde{\psi}}(\tilde{p}) - {\tilde{\psi}}(\frac{1}{2})))} \lesseqgtr w(\tilde{p}) \lesseqgtr \frac{{\tilde{\psi}}'(\tilde{p})}{(1 - \tilde{p}) (2{\tilde{\psi}}'(\frac{1}{2}) + \alpha ({\tilde{\psi}}(\tilde{p}) - {\tilde{\psi}}(\frac{1}{2})))}, \quad \forall \tilde{p} \in (0,1),
\end{equation}
where $\lesseqgtr$ denotes $\leq$ for $\tilde{p} \geq \frac{1}{2}$ and denotes $\geq$ for $\tilde{p} \leq \frac{1}{2}$.
\end{proposition}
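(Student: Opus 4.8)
The plan is to derive \eqref{compsimple} by integrating the exact characterization \eqref{maineq} of Proposition~\ref{complexversion}, once one recognises the middle expression there as a logarithmic derivative. Recall from the binary specialisation in Section~\ref{sec:calculus} that $k(\tilde p) = -\Lubartil_\ell''(\tilde p)/\tilde\psi'(\tilde p) = w(\tilde p)/\tilde\psi'(\tilde p)$, so $\frac{w'(\tilde p)}{w(\tilde p)} - \frac{\tilde\psi''(\tilde p)}{\tilde\psi'(\tilde p)} = \frac{d}{d\tilde p}\ln k(\tilde p) = k'(\tilde p)/k(\tilde p)$. Since $\ell$ is strictly proper, $\Lubartil_\ell$ is strictly concave and $w > 0$; and we may assume without loss of generality that $\tilde\psi$ is increasing (replacing $\tilde\psi$ by $-\tilde\psi$ composes the loss with the affine map $v\mapsto -v$, which preserves $\alpha$-exp-concavity, and leaves \eqref{compsimple} unchanged), so $\tilde\psi' > 0$ and $k > 0$ on $(0,1)$. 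Writing $w = k\tilde\psi'$, the two outer bounds of \eqref{maineq} become scalar differential inequalities in $k$ alone; they cannot be integrated directly because the bounds still involve $k$, so the key device is the reciprocal (integrating-factor) substitution.

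For the right half of \eqref{maineq}, $\frac{k'}{k} \le \frac{1}{1-\tilde p} - \alpha\tilde\psi'\, k(1-\tilde p)$, I would absorb the $(1-\tilde p)^{-1}$ term into the left side to obtain $\frac{(k(1-\tilde p))'}{k(1-\tilde p)} \le -\alpha\tilde\psi'\, k(1-\tilde p)$, divide by $k(1-\tilde p) > 0$, and read the left side as $-\frac{d}{d\tilde p}\bigl[\tfrac{1}{k(\tilde p)(1-\tilde p)}\bigr]$; this gives $\frac{d}{d\tilde p}\bigl[\tfrac{1}{k(\tilde p)(1-\tilde p)}\bigr] \ge \alpha\tilde\psi'(\tilde p)$. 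Integrating between $\tfrac12$ and $\tilde p$ and using the normalisation $w(\tfrac12)=1$ (so $k(\tfrac12) = 1/\tilde\psi'(\tfrac12)$ and $\tfrac{1}{k(1/2)(1-1/2)} = 2\tilde\psi'(\tfrac12)$) yields $\tfrac{1}{k(\tilde p)(1-\tilde p)} \ge 2\tilde\psi'(\tfrac12) + \alpha(\tilde\psi(\tilde p) - \tilde\psi(\tfrac12))$ for $\tilde p\ge\tfrac12$, with the inequality reversed for $\tilde p\le\tfrac12$ because $\int_{1/2}^{\tilde p}\tilde\psi'$ flips sign. Since $\tilde\psi$ is increasing the right-hand side is positive in both regimes (for $\tilde p\le\tfrac12$ this is forced, as the left side is positive), so inverting and multiplying by $\tilde\psi'(\tilde p) > 0$ produces exactly the $w$-upper bound of \eqref{compsimple} for $\tilde p\ge\tfrac12$ and its reversal for $\tilde p\le\tfrac12$.

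The left half of \eqref{maineq}, $\frac{k'}{k} \ge -\tfrac1{\tilde p} + \alpha\tilde\psi'\, k\tilde p$, is treated symmetrically: it rewrites as $\frac{(k\tilde p)'}{k\tilde p} \ge \alpha\tilde\psi'\, k\tilde p$, hence $\frac{d}{d\tilde p}\bigl[\tfrac{1}{k(\tilde p)\tilde p}\bigr] \le -\alpha\tilde\psi'(\tilde p)$; integrating from $\tfrac12$ with $\tfrac{1}{k(1/2)\cdot(1/2)} = 2\tilde\psi'(\tfrac12)$ gives $\tfrac{1}{k(\tilde p)\tilde p} \le 2\tilde\psi'(\tfrac12) - \alpha(\tilde\psi(\tilde p) - \tilde\psi(\tfrac12))$ for $\tilde p\ge\tfrac12$ (reversed for $\tilde p\le\tfrac12$). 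After checking that the right-hand side is positive wherever needed (positive for $\tilde p\le\tfrac12$ since $\tilde\psi$ is increasing; positive for $\tilde p\ge\tfrac12$ because the positive left side cannot otherwise lie below it), inverting and multiplying by $\tilde\psi'(\tilde p)$ gives the $w$-lower bound of \eqref{compsimple}. Collecting the two bounds, with the direction of $\lesseqgtr$ dictated by whether $\tilde p\gtrless\tfrac12$, proves the proposition.

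I expect the genuinely non-routine step to be the initial recognition that the middle term of \eqref{maineq} is $\frac{d}{d\tilde p}\ln k$, together with the reciprocal substitution that linearises the otherwise self-referential bounds; once that is in place, each side reduces to a one-line integration. The remaining effort is bookkeeping: tracking which way each inequality points on either side of $\tilde p=\tfrac12$, and confirming that the affine-in-$\tilde\psi$ denominators $2\tilde\psi'(\tfrac12)\pm\alpha(\tilde\psi(\tilde p)-\tilde\psi(\tfrac12))$ stay positive wherever the bound is non-vacuous, so that clearing them does not reverse the inequality.
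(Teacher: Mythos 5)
Your proof is correct and is essentially the paper's own argument: both reduce the exact characterization \eqref{maineq} to the differential inequalities $\frac{d}{d\tilde p}\bigl[\tfrac{\tilde\psi'(\tilde p)}{w(\tilde p)\tilde p}\bigr]\le -\alpha\tilde\psi'(\tilde p)$ and $\frac{d}{d\tilde p}\bigl[\tfrac{\tilde\psi'(\tilde p)}{w(\tilde p)(1-\tilde p)}\bigr]\ge \alpha\tilde\psi'(\tilde p)$ and integrate from $\tfrac12$, your ``reciprocal of $k\tilde p$'' substitution being the same computation the paper performs via the substitution $g=1/w$ followed by the integrating factor $e^{-\int\Phi_{\tilde\psi}}=\tilde\psi'(\tilde p)/\tilde\psi'(0)$. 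Your version is in fact slightly more careful than the paper's about the sign of $\tilde\psi'$ and the positivity of the affine denominators before inverting.
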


Proposition~\ref{complexversion} provides necessary \textit{and} sufficient conditions for the exp-concavity of binary composite losses, whereas Proposition~\ref{simplerversion} provides simple necessary \textit{but not} sufficient conditions. By setting $\alpha = 0$ in all the above results we have obtained for exp-concavity, we recover the convexity conditions for proper and composite losses which are already derived by \cite{reid2010composite} for the binary case and \cite{vernet2011composite} for multi-class.

\subsection{Link functions}
\label{sec:link}

\begin{figure*}[t]
\begin{minipage}[b]{0.48\textwidth}
\includegraphics[width=1\textwidth]{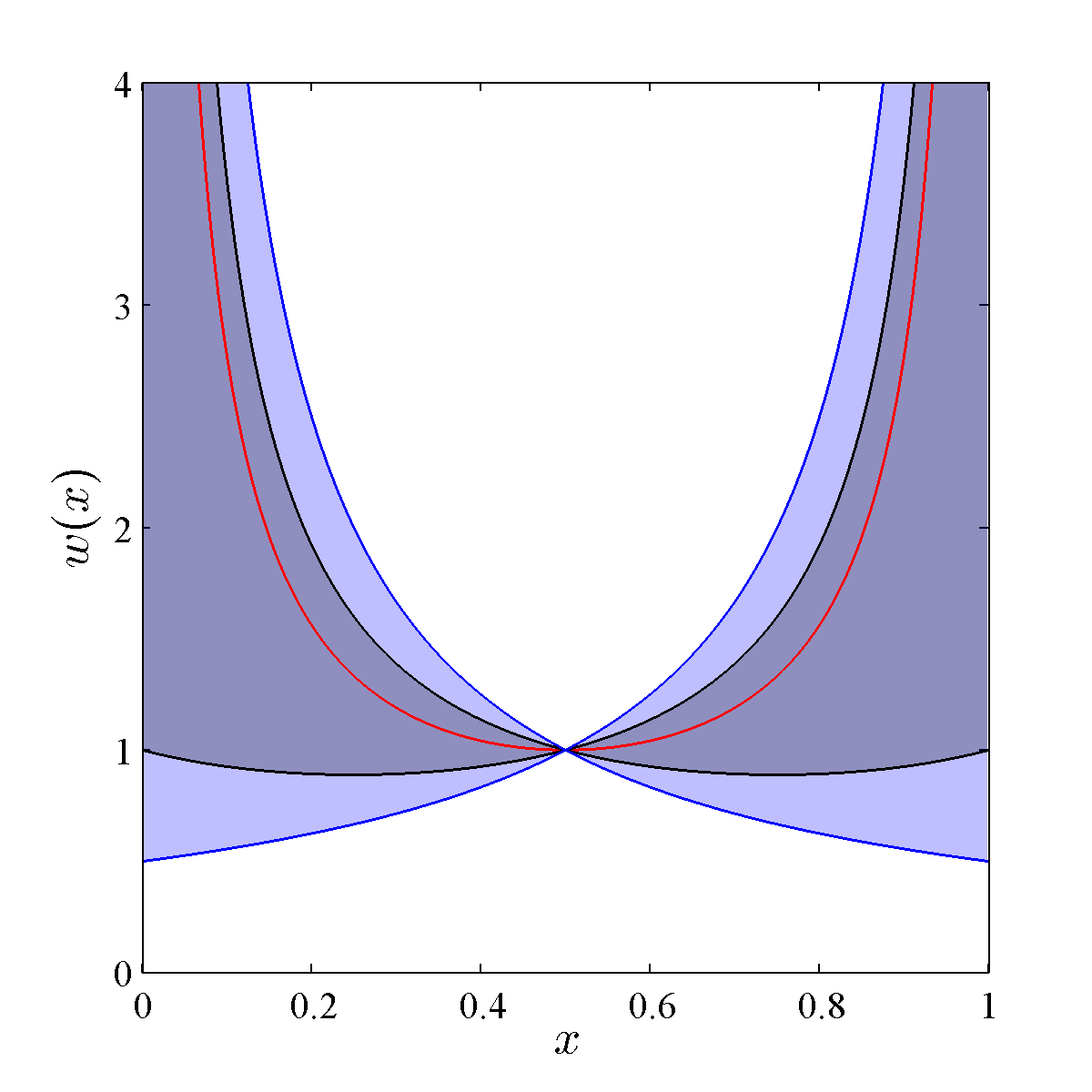}
  \caption{Necessary but not sufficient region of normalised weight functions to ensure $\alpha$-exp-concavity and convexity of proper losses (\textcolor{red}{---} $\alpha = 4$; \textcolor{black}{---} $\alpha = 2$; \textcolor{blue}{---} convexity).}
  \label{fig:identity}
\end{minipage}
~
\begin{minipage}[b]{0.48\textwidth}
\includegraphics[width=1\textwidth]{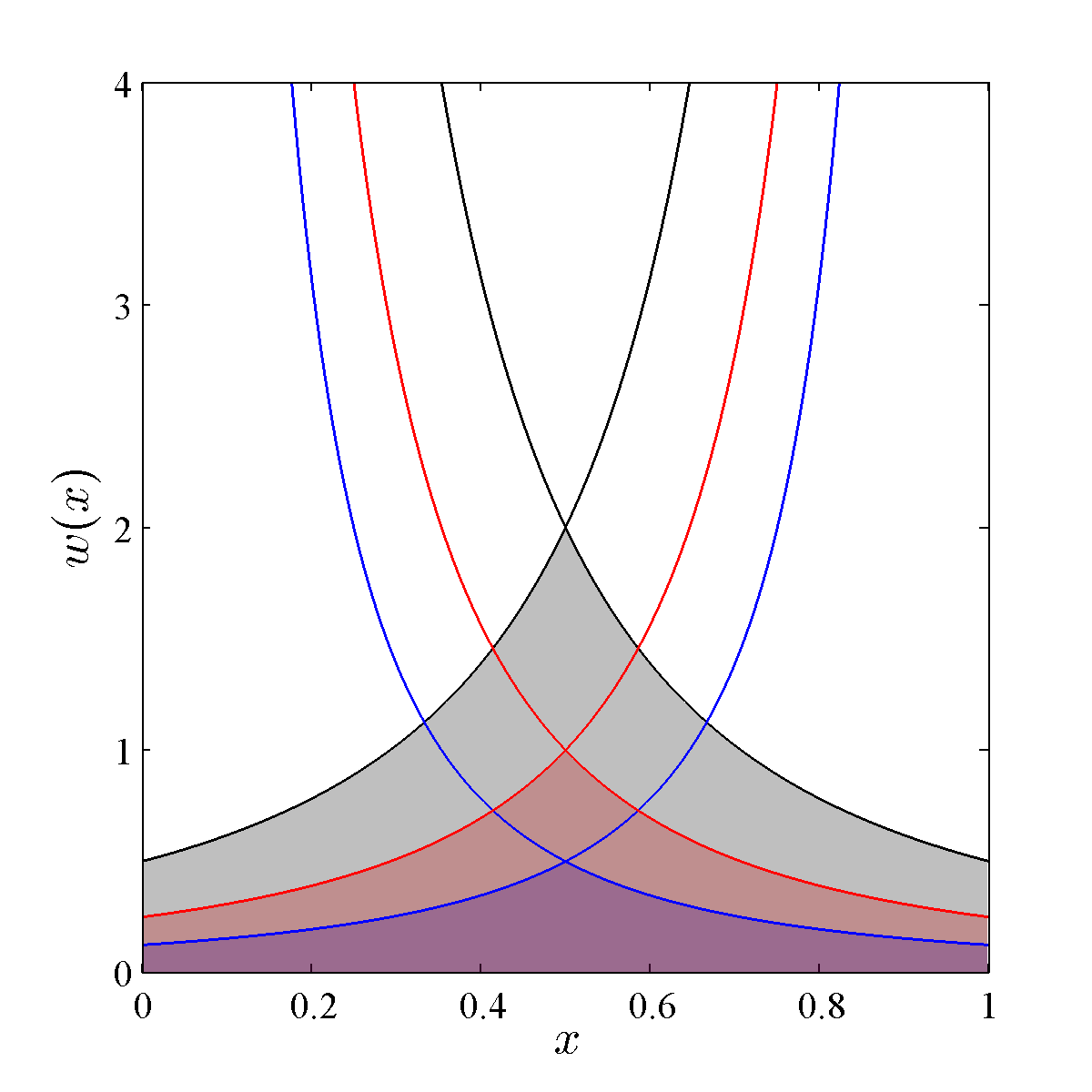}
  \caption{Necessary and sufficient region of unnormalised weight functions to ensure $\alpha$-exp-concavity of composite losses with canonical link (\textcolor{black}{---} $\alpha = 2$; \textcolor{red}{---} $\alpha = 4$; \textcolor{blue}{---} $\alpha = 8$).}
  \label{fig:can}
\end{minipage}
\end{figure*}

%
A proper loss can be exp-concavified ($\alpha > 0$) by some link function only if the loss is mixable ($\beta_{\ell} > 0$) and the maximum possible value for exp-concavity constant is the mixability constant of the loss (since the link function won't change the super-prediction set and an $\alpha$-exp-concave loss is always $\beta$-mixable for some $\beta \geq \alpha$).

By applying the \textit{identity link} ${\tilde{\psi}}(\tilde{p})=\tilde{p}$ in \eqref{maineq} we obtain the necessary and sufficient conditions for a binary proper loss to be $\alpha$-exp-concave (with $\alpha > 0$) as given by,
\begin{equation}
\label{identitynscond}
-\frac{1}{\tilde{p}} + \alpha w(\tilde{p}) \tilde{p} \enspace \leq \enspace \frac{w'(\tilde{p})}{w(\tilde{p})} \enspace \leq \enspace \frac{1}{1-\tilde{p}} - \alpha w(\tilde{p}) (1-\tilde{p}), \quad \forall \tilde{p} \in (0,1).
\end{equation}
By substituting ${\tilde{\psi}}(\tilde{p}) = \tilde{p}$ in \eqref{compsimple} we obtain the following necessary but not sufficient (simpler) constraints for a normalized binary proper loss to be $\alpha$-exp-concave
\begin{equation}
\label{propersimple}
\frac{1}{\tilde{p} (2 - \alpha (\tilde{p} - \frac{1}{2}))} \enspace \lesseqgtr \enspace w(\tilde{p}) \enspace \lesseqgtr \enspace \frac{1}{(1 - \tilde{p}) (2 + \alpha (\tilde{p} - \frac{1}{2}))}, \quad \forall \tilde{p} \in (0,1),
\end{equation}
which are illustrated as the shaded region in Figure~\ref{fig:identity} for different values of $\alpha$. Observe that normalized proper losses can be $\alpha$-exp-concave only for $0 < \alpha \leq 4$. When $\alpha = 4$, only the normalized weight function of log loss ($w_{\ell^{\mathrm{log}}}(\tilde{p})=\frac{1}{4\tilde{p}(1-\tilde{p})}$) will satisfy \eqref{propersimple}, and when $\alpha > 4$, the allowable (necessary) $w(\tilde{p})$ region to ensure $\alpha$-exp-concavity vanishes. Thus normalized log loss is the most exp-concave normalized proper loss. Observe (from Figure~\ref{fig:identity}) that normalized square loss ($w_{\ell^{\mathrm{sq}}}(\tilde{p})=1$) is at most 2-exp-concave. Further from \eqref{propersimple}, if $\alpha' > \alpha$, then the allowable $w(\tilde{p})$ region to ensure $\alpha'$-exp-concavity will be within the region for $\alpha$-exp-concavity, and also any allowable $w(\tilde{p})$ region to ensure $\alpha$-exp-concavity will be within the region for convexity, which is obtained by setting $\alpha = 0$ in \eqref{propersimple}. Here we recall the fact that, if the normalized loss function is $\alpha$-exp-concave, then the original loss function is $\frac{\alpha}{w(\frac{1}{2})}$-exp-concave. The following theorem provides \textit{sufficient} conditions for the exp-concavity of binary proper losses.

\begin{theorem}
\label{propexpsuff}
A binary proper loss $\ell:\Delta^2 \rightarrow \mathbb{R}_+^2$ with the weight function $w(\tilde{p})=-{\Lubartil_\ell}''(\tilde{p})$ normalized such that $w(\frac{1}{2})=1$ is $\alpha$-exp-concave (with $\alpha > 0$) if
\begin{align*}
	w(\tilde{p}) &~=~ \frac{1}{\tilde{p} \left( 2 - \int_{\tilde{p}}^{1/2}a(t)dt \right)} ~\text{ for } a(\tilde{p}) \text{ s.t. } \\
	& \left[ \frac{\alpha (1-\tilde{p})}{\tilde{p}} - \frac{2}{\tilde{p}(1-\tilde{p})} \right] + \frac{1}{\tilde{p}(1-\tilde{p})} \int_{\tilde{p}}^{1/2}a(t)dt ~\leq~ a(\tilde{p}) ~\leq~ -\alpha, \, \forall{\tilde{p} \in (0,1/2]}, \\
	\intertext{and}
	w(\tilde{p}) &~=~ \frac{1}{(1-\tilde{p}) \left( 2 - \int_{\frac{1}{2}}^{\tilde{p}}b(t)dt \right)} ~\text{ for } b(\tilde{p}) \text{ s.t. } \\
	& \left[ \frac{\alpha \tilde{p}}{(1-\tilde{p})} - \frac{2}{\tilde{p}(1-\tilde{p})} \right] + \frac{1}{\tilde{p}(1-\tilde{p})} \int_{1/2}^{\tilde{p}}b(t)dt ~\leq~ b(\tilde{p}) ~\leq~ -\alpha, \, \forall{\tilde{p} \in [\textstyle\frac{1}{2},1)}.
\end{align*}
\end{theorem}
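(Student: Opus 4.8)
The plan is to reduce the claim to the pointwise exp-concavity criterion \eqref{identitynscond} — which is Proposition~\ref{complexversion} specialised to the identity link $\tilde\psi(\tilde p)=\tilde p$ (then $\tilde\psi'\equiv 1$ and $\tilde\psi''\equiv 0$) — and then to check that the two integral conditions on $a$, and the two on $b$, are exactly equivalent re-encodings of the two inequalities of \eqref{identitynscond} on $(0,\tfrac12)$ and on $(\tfrac12,1)$ respectively.

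First I would show the hypothesised $w$ is a legitimate weight function. On $(0,\tfrac12]$ set $A(\tilde p):=2-\int_{\tilde p}^{1/2}a(t)\,dt$, so $w(\tilde p)=1/(\tilde p\,A(\tilde p))$ and $A'(\tilde p)=a(\tilde p)$. The hypothesis $a(\tilde p)\le-\alpha<0$ gives $A(\tilde p)=2+\int_{\tilde p}^{1/2}(-a(t))\,dt\ge 2+\alpha(\tfrac12-\tilde p)\ge 2>0$, so $w>0$ there and $w(\tfrac12)=1/((\tfrac12)\cdot 2)=1$; symmetrically, $B(\tilde p):=2-\int_{1/2}^{\tilde p}b(t)\,dt\ge 2>0$ on $[\tfrac12,1)$ with the $b$-branch also giving $w(\tfrac12)=1$. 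Hence the two branches agree at $\tfrac12$, $w$ is positive and continuous on $(0,1)$ with $w(\tfrac12)=1$, so $\ell$ is strictly proper and \eqref{identitynscond} applies; moreover by \eqref{binaryfirstder} with the identity link, $(\ell_1^\psi)'(v)=-(1-v)w(v)$ and $(\ell_2^\psi)'(v)=v\,w(v)$ are continuous — a fact I will use at the junction.

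Next I would verify \eqref{identitynscond} on $(0,\tfrac12)$. Here $\tfrac{w'(\tilde p)}{w(\tilde p)}=(\ln w)'(\tilde p)=-\tfrac1{\tilde p}-\tfrac{a(\tilde p)}{A(\tilde p)}$, while $\alpha w(\tilde p)\tilde p=\alpha/A(\tilde p)$ and $\alpha w(\tilde p)(1-\tilde p)=\alpha(1-\tilde p)/(\tilde p A(\tilde p))$. Substituting into \eqref{identitynscond}, the $-1/\tilde p$ terms cancel, and multiplying through by $A(\tilde p)>0$ (which preserves the inequalities): the left inequality of \eqref{identitynscond} becomes exactly $a(\tilde p)\le-\alpha$, and — using $\tfrac1{1-\tilde p}+\tfrac1{\tilde p}=\tfrac1{\tilde p(1-\tilde p)}$ and $A(\tilde p)=2-\int_{\tilde p}^{1/2}a$ — the right inequality becomes exactly the displayed lower bound on $a(\tilde p)$. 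So the pair of hypotheses on $a$ is equivalent to \eqref{identitynscond} on $(0,\tfrac12)$; the identical computation with $B$ for $A$ and the roles of $\tilde p$, $1-\tilde p$ interchanged shows the pair on $b$ is equivalent to \eqref{identitynscond} on $(\tfrac12,1)$.

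It remains to glue at $\tilde p=\tfrac12$: by the argument behind Proposition~\ref{complexversion}, \eqref{identitynscond} holding on $(0,\tfrac12)$ forces $v\mapsto e^{-\alpha\ell_y^\psi(v)}$ to have non-positive second derivative, hence be concave, there — and likewise on $(\tfrac12,1)$ — so it is concave on $(0,\tfrac12]$ and on $[\tfrac12,1)$ by continuity; since $w$ is continuous at $\tfrac12$, $\ell_y^\psi$ and thus $e^{-\alpha\ell_y^\psi}$ are $C^1$ on $(0,1)$, and a $C^1$ function concave on each of $(0,\tfrac12]$, $[\tfrac12,1)$ is concave on $(0,1)$ (its derivative is non-increasing on each half and continuous across $\tfrac12$). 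Thus $\ell^\psi=\ell$ is $\alpha$-exp-concave. I expect the computations to be routine; the only genuinely load-bearing points are (a) the positivity of $A$ and $B$, which is not an extra assumption but a consequence of the upper bounds $a,b\le-\alpha$ — so those upper bounds do double duty, being one of the \eqref{identitynscond} inequalities and also what makes clearing denominators valid — and (b) the mild care at $\tilde p=\tfrac12$, where $w$ need not be differentiable, which is why one glues via $C^1$-concavity rather than plugging $\tilde p=\tfrac12$ into \eqref{identitynscond}.
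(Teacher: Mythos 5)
Your proposal is correct and follows essentially the same route as the paper: reduce to the identity-link criterion \eqref{identitynscond} and check that, after substituting $w(\tilde{p})=1/(\tilde{p}A(\tilde{p}))$ with $A(\tilde{p})=2-\int_{\tilde{p}}^{1/2}a(t)dt$ (and symmetrically with $b$), the two inequalities of \eqref{identitynscond} on each half-interval become exactly the upper and lower bounds on $a$ (resp.\ $b$) — the paper reaches the same identities via the substitution $g=1/w$ and the equivalence of \eqref{identitynscond} with \eqref{identitynscondequiv1}--\eqref{identitynscondequiv2}. Your explicit checks that $A,B\geq 2>0$ (so clearing denominators is valid) and the $C^1$-gluing of the two concave pieces at $\tilde{p}=\tfrac12$ are points the paper leaves implicit, but they are refinements of, not departures from, its argument.
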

For square loss we can find that $a(\tilde{p})=\frac{-1}{\tilde{p}^2}$ and $b(\tilde{p})=\frac{-1}{(1-\tilde{p})^2}$ will satisfy the above sufficient condition with $\alpha=2$ and for log loss $a(\tilde{p})=b(\tilde{p})=-4$ will satisfy the sufficient condition with $\alpha=4$. It is also easy to see that for symmetric losses $a(\tilde{p})$ and $b(\tilde{p})$ will be symmetric.

When the \textit{canonical link} function ${\tilde{\psi}}_{\ell}(\tilde{p}) := - \textsf{D}\Lubartil_\ell(\tilde{p})'$ is combined with a strictly proper loss to form $\ell^{\psi_\ell}$, since $\textsf{D}{\tilde{\psi}}_{\ell}(\tilde{p}) = - \textsf{H}\Lubartil_\ell(\tilde{p})$, the first and second derivatives of the composite loss become considerably simpler as follows
\begin{eqnarray}
\textsf{D}\ell_i^{\psi_\ell}(v) &=& - (e_i^{n-1}-\tilde{p})', \label{canlinkcond1} \\
\textsf{H}\ell_i^{\psi_\ell}(v) &=& - [\textsf{H}\Lubartil_\ell(\tilde{p})]^{-1}. \label{canlinkcond2}
\end{eqnarray}
Since a proper loss $\ell$ is $\beta$-mixable if and only if $\beta \textsf{H}\Lubartil_\ell(\tilde{p}) \succcurlyeq \textsf{H}\Lubartil_{\ell^{\mathrm{log}}}(\tilde{p})$ for all $\tilde{p} \in \mathring{\tilde{\Delta}}^n$ (\cite{van2012mixability}), by applying the canonical link any $\beta$-mixable proper loss will be transformed to $\alpha$-exp-concave proper composite loss (with $\beta \geq \alpha > 0$) but $\alpha = \beta$ is not guaranteed in general. In the binary case, since ${\tilde{\psi}}_{\ell}'(\tilde{p}) = - {\Lubartil_\ell}''(\tilde{p}) = w(\tilde{p})$, we get
\begin{equation}
\label{binarycanonical}
w(\tilde{p}) \leq \frac{1}{\alpha \tilde{p}^2} \quad \text{and} \quad w(\tilde{p}) \leq \frac{1}{\alpha (1-\tilde{p})^2}, \quad \forall \tilde{p} \in (0,1),
\end{equation}
as the necessary and sufficient conditions for $\ell^{\psi_{\ell}}$ to be $\alpha$-exp-concave. In this case when $\alpha \rightarrow \infty$ the allowed region vanishes (since for proper losses $w(\tilde{p}) \geq 0$). From Figure~\ref{fig:can} it can be seen that, if the normalized loss function satisfies
\begin{equation*}
w(\tilde{p}) \leq \frac{1}{4\tilde{p}^2} \quad \text{and} \quad w(\tilde{p}) \leq \frac{1}{4(1-\tilde{p})^2}, \quad \forall \tilde{p} \in (0,1),
\end{equation*}
then the composite loss obtained by applying the canonical link function on the unnormalized loss with weight function $w_{\mathrm{org}}(\tilde{p})$ is $\frac{4}{w_{\mathrm{org}}(\frac{1}{2})}$-exp-concave.

We now consider whether one can always find a link function that can transform a $\beta$-mixable proper loss into $\beta$-exp-concave composite loss. In the binary case, such a link function exists and is given in the following corollary.
\begin{corollary}
\label{specialcoro}
Let $w_{\ell}(\tilde{p})=-{\Lubartil_\ell}''(\tilde{p})$. The exp-concavifying link function $\tilde{\psi}_{\ell}^*$ defined via
\begin{equation}
\tilde{\psi}_{\ell}^*(\tilde{p}) = \frac{w_{\ell^{\mathrm{log}}}(\frac{1}{2})}{w_{\ell}(\frac{1}{2})} \int_{0}^{\tilde{p}}{\frac{w_{\ell}(v)}{w_{\ell^{\mathrm{log}}}(v)}dv}, \quad \forall{\tilde{p} \in [0,1]}
\end{equation}
(which is a valid strictly increasing link function) will always transform a $\beta$-mixable proper loss $\ell$ into $\beta$-exp-concave composite loss $\ell^{\psi_{\ell}^*}$, where $\ell^{\psi_{\ell}^*}_y(v)=\ell_y \circ \Pi_\Delta^{-1} \circ (\tilde{\psi}_{\ell}^*)^{-1}(v)$.
\end{corollary}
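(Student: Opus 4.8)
The plan is to substitute the proposed link directly into the necessary-and-sufficient exp-concavity criterion \eqref{maineq} of Proposition~\ref{complexversion} and observe that, for this particular choice of $\tilde\psi$, the two-sided inequality collapses exactly onto the binary mixability characterisation of \cite{van2012mixability}.

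First I would check that $\tilde\psi_{\ell}^*$ is an admissible link. Since $\ell$ is strictly proper, $w_\ell = -{\Lubartil_\ell}'' > 0$ on $(0,1)$, and $w_{\ell^{\mathrm{log}}}(\tilde p) = \tfrac{1}{\tilde p(1-\tilde p)} > 0$, so the integrand $w_\ell/w_{\ell^{\mathrm{log}}}$ is strictly positive and $\tilde\psi_{\ell}^*$ is strictly increasing. Because $\ell$ is $\beta$-mixable we have the pointwise bound $w_\ell(\tilde p)/w_{\ell^{\mathrm{log}}}(\tilde p) \le 1/\beta$ (this is exactly the mixability condition, used again below), so the defining integral is finite on all of $[0,1]$; hence $\tilde\psi_{\ell}^*$ is a continuous, strictly increasing — thus invertible — map onto an interval $\mathcal V$, meeting the hypotheses of Proposition~\ref{complexversion}.

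The heart of the argument is a logarithmic-derivative computation. From $\tilde\psi_{\ell}^{*\prime}(\tilde p) = \frac{w_{\ell^{\mathrm{log}}}(1/2)}{w_\ell(1/2)}\,\frac{w_\ell(\tilde p)}{w_{\ell^{\mathrm{log}}}(\tilde p)}$ one gets $\frac{\tilde\psi_{\ell}^{*\prime\prime}}{\tilde\psi_{\ell}^{*\prime}} = \frac{w_\ell'}{w_\ell} - \frac{w_{\ell^{\mathrm{log}}}'}{w_{\ell^{\mathrm{log}}}}$, the front constant cancelling, so the middle term of \eqref{maineq} reduces to $\frac{w_{\ell^{\mathrm{log}}}'(\tilde p)}{w_{\ell^{\mathrm{log}}}(\tilde p)} = \frac{1}{1-\tilde p} - \frac{1}{\tilde p}$. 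Plugging this in, a one-line rearrangement shows that \emph{both} the left-hand and the right-hand inequality of \eqref{maineq} become equivalent to the single pointwise condition $\alpha\, w_\ell(\tilde p) \le \tfrac{1}{\tilde p(1-\tilde p)} = w_{\ell^{\mathrm{log}}}(\tilde p)$. Hence, by Proposition~\ref{complexversion}, $\ell^{\psi_{\ell}^*}$ is $\alpha$-exp-concave iff $\alpha w_\ell \le w_{\ell^{\mathrm{log}}}$ on $(0,1)$. Finally, in the binary case the criterion of \cite{van2012mixability}, $\beta\,\textsf{H}\Lubartil_\ell \succcurlyeq \textsf{H}\Lubartil_{\ell^{\mathrm{log}}}$, says precisely $\beta w_\ell(\tilde p) \le w_{\ell^{\mathrm{log}}}(\tilde p)$ for all $\tilde p$; so $\beta$-mixability of $\ell$ yields $\beta$-exp-concavity of $\ell^{\psi_{\ell}^*}$, which is the claim.

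There is little genuine difficulty here; the computation is mechanical once Proposition~\ref{complexversion} and the mixability characterisation are in hand. The step most prone to error is bookkeeping: one must run everything with the \emph{unnormalised} log loss ($w_{\ell^{\mathrm{log}}}(\tilde p) = \tfrac{1}{\tilde p(1-\tilde p)}$) so that the two inequalities of \eqref{maineq} really do collapse to the \emph{same} condition; one must note that the front constant $\tfrac{w_{\ell^{\mathrm{log}}}(1/2)}{w_\ell(1/2)}$ merely rescales the link (normalising $\tilde\psi_{\ell}^{*\prime}(1/2)=1$) and so drops out of every exp-concavity statement; and one must confirm, as in the second paragraph, that strict properness together with $\beta$-mixability keep the defining integral finite so that $\tilde\psi_{\ell}^*$ is a bona fide link on the closed interval $[0,1]$.
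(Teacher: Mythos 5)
Your proposal is correct and follows essentially the same route as the paper: substitute the link into the criterion \eqref{maineq} of Proposition~\ref{complexversion}, observe that $\frac{(\tilde{\psi}_{\ell}^*)''}{(\tilde{\psi}_{\ell}^*)'} = \frac{w_{\ell}'}{w_{\ell}} - (\log w_{\ell^{\mathrm{log}}})'$ so that both inequalities collapse to $\alpha\, w_{\ell}(\tilde{p}) \leq w_{\ell^{\mathrm{log}}}(\tilde{p})$, and identify this with the binary $\beta$-mixability condition of \cite{van2012mixability}. Your additional remarks on finiteness of the defining integral and the role of the normalising constant are harmless elaborations of what the paper leaves implicit.
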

For log loss, the exp-concavifying link is equal to the identity link and the canonical link could be written as $\int_{0}^{\tilde{p}}{w_{\ell}(v)dv}$. If $\ell$ is a binary proper loss with weight function $w_\ell(\tilde{p})$, then we can define a new proper loss $\ell^{\mathrm{mix}}$ with weight function $w_{\ell^{\mathrm{mix}}}(\tilde{p})=\frac{w_\ell(\tilde{p})}{w_{\ell^{\mathrm{log}}}(\tilde{p})}$. Then applying the exp-concavifying link $\tilde{\psi}_{\ell}^*$ on the original loss $\ell$ is equivalent to applying the canonical link ${\tilde{\psi}}_{\ell}$ on the new loss $\ell^{\mathrm{mix}}$.

The links constructed by the geometric and calculus approaches can be completely different (see Appendix D-F). The former can be further varied by replacing $\one_n$ with any direction in the positive orthant, and the latter can be arbitrarily rescaled. Furthermore, as both links satisfy \eqref{maineq} with $\alpha=\beta$, any appropriate interpolation also works.

\section{Conclusions}
\label{sec:conc}

If a loss is $\beta$-mixable, one can run the Aggregating Algorithm with learning rate $\beta$ and obtain a $\frac{\log N}{\beta}$ regret bound. Similarly a $\frac{\log N}{\alpha}$ regret bound can be attained by the Weighted Average Algorithm with learning rate $\alpha$, when the loss is $\alpha$-exp-concave. \cite{vovk2001competitive} observed that the weighted average of the expert predictions (\cite{kivinen1999averaging}) will be a \textit{perfect} (in the technical sense defined in \cite{vovk2001competitive}) substitution function for the Aggregating Algorithm if and only if the loss function is exp-concave. Thus if we have to use a proper, mixable but non-exp-concave loss function $\ell$ for a sequential prediction (online learning) problem, an $O(1)$ regret bound could be achieved by the following two approaches:
\begin{itemize}
\item{Use the Aggregating Algorithm (\cite{vovk1995game}) with the \textit{inverse loss} $\ell^{-1}$ (\cite{williamson2014geometry}) as the universal substitution function.}
\item{Apply the exp-concavifying link ($\tilde{\psi}_{\ell}^*$) on $\ell$, derive the $\beta_{\ell}$-exp-concave composite loss $\ell^{\psi_{\ell}^{*}}$. Then use the Weighted Average Algorithm (\cite{kivinen1999averaging}) with $\ell^{\psi_{\ell}^{*}}$ to obtain the learner's prediction in the transformed domain ($v_{\mathrm{avg}} \in \psi_{\ell}^{*}(\tilde{\Delta}^n)$). Finally output the inverse link value of this prediction ($(\psi_{\ell}^{*})^{-1}(v_{\mathrm{avg}})$).}
\end{itemize}
In either approach we are faced with a computational problem of evaluating an inverse function. But in the binary class case the inverse of a strictly monotone function can be efficiently evaluated using one sided bisection method (or lookup table). So in conclusion, the latter approach can be more convenient and efficient in computation than the former.


\acks{This work was supported by the ARC and NICTA, which is funded by the Australian Government through the Department of Communications and the Australian Research Council through the ICT Centre of Excellence Program. Thanks to the referees for helpful comments.
}

\bibliographystyle{plainnat}
\bibliography{reflist}

\begin{thebibliography}{21}
\providecommand{\natexlab}[1]{#1}
\providecommand{\url}[1]{\texttt{#1}}
\expandafter\ifx\csname urlstyle\endcsname\relax
  \providecommand{\doi}[1]{doi: #1}\else
  \providecommand{\doi}{doi: \begingroup \urlstyle{rm}\Url}\fi

\bibitem[Banerjee et~al.(2005)Banerjee, Guo, and Wang]{banerjee2005optimality}
Arindam Banerjee, Xin Guo, and Hui Wang.
\newblock On the optimality of conditional expectation as a {B}regman
  predictor.
\newblock \emph{IEEE Transactions on Information Theory}, 51\penalty0
  (7):\penalty0 2664--2669, 2005.

\bibitem[Buja et~al.(2005)Buja, Stuetzle, and Shen]{buja2005loss}
Andreas Buja, Werner Stuetzle, and Yi~Shen.
\newblock Loss functions for binary class probability estimation and
  classification: Structure and applications,” manuscript, available at
  www-stat.wharton.upenn.edu/$\sim$buja, 2005.

\bibitem[Dragomir(2000)]{dragomir2000some}
Sever~Silvestru Dragomir.
\newblock Some {G}ronwall type inequalities and applications.
\newblock \emph{RGMIA Monographs, Victoria University, Australia}, 19, 2000.

\bibitem[Gneiting and Raftery(2007)]{gneiting2007strictly}
Tilmann Gneiting and Adrian~E. Raftery.
\newblock Strictly proper scoring rules, prediction, and estimation.
\newblock \emph{Journal of the American Statistical Association}, 102\penalty0
  (477):\penalty0 359--378, 2007.

\bibitem[Hand(1994)]{hand1994deconstructing}
David~J. Hand.
\newblock Deconstructing statistical questions.
\newblock \emph{Journal of the Royal Statistical Society, Series A (Statistics
  in Society)}, 157\penalty0 (3):\penalty0 317--356, 1994.

\bibitem[Hand and Vinciotti(2003)]{hand2003local}
David~J. Hand and Veronica Vinciotti.
\newblock Local versus global models for classification problems: Fitting
  models where it matters.
\newblock \emph{The American Statistician}, 57\penalty0 (2):\penalty0 124--131,
  2003.

\bibitem[Haussler et~al.(1998)Haussler, Kivinen, and
  Warmuth]{haussler1998sequential}
David Haussler, Jyrki Kivinen, and Manfred~K. Warmuth.
\newblock Sequential prediction of individual sequences under general loss
  functions.
\newblock \emph{IEEE Transactions on Information Theory}, 44\penalty0
  (5):\penalty0 1906--1925, 1998.

\bibitem[Hazan et~al.(2007)Hazan, Agarwal, and Kale]{hazan2007logarithmic}
Elad Hazan, Amit Agarwal, and Satyen Kale.
\newblock Logarithmic regret algorithms for online convex optimization.
\newblock \emph{Machine Learning}, 69\penalty0 (2-3):\penalty0 169--192, 2007.

\bibitem[Kalnishkan and Vyugin(2005)]{kalnishkan2005weak}
Yuri Kalnishkan and Michael~V. Vyugin.
\newblock The weak aggregating algorithm and weak mixability.
\newblock In \emph{Proc.\ Annual Conf.\ Computational Learning Theory}, pages
  188--203. Springer, 2005.

\bibitem[Kivinen and Warmuth(1999)]{kivinen1999averaging}
Jyrki Kivinen and Manfred~K. Warmuth.
\newblock Averaging expert predictions.
\newblock In \emph{Proc.\ Annual Conf.\ Computational Learning Theory}, pages
  153--167. Springer, 1999.

\bibitem[Reid and Williamson(2010)]{reid2010composite}
Mark~D. Reid and Robert~C. Williamson.
\newblock Composite binary losses.
\newblock \emph{Journal of Machine Learning Research}, 11:\penalty0 2387--2422,
  2010.

\bibitem[Reid and Williamson(2011)]{reid2011information}
Mark~D. Reid and Robert~C. Williamson.
\newblock Information, divergence and risk for binary experiments.
\newblock \emph{Journal of Machine Learning Research}, 12:\penalty0 731--817,
  2011.

\bibitem[Rockafellar(1970)]{rockafellar1970convex}
R.~Tyrrell Rockafellar.
\newblock \emph{Convex analysis}.
\newblock Princeton university press, 1970.

\bibitem[van Erven(2012)]{van2012exp}
Tim van Erven.
\newblock From exp-concavity to mixability, 2012.
\newblock
  http://www.timvanerven.nl/blog/2012/12/from-exp-concavity-to-mixability/.

\bibitem[Van~Erven et~al.(2012)Van~Erven, Reid, and
  Williamson]{van2012mixability}
Tim Van~Erven, Mark~D. Reid, and Robert~C. Williamson.
\newblock Mixability is {B}ayes risk curvature relative to log loss.
\newblock \emph{Journal of Machine Learning Research}, 13\penalty0
  (1):\penalty0 1639--1663, 2012.

\bibitem[Vernet et~al.(2014)Vernet, Reid, and Williamson]{vernet2011composite}
Elodie Vernet, Mark~D. Reid, and Robert~C. Williamson.
\newblock Composite multiclass losses.
\newblock \emph{Journal of Machine Learning Research}, 2014.
\newblock Under review.
  http://users.cecs.anu.edu.au/$\sim$williams/papers/P189.pdf.

\bibitem[Vovk(1995)]{vovk1995game}
Volodya Vovk.
\newblock A game of prediction with expert advice.
\newblock In \emph{Proc.\ Annual Conf.\ Computational Learning Theory}, pages
  51--60. ACM, 1995.

\bibitem[Vovk(2001)]{vovk2001competitive}
Volodya Vovk.
\newblock Competitive on-line statistics.
\newblock \emph{International Statistical Review}, 69\penalty0 (2):\penalty0
  213--248, 2001.

\bibitem[Vovk and Zhdanov(2009)]{vovk2009prediction}
Volodya Vovk and Fedor Zhdanov.
\newblock Prediction with expert advice for the {B}rier game.
\newblock \emph{Journal of Machine Learning Research}, 10:\penalty0 2445--2471,
  2009.

\bibitem[Williamson(2014)]{williamson2014geometry}
Robert~C. Williamson.
\newblock The geometry of losses.
\newblock In \emph{Proc.\ Annual Conf.\ Computational Learning Theory}, pages
  1078--1108, 2014.

\bibitem[Zinkevich(2003)]{zinkevich2003online}
Martin Zinkevich.
\newblock Online convex programming and generalized infinitesimal gradient
  ascent.
\newblock In \emph{Proceedings of the International Conference on Machine
  Learning}, pages 928--936, 2003.

\end{thebibliography}

\appendix

\section{Substitution Functions}
\label{sec:subfunc}

\begin{figure}
\centering
    \includegraphics[width=8cm]{./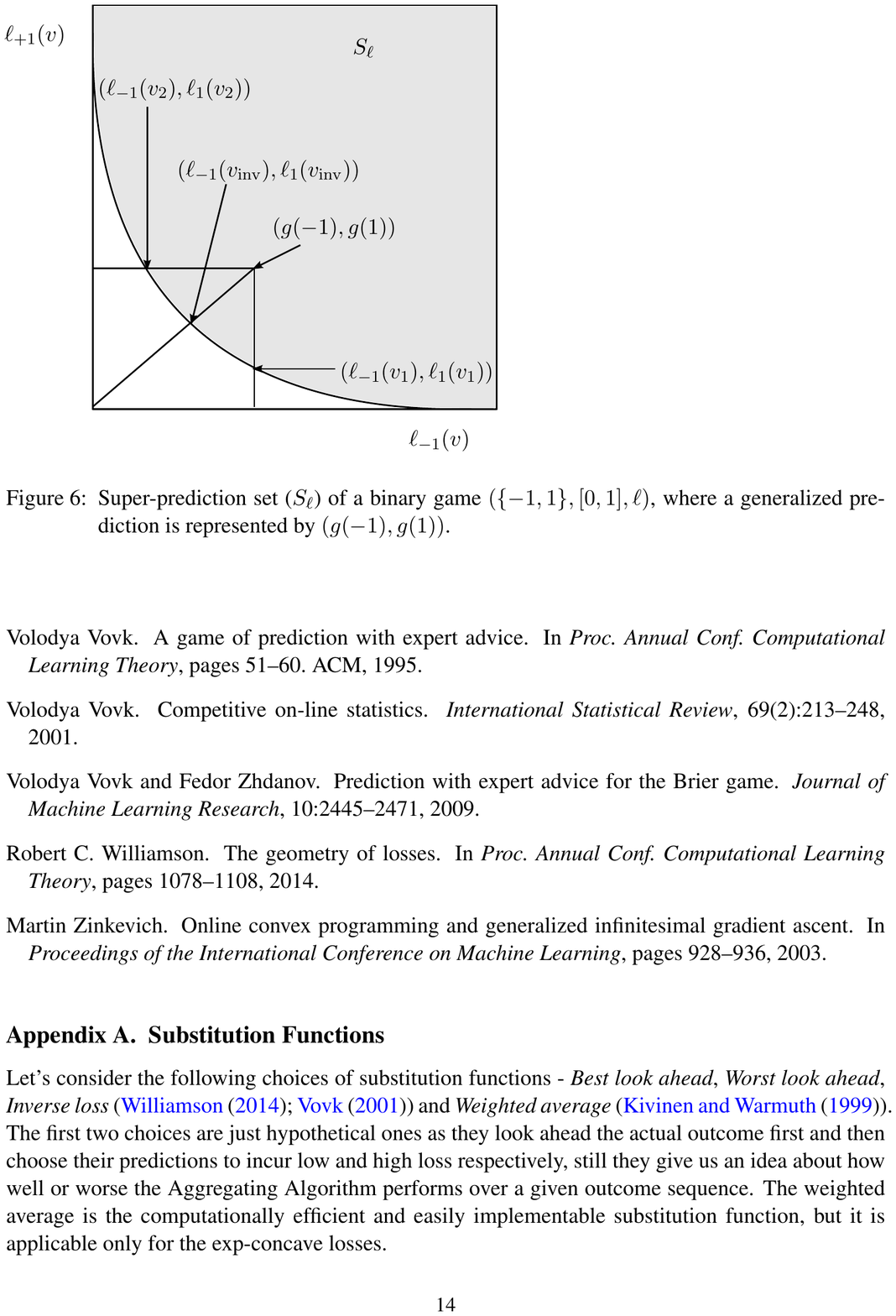}
  \caption{Super-prediction set ($S_\ell$) of a binary game $(\{-1,1\},[0,1],\ell)$, where a generalized prediction is represented by $(g(-1),g(1))$.}
  \label{fig:losscurve}
\end{figure}

We consider the following choices of substitution functions --- \textit{Best look ahead}, \textit{Worst look ahead}, \textit{Inverse loss} (\cite{williamson2014geometry}) and \textit{Weighted average} (\cite{kivinen1999averaging}). The first two choices are just hypothetical ones as they look ahead the actual outcome first and then choose their predictions to incur low and high loss respectively, still they give us an idea about how well or worse the Aggregating Algorithm performs over a given outcome sequence. The weighted average is a computationally efficient and easily implementable substitution function, but it is applicable only for exp-concave losses.

For a binary game represented by $\left(\{-1,1\},[0,1],\ell \right)$ and shown in the Figure \ref{fig:losscurve},
\begin{itemize}
\item{If the outcome $y=-1$, the Best look ahead and the Worst look ahead will choose the predictions $v_2$ and $v_1$ and incur losses $\ell_{-1}(v_2)$ and $\ell_{-1}(v_1)=g(-1)$ respectively; and if $y=1$, they will choose $v_1$ and $v_2$ and suffer losses $\ell_{1}(v_1)$ and $\ell_{1}(v_2)=g(1)$ respectively.}
\item{The Inverse loss will choose the prediction $v_{\mathrm{inv}}$ such that $\frac{\ell_{1}(v_{\mathrm{inv}})}{\ell_{-1}(v_{\mathrm{inv}})}=\frac{g(1)}{g(-1)}$ and will incur a loss $\ell(y,v_{\mathrm{inv}})$, and the Weighted average will choose $v_{\mathrm{avg}}=\sum_i w^i v^i$ (where $w^i$ and $v^i$ are the weight and the prediction of the $i$-the expert respectively) and will incur a loss $\ell(y,v_{\mathrm{avg}})$.}
\end{itemize}
Further if the loss function $\ell$ is chosen to be the square loss (which is both 2-mixable and $\frac{1}{2}$-exp-concave), then we have $v_1=\sqrt{g(-1)}$ (since $\ell_{-1}(v_1)=v_1^2=g(-1)$), $v_2=1-\sqrt{g(1)}$ (since $\ell_{1}(v_2)=(1-v_2)^2=g(1)$), and $v_{\mathrm{inv}}=\frac{\sqrt{g(-1)}}{\sqrt{g(-1)}+\sqrt{g(1)}}$ (since $\frac{(1-v_{\mathrm{inv}})^2}{v_{\mathrm{inv}}^2}=\frac{g(1)}{g(-1)}$). Thus for a binary square loss game over an outcome sequence  $y_1,...,y_T$, the cumulative losses of the Aggregating Algorithm for different choices of substitution function are given as follows:
\begin{itemize}
\item{Best look ahead: $\sum_1^T \left(1-\sqrt{g_t(-y_t)}\right)^2$}
\item{Worst look ahead: $\sum_1^T g_t(y_t)$}
\item{Inverse loss: $\sum_1^T \left(y_t-\frac{\sqrt{g_t(0)}}{\sqrt{g_t(0)}+\sqrt{g_t(1)}}\right)^2$}
\item{Weighted average: $\sum_1^T \left(y_t-\sum_i w_i^t v_i^t\right)^2$}
\end{itemize}

Some experiments are conducted on a binary square loss game to compare these substitution functions. For this, binary outcome sequences of 100 elements are generated using the Bernoulli distribution with success probabilities 0.5, 0.7, 0.9, and 1.0 (these sequences are represented by $\{y_t\}_{p=0.5}$, $\{y_t\}_{p=0.7}$, $\{y_t\}_{p=0.9}$ and $\{y_t\}_{p=1.0}$ respectively). Furthermore the following expert settings are used:
\begin{itemize}
\item{2 experts where one expert always make the prediction $v=0$, and the other one always makes the prediction $v=1$. This setting is represented by $\{E_t\}_{\text{set.}1}$.}
\item{3 experts where two experts are as in the previous setting, and the other one is always accurate expert. This setting is represented by $\{E_t\}_{\text{set.}2}$.}
\item{101 constant experts where the prediction values of the experts are from 0 to 1 with equal interval. This setting is represented by $\{E_t\}_{\text{set.}3}$.}
\end{itemize}
The results of these experiments are presented in the figures \ref{fig:p0d5},\ref{fig:p0d7},\ref{fig:p0d9}, and \ref{fig:p1d0}. From these figures, it can be seen that for the expert setting $\{E_t\}_{\text{set.}1}$, the difference between the regret values of the worst look ahead and the best look ahead substitution functions relative to the theoretical regret bound is very high, whereas that relative difference is very low for the expert setting $\{E_t\}_{\text{set.}3}$. Further the performance of the Aggregating Algorithm over a real dataset is shown in the Figure \ref{fig:footballdata}. From these results for both simulated dataset (for all three expert settings) and real dataset, observe that the difference between the regret values of the inverse loss and the weighted average substitution functions relative to the theoretical regret bound is very low.

\begin{figure}[htbp]
\floatconts
  {fig:p0d5}
  {\caption{Cumulative regret of the Aggregating Algorithm over the outcome sequence $\{y_t\}_{p=0.5}$ for different choices of substitution functions (Best look ahead(\textcolor{blue}{---}), Worst look ahead(\textcolor{black}{---}), Inverse loss(\textcolor{green}{---}), and Weighted average(\textcolor{magenta}{---})) with learning rate $\eta$ and expert setting $\{E_t\}_i$ (theoretical regret bound is shown by \textcolor{red}{- - -}).}}
  {
    \subfigure[$\eta = 0.1, \{E_t\}_{\text{set.}1}$]{\label{fig:p0d5_n0d1_e1}
      \includegraphics[width=0.32\linewidth]{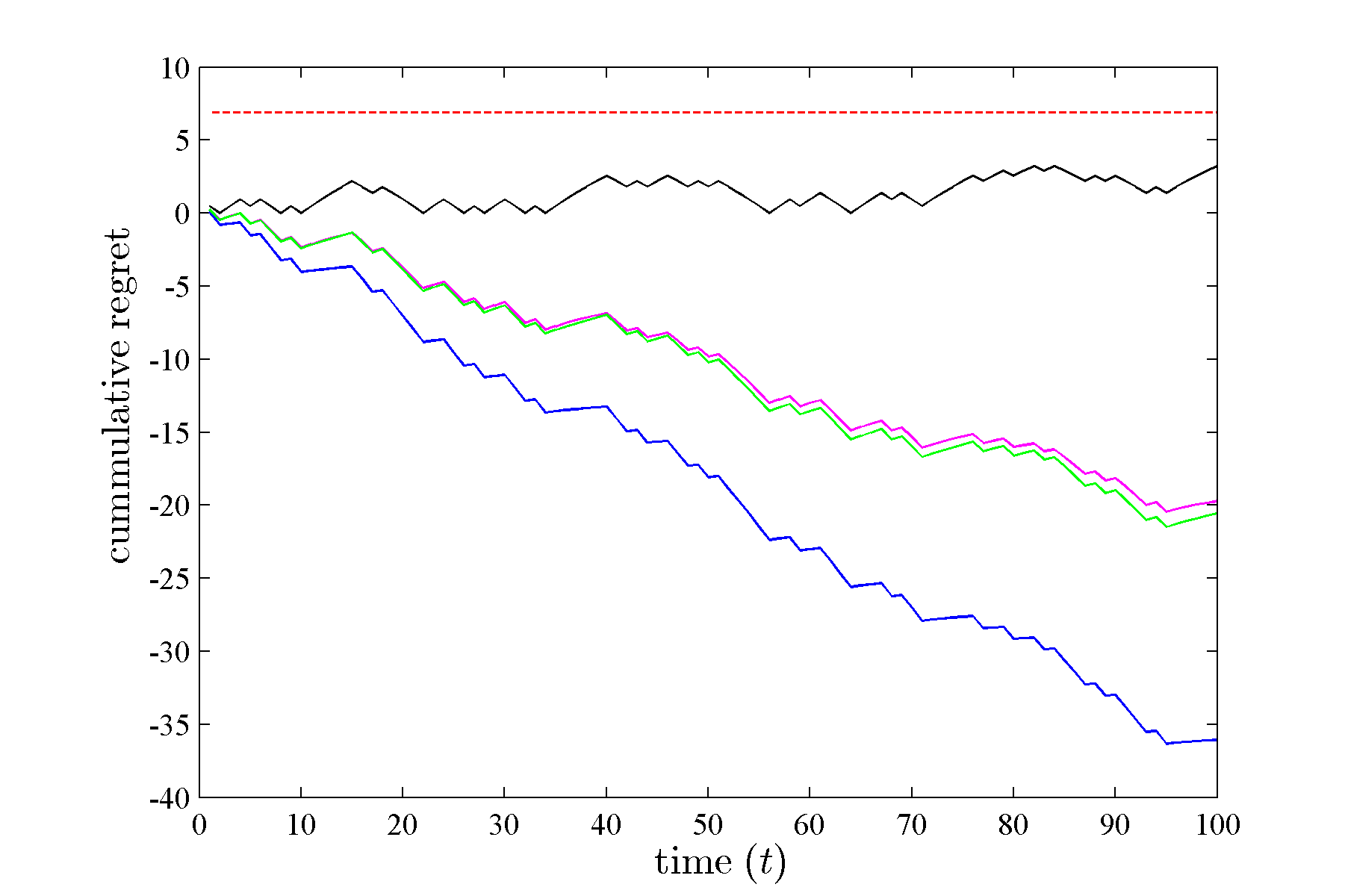}}
    \subfigure[$\eta = 0.3, \{E_t\}_{\text{set.}1}$]{\label{fig:p0d5_n0d3_e1}
      \includegraphics[width=0.32\linewidth]{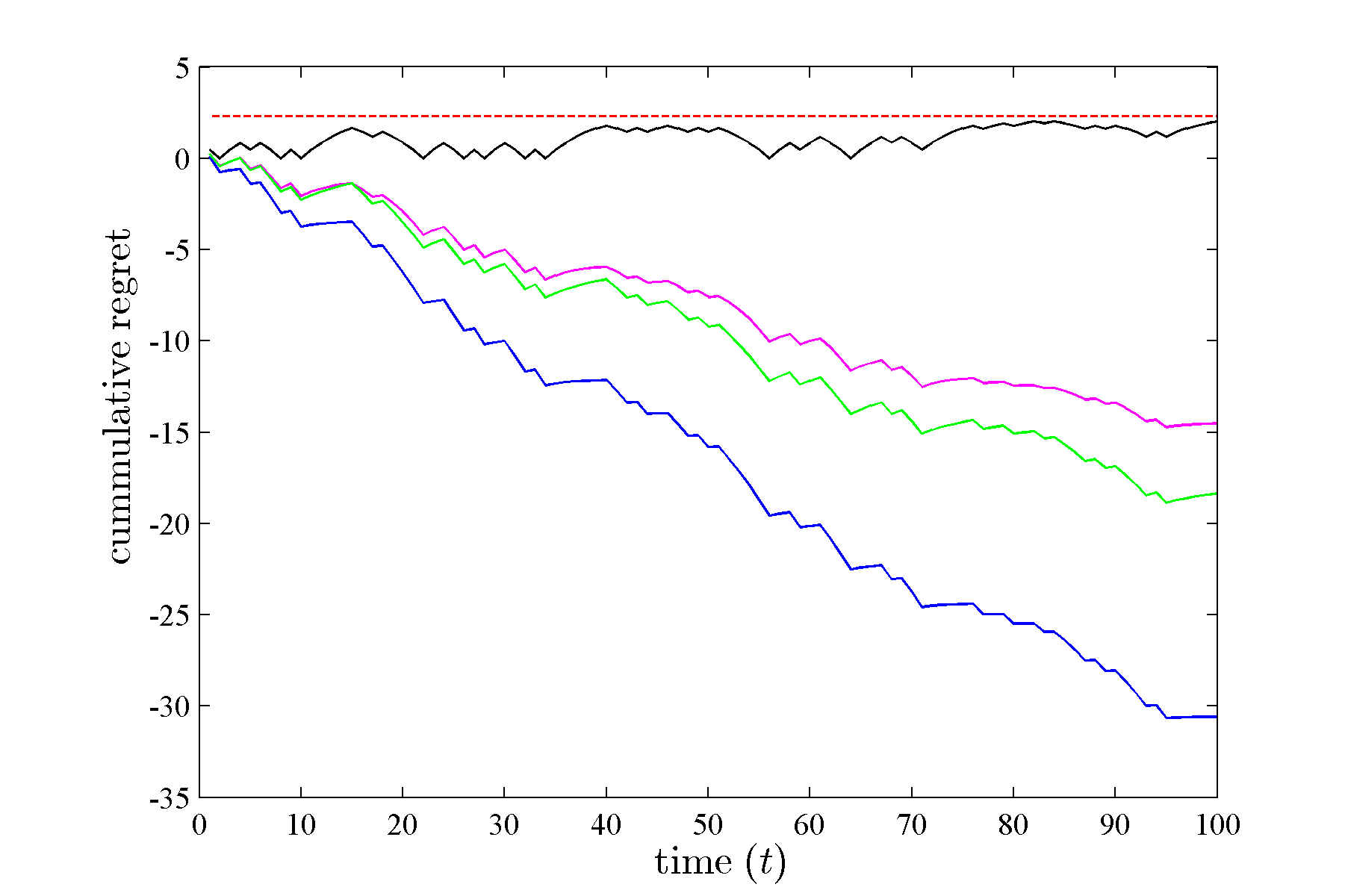}}
    \subfigure[$\eta = 0.5, \{E_t\}_{\text{set.}1}$]{\label{fig:p0d5_n0d5_e1}
      \includegraphics[width=0.32\linewidth]{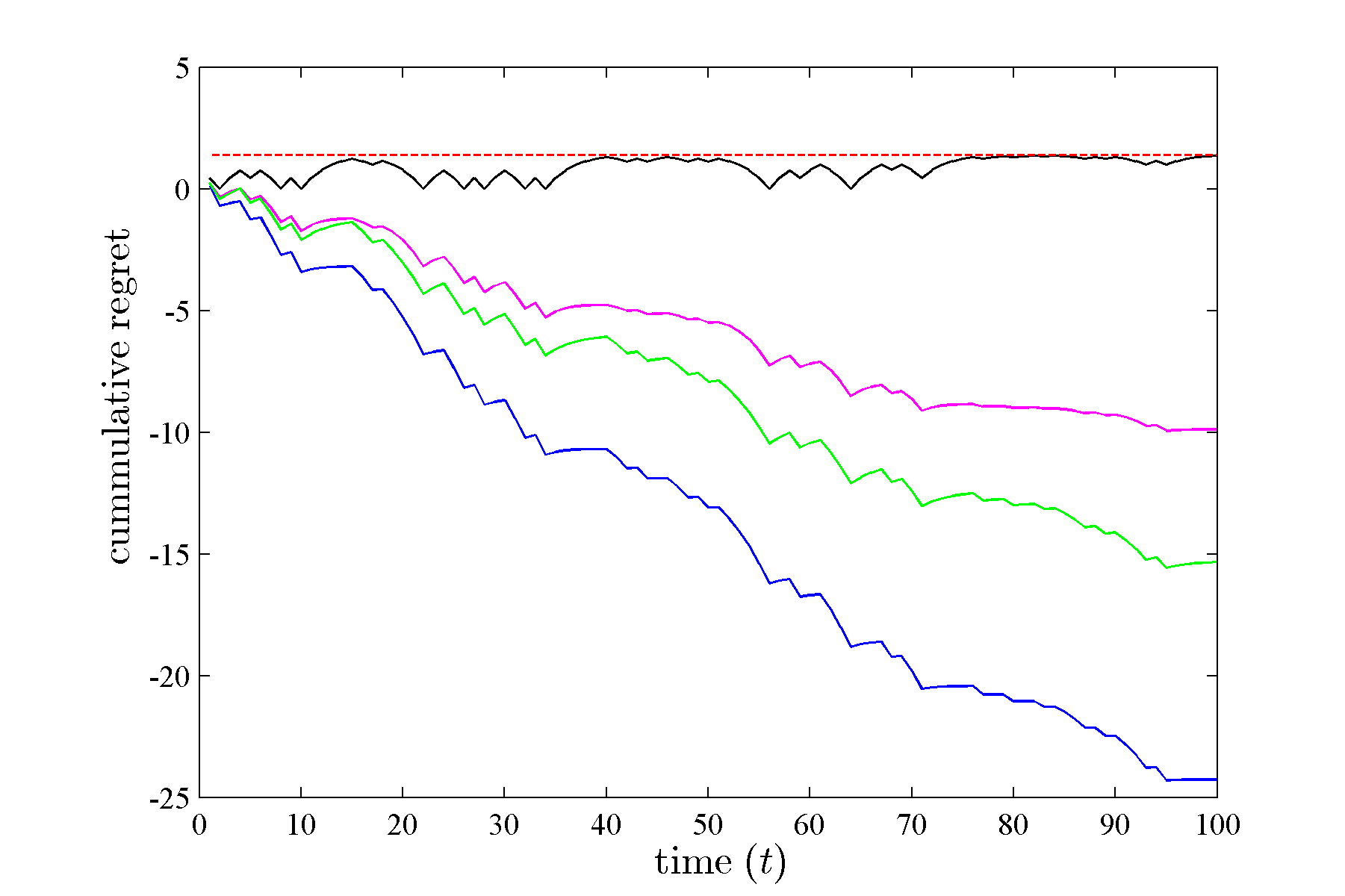}}

    \subfigure[$\eta = 0.1, \{E_t\}_{\text{set.}2}$]{\label{fig:p0d5_n0d1_e2}
      \includegraphics[width=0.32\linewidth]{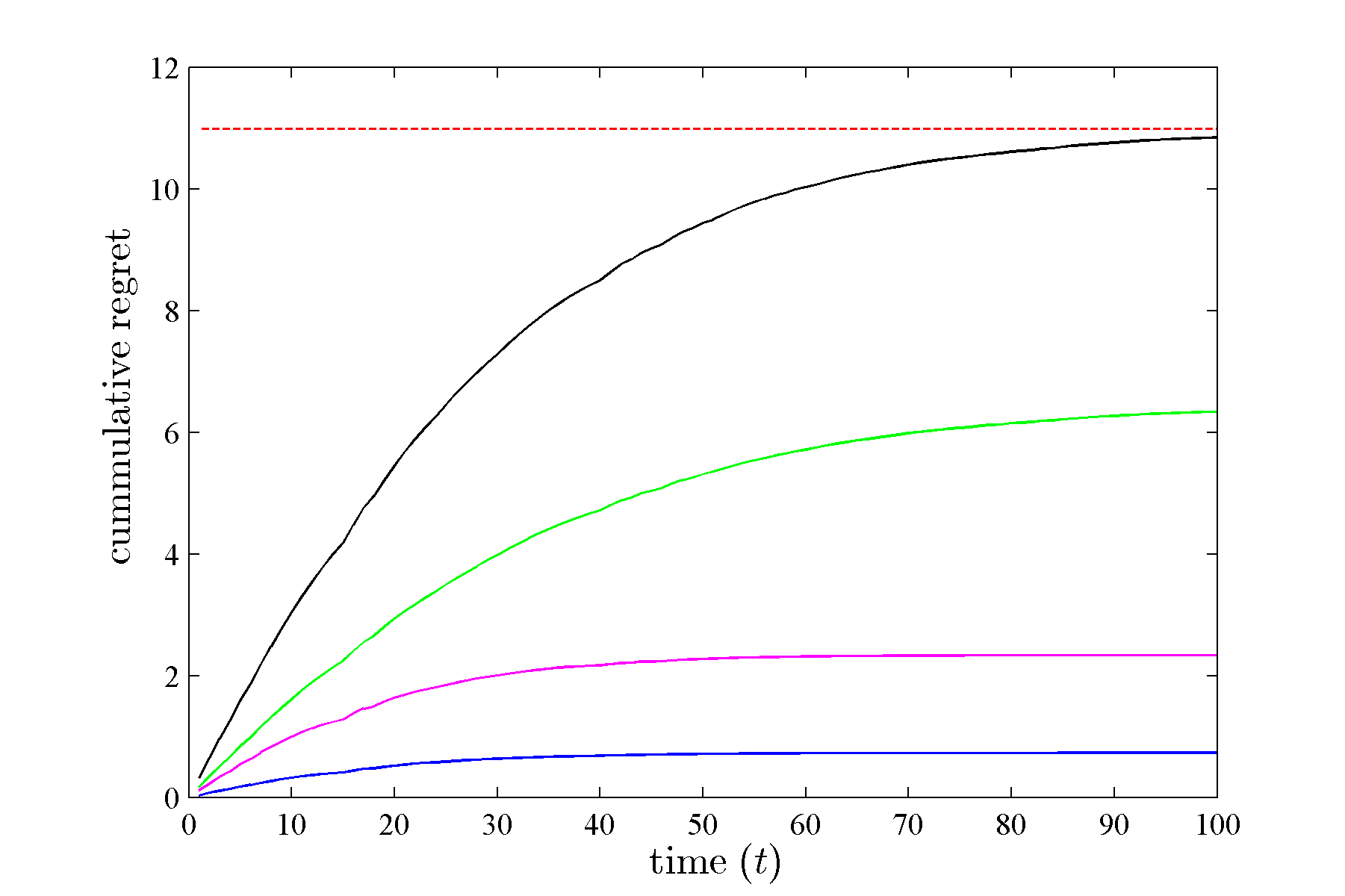}}
    \subfigure[$\eta = 0.3, \{E_t\}_{\text{set.}2}$]{\label{fig:p0d5_n0d3_e2}
      \includegraphics[width=0.32\linewidth]{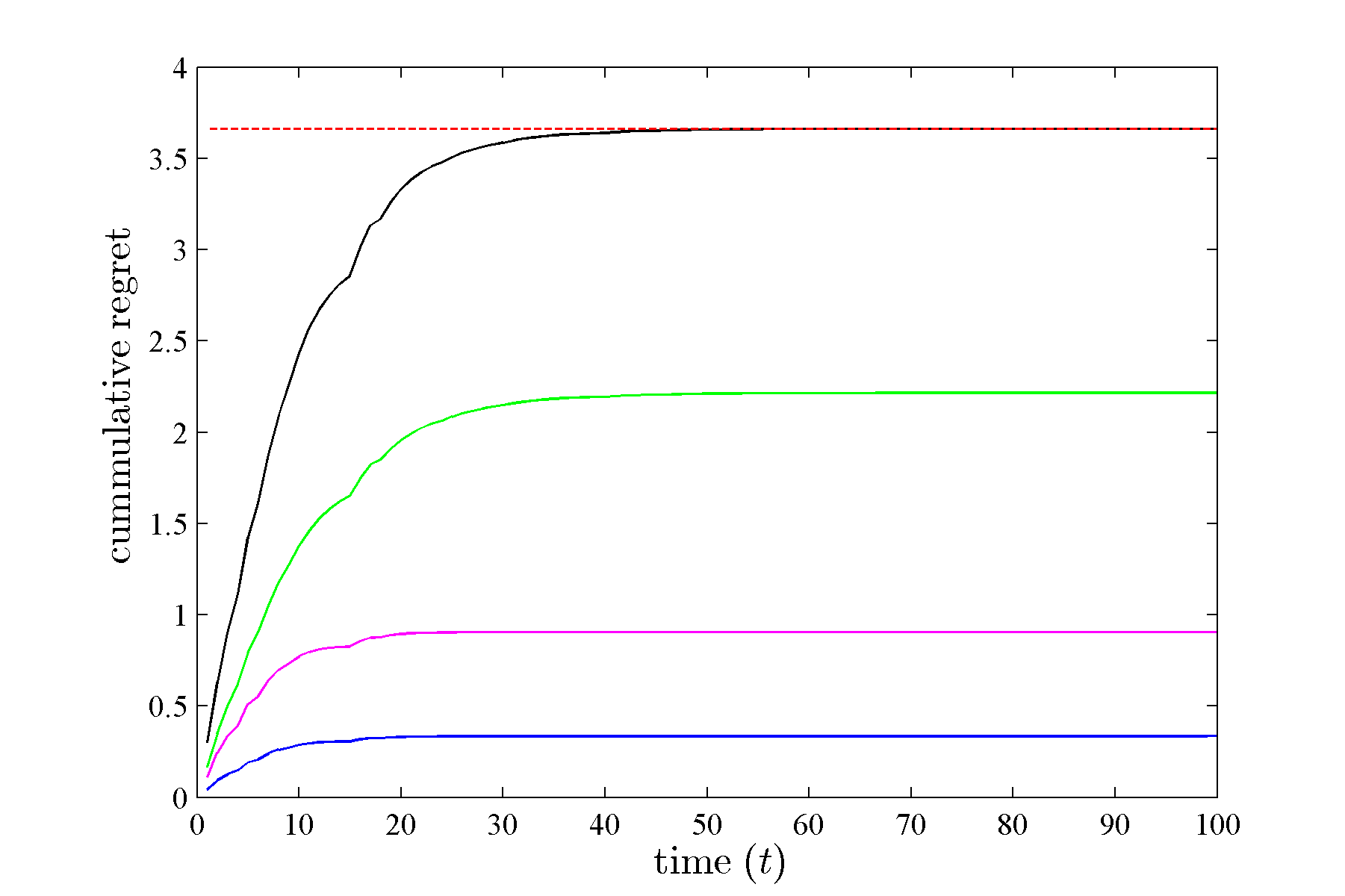}}
    \subfigure[$\eta = 0.5, \{E_t\}_{\text{set.}2}$]{\label{fig:p0d5_n0d5_e2}
      \includegraphics[width=0.32\linewidth]{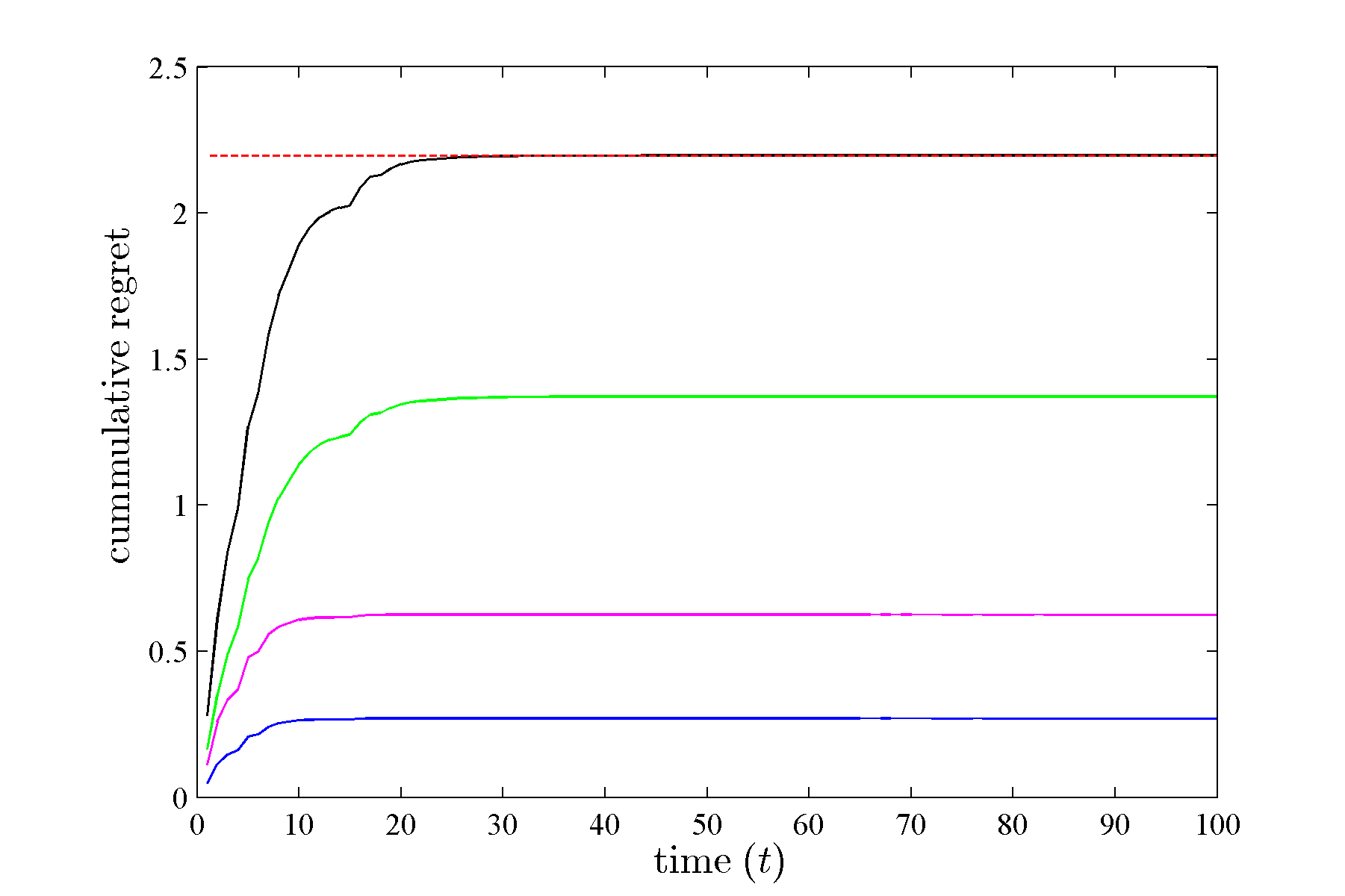}}

    \subfigure[$\eta = 0.1, \{E_t\}_{\text{set.}3}$]{\label{fig:p0d5_n0d1_e3}
      \includegraphics[width=0.32\linewidth]{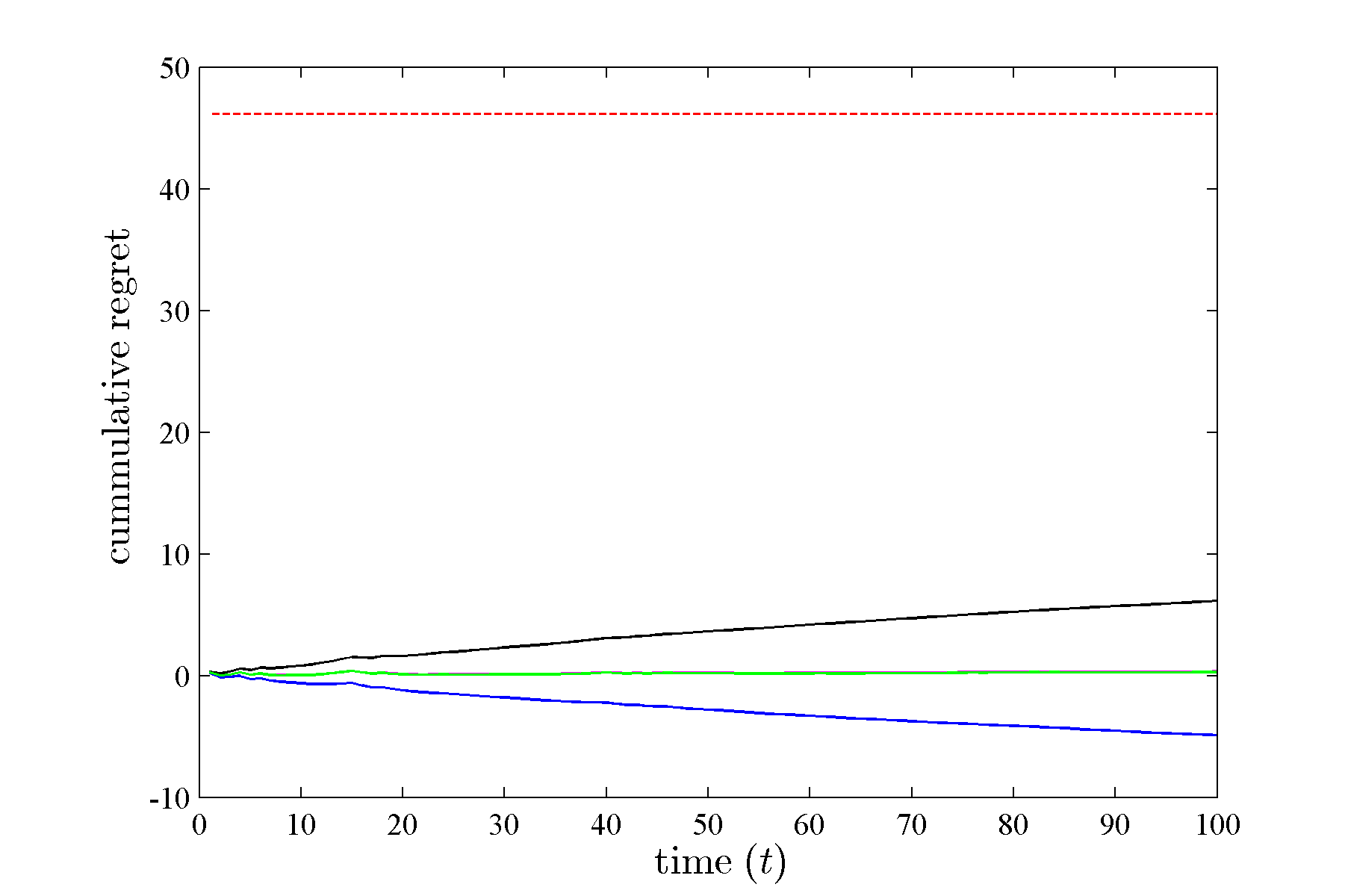}}
    \subfigure[$\eta = 0.3, \{E_t\}_{\text{set.}3}$]{\label{fig:p0d5_n0d3_e3}
      \includegraphics[width=0.32\linewidth]{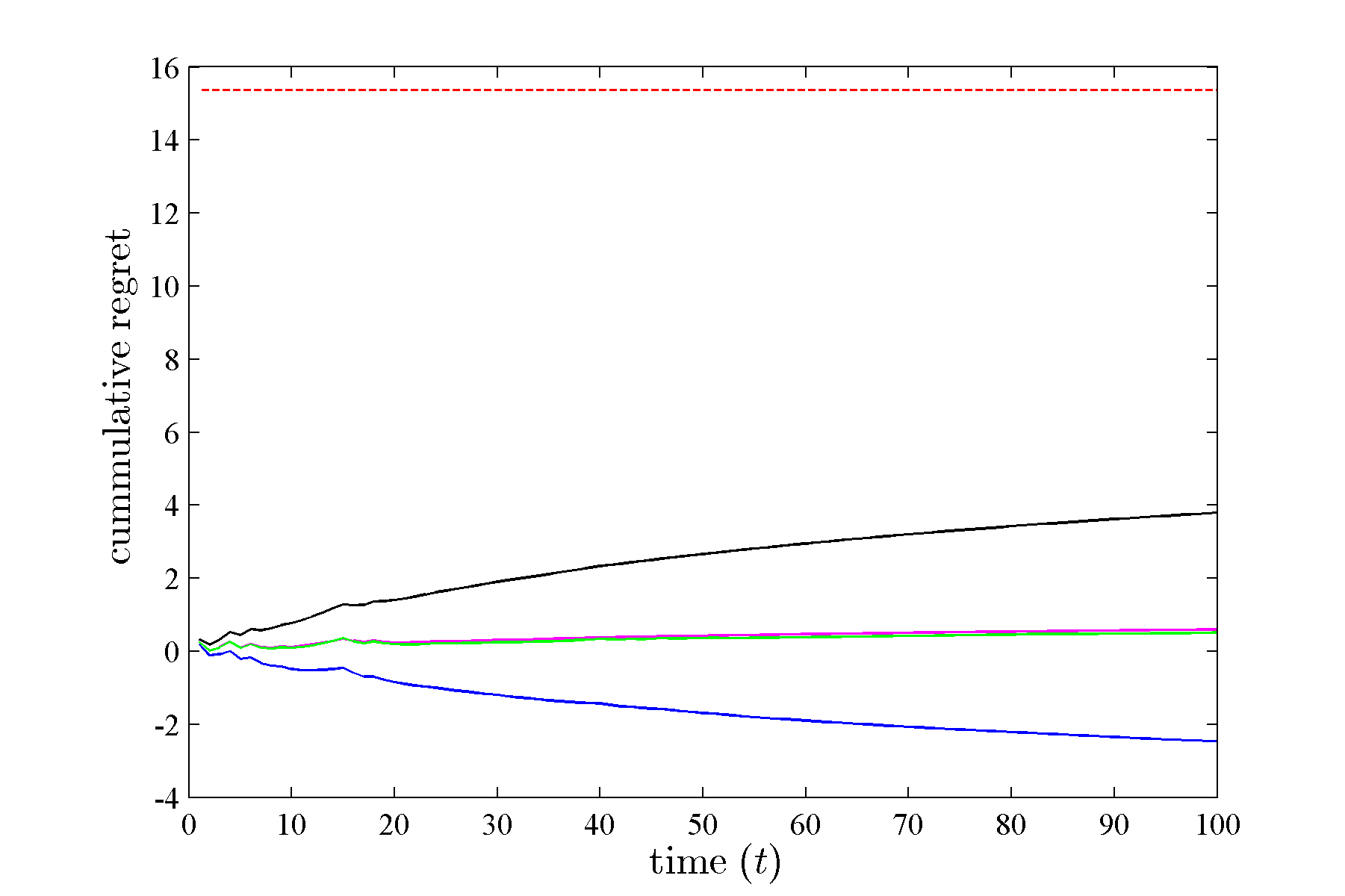}}
    \subfigure[$\eta = 0.5, \{E_t\}_{\text{set.}3}$]{\label{fig:p0d5_n0d5_e3}
      \includegraphics[width=0.32\linewidth]{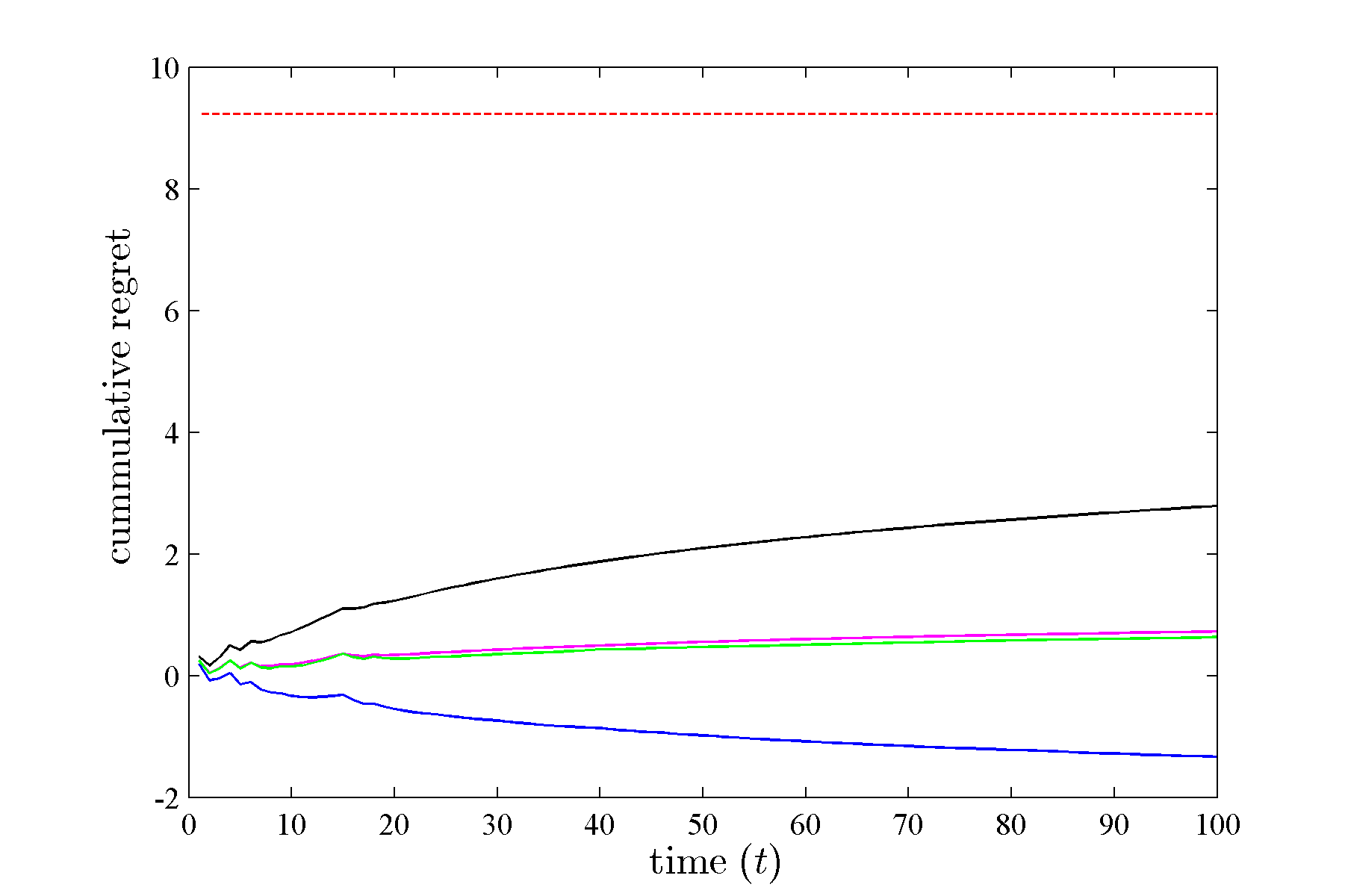}}
  }
\end{figure}

\begin{figure}[htbp]
\floatconts
  {fig:p0d7}
  {\caption{Cumulative regret of the Aggregating Algorithm over the outcome sequence $\{y_t\}_{p=0.7}$ for different choices of substitution functions (Best look ahead(\textcolor{blue}{---}), Worst look ahead(\textcolor{black}{---}), Inverse loss(\textcolor{green}{---}), and Weighted average(\textcolor{magenta}{---})) with learning rate $\eta$ and expert setting $\{E_t\}_i$ (theoretical regret bound is shown by \textcolor{red}{- - -}).}}
  {
    \subfigure[$\eta = 0.1, \{E_t\}_{\text{set.}1}$]{\label{fig:p0d7_n0d1_e1}
      \includegraphics[width=0.32\linewidth]{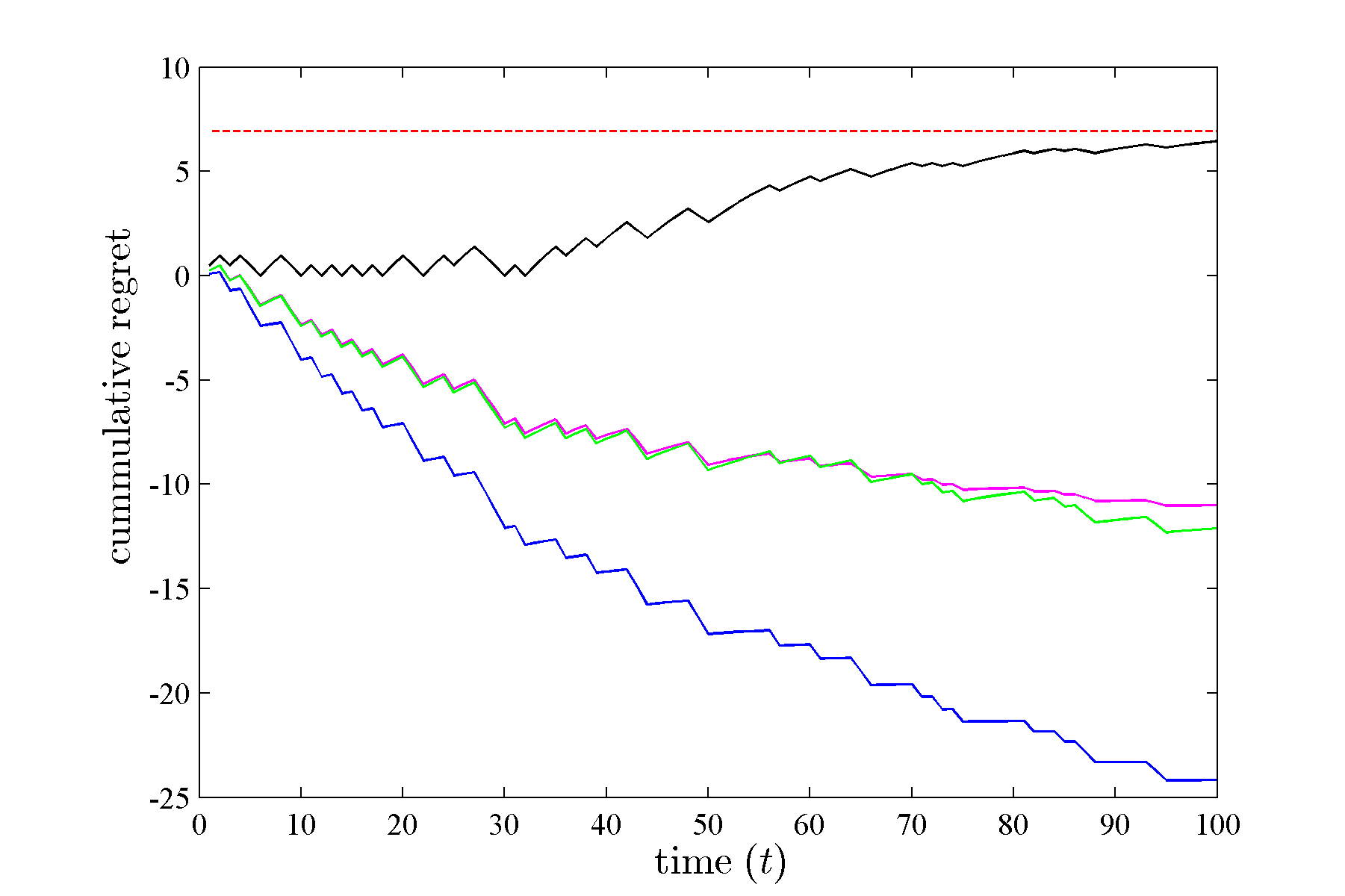}}
    \subfigure[$\eta = 0.3, \{E_t\}_{\text{set.}1}$]{\label{fig:p0d7_n0d3_e1}
      \includegraphics[width=0.32\linewidth]{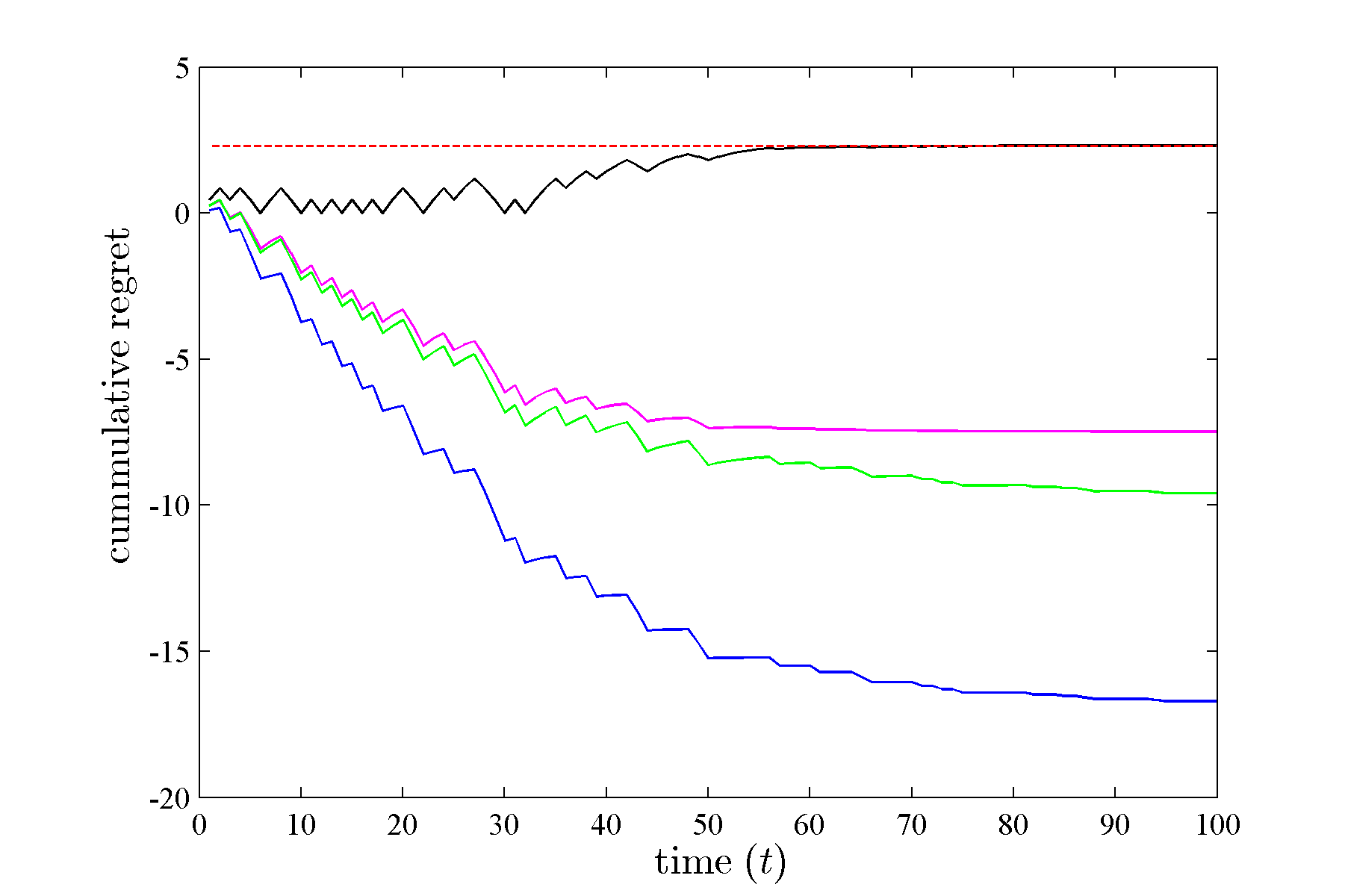}}
    \subfigure[$\eta = 0.5, \{E_t\}_{\text{set.}1}$]{\label{fig:p0d7_n0d5_e1}
      \includegraphics[width=0.32\linewidth]{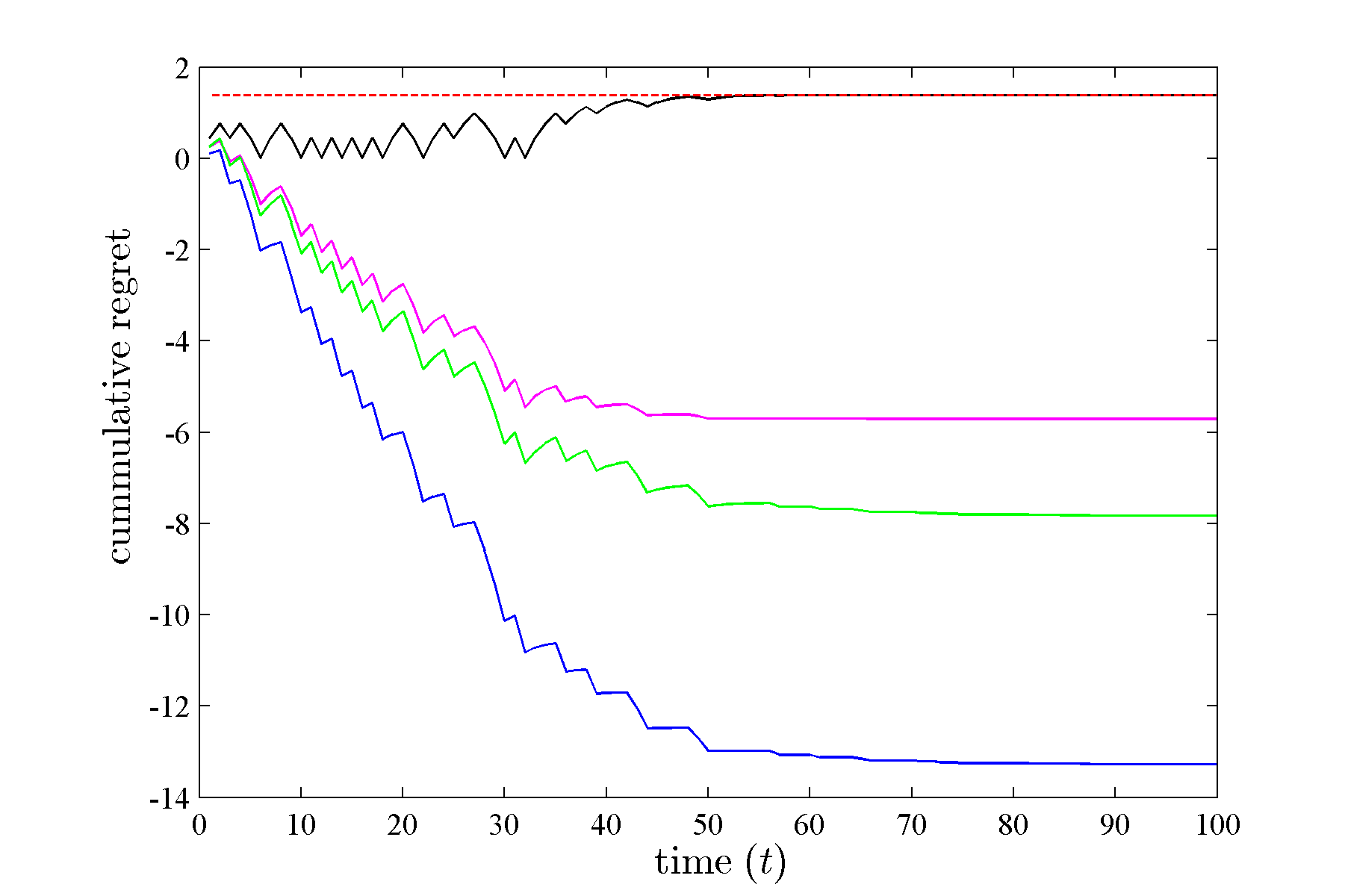}}

    \subfigure[$\eta = 0.1, \{E_t\}_{\text{set.}2}$]{\label{fig:p0d7_n0d1_e2}
      \includegraphics[width=0.32\linewidth]{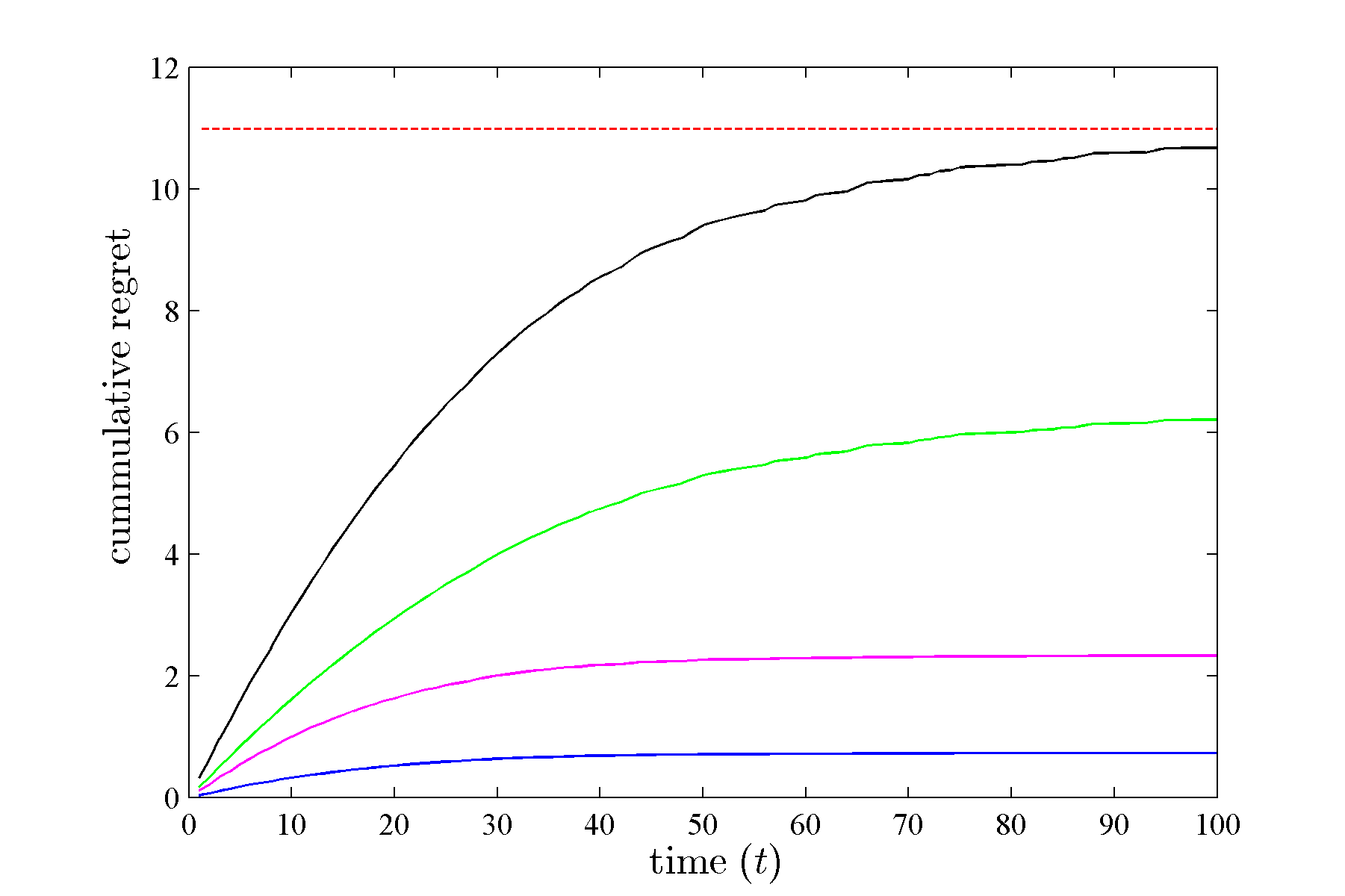}}
    \subfigure[$\eta = 0.3, \{E_t\}_{\text{set.}2}$]{\label{fig:p0d7_n0d3_e2}
      \includegraphics[width=0.32\linewidth]{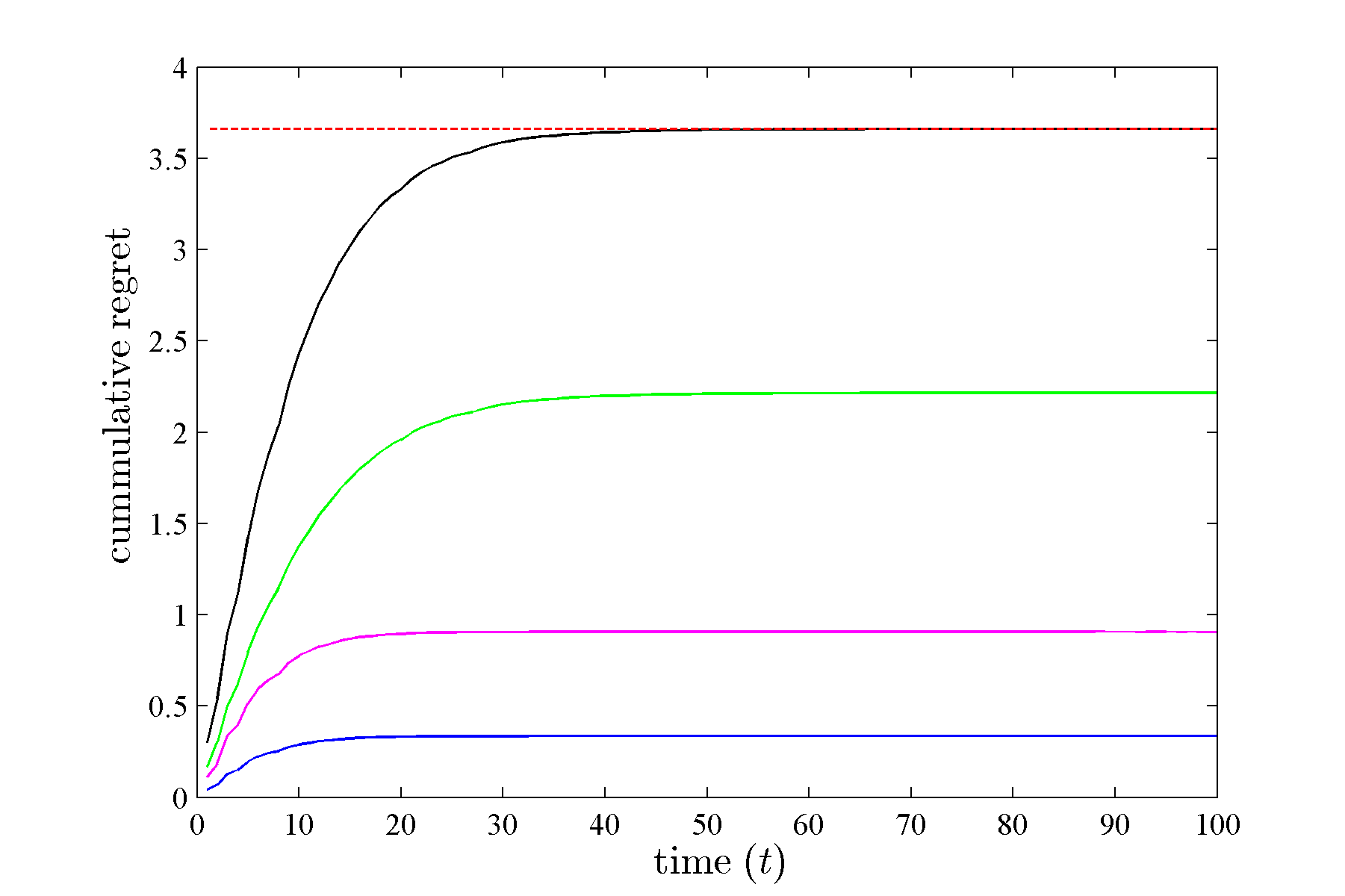}}
    \subfigure[$\eta = 0.5, \{E_t\}_{\text{set.}2}$]{\label{fig:p0d7_n0d5_e2}
      \includegraphics[width=0.32\linewidth]{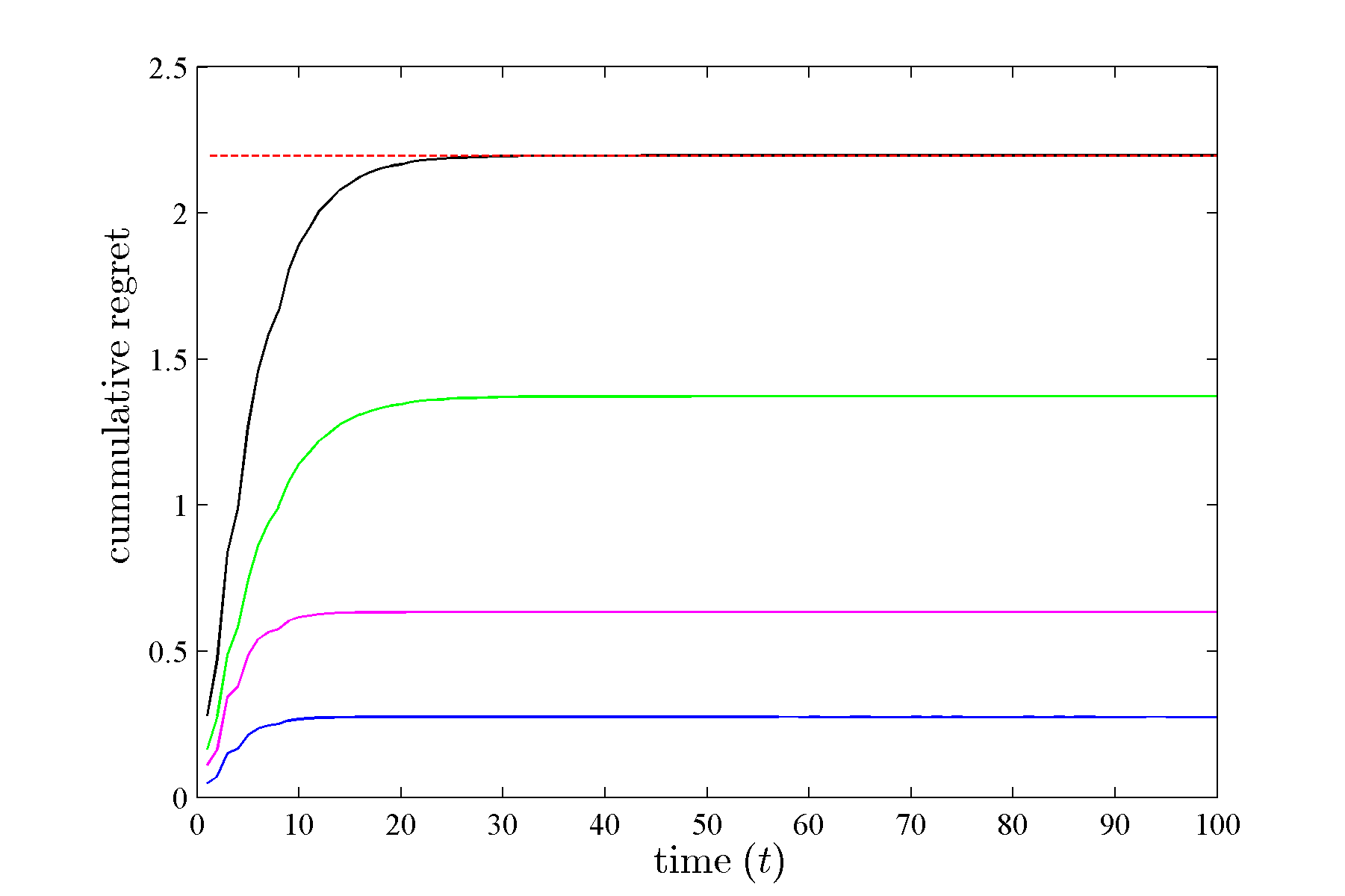}}

    \subfigure[$\eta = 0.1, \{E_t\}_{\text{set.}3}$]{\label{fig:p0d7_n0d1_e3}
      \includegraphics[width=0.32\linewidth]{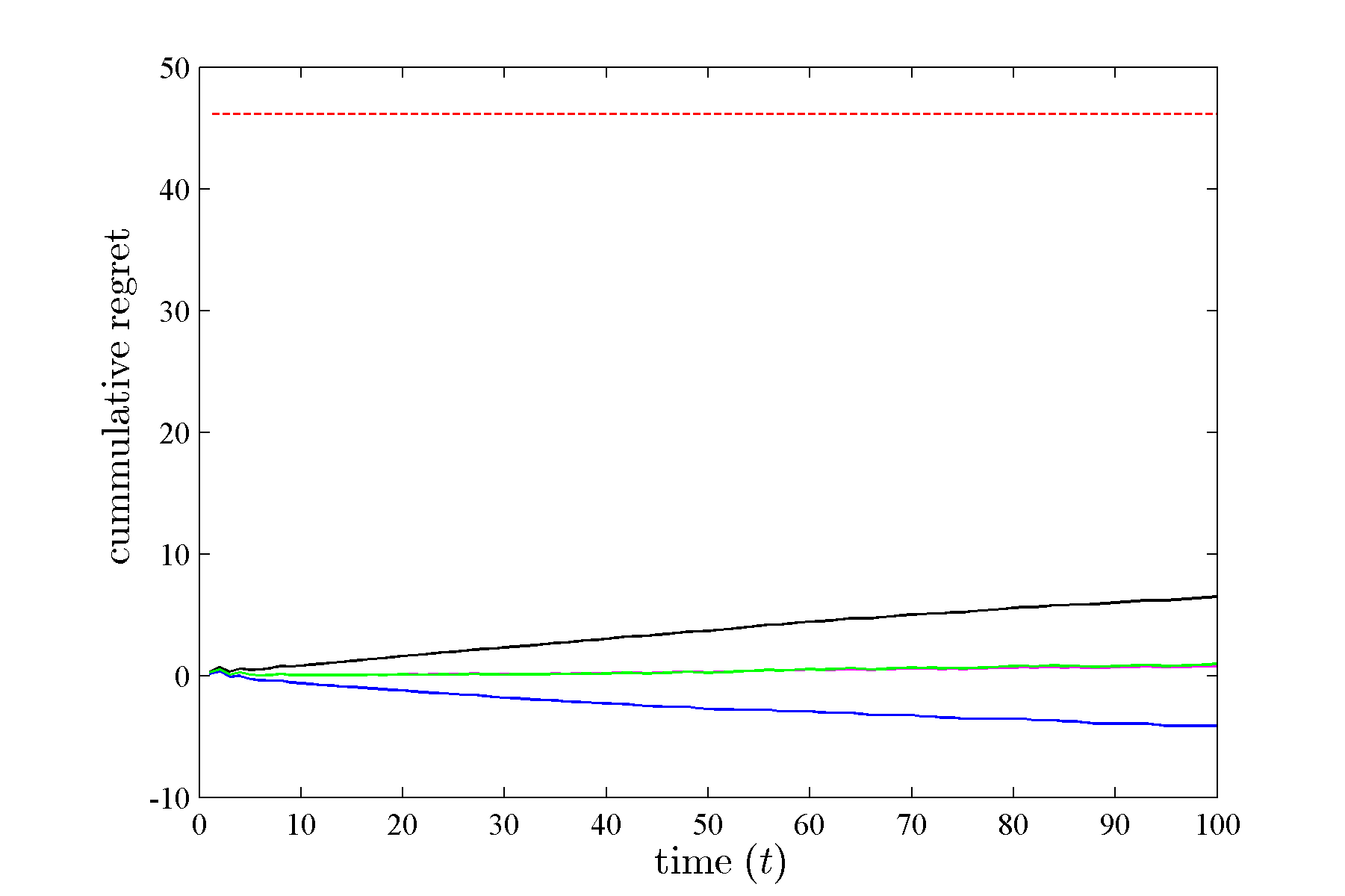}}
    \subfigure[$\eta = 0.3, \{E_t\}_{\text{set.}3}$]{\label{fig:p0d7_n0d3_e3}
      \includegraphics[width=0.32\linewidth]{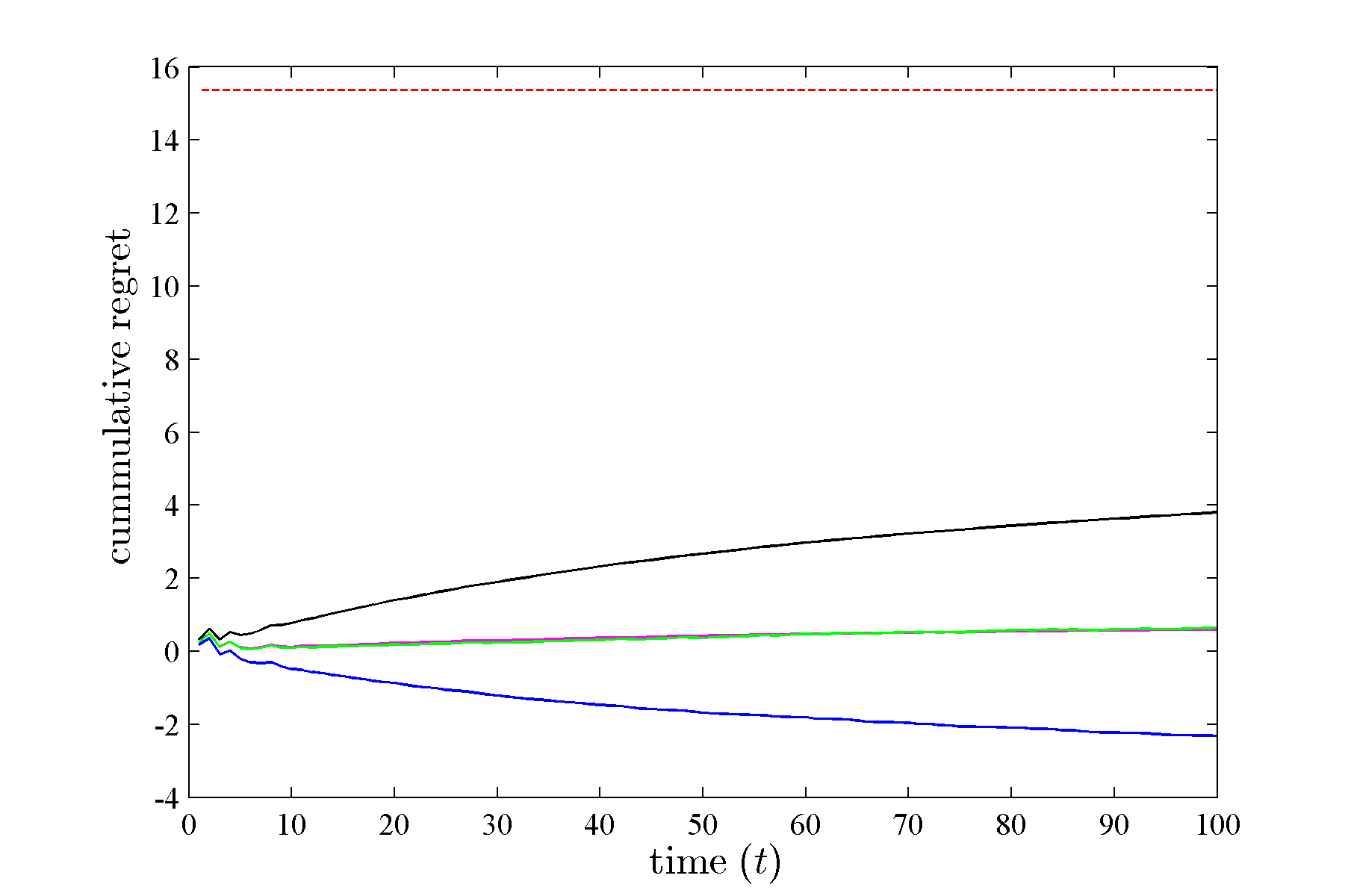}}
    \subfigure[$\eta = 0.5, \{E_t\}_{\text{set.}3}$]{\label{fig:p0d7_n0d5_e3}
      \includegraphics[width=0.32\linewidth]{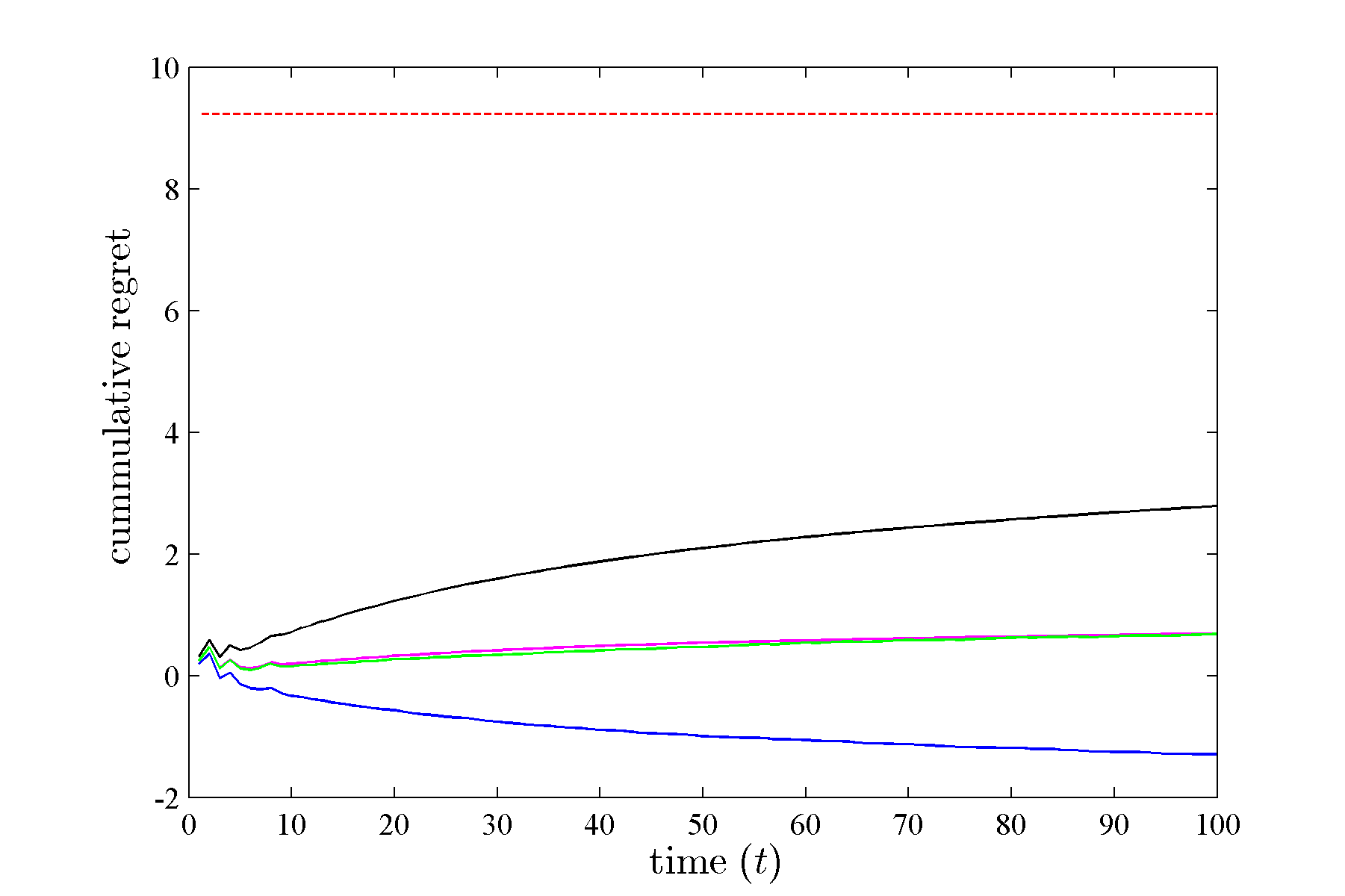}}
  }
\end{figure}

\begin{figure}[htbp]
\floatconts
  {fig:p0d9}
  {\caption{Cumulative regret of the Aggregating Algorithm over the outcome sequence $\{y_t\}_{p=0.9}$ for different choices of substitution functions (Best look ahead(\textcolor{blue}{---}), Worst look ahead(\textcolor{black}{---}), Inverse loss(\textcolor{green}{---}), and Weighted average(\textcolor{magenta}{---})) with learning rate $\eta$ and expert setting $\{E_t\}_i$ (theoretical regret bound is shown by \textcolor{red}{- - -}).}}
  {
    \subfigure[$\eta = 0.1, \{E_t\}_{\text{set.}1}$]{\label{fig:p0d9_n0d1_e1}
      \includegraphics[width=0.32\linewidth]{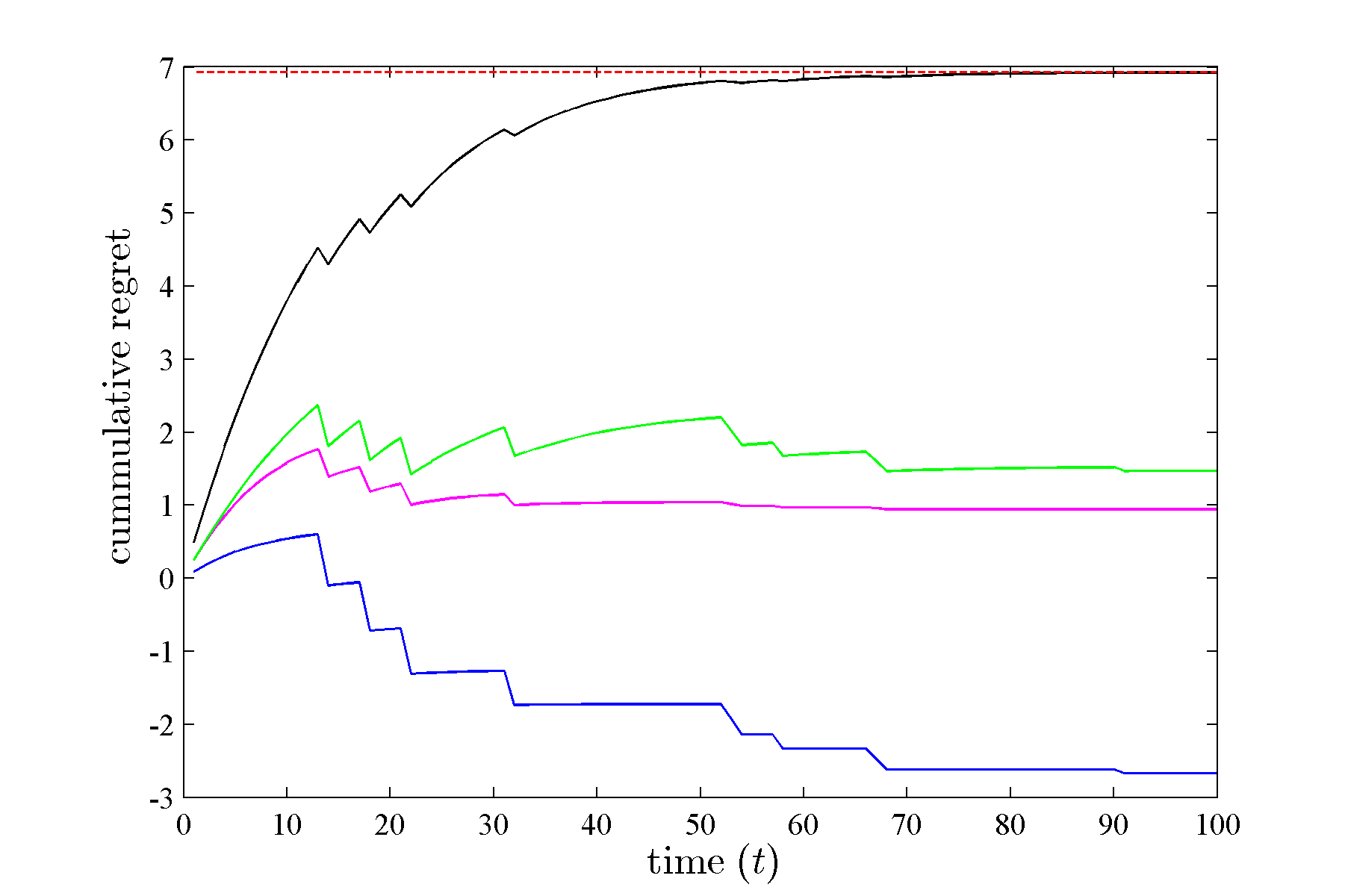}}
    \subfigure[$\eta = 0.3, \{E_t\}_{\text{set.}1}$]{\label{fig:p0d9_n0d3_e1}
      \includegraphics[width=0.32\linewidth]{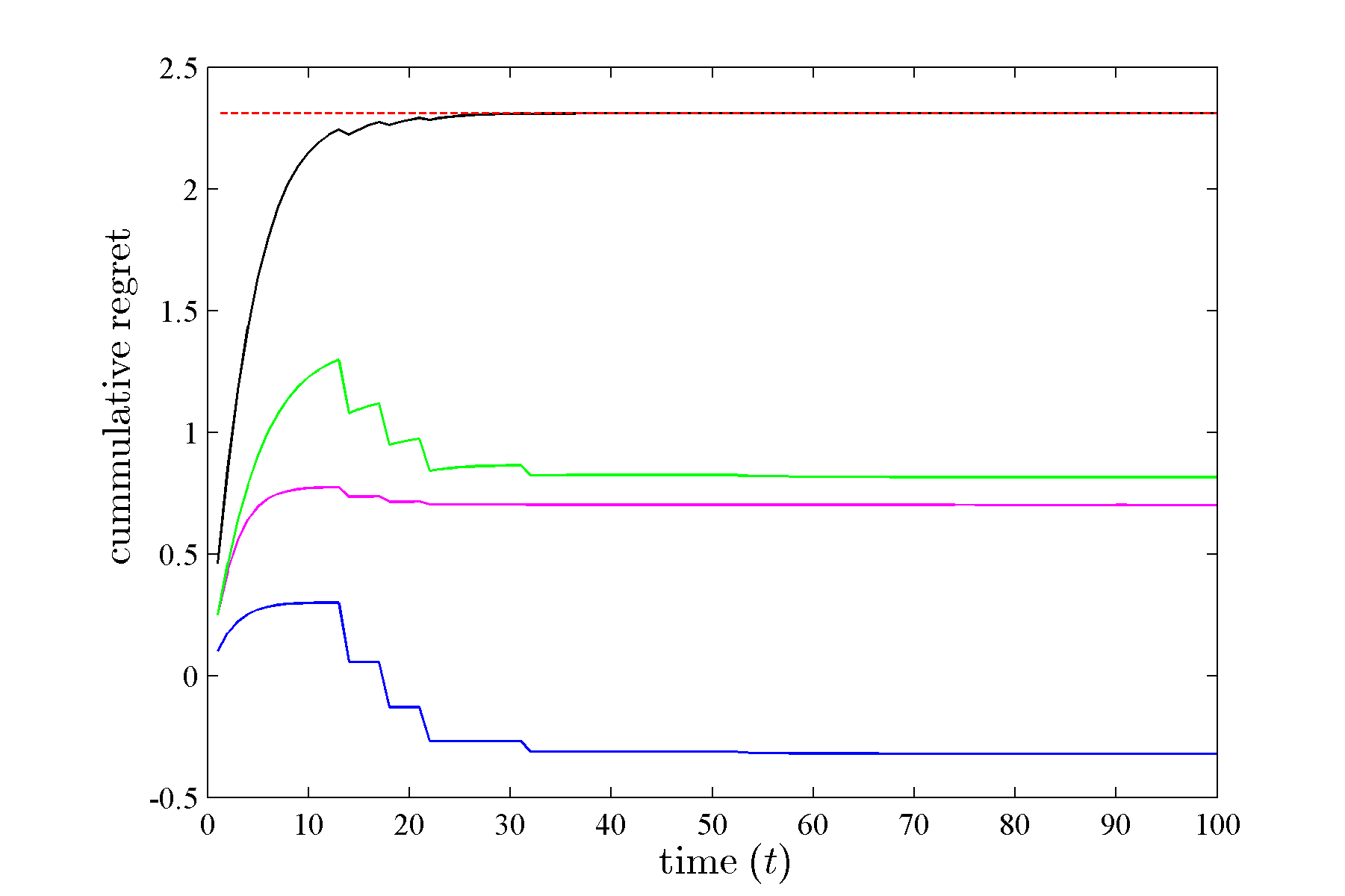}}
    \subfigure[$\eta = 0.5, \{E_t\}_{\text{set.}1}$]{\label{fig:p0d9_n0d5_e1}
      \includegraphics[width=0.32\linewidth]{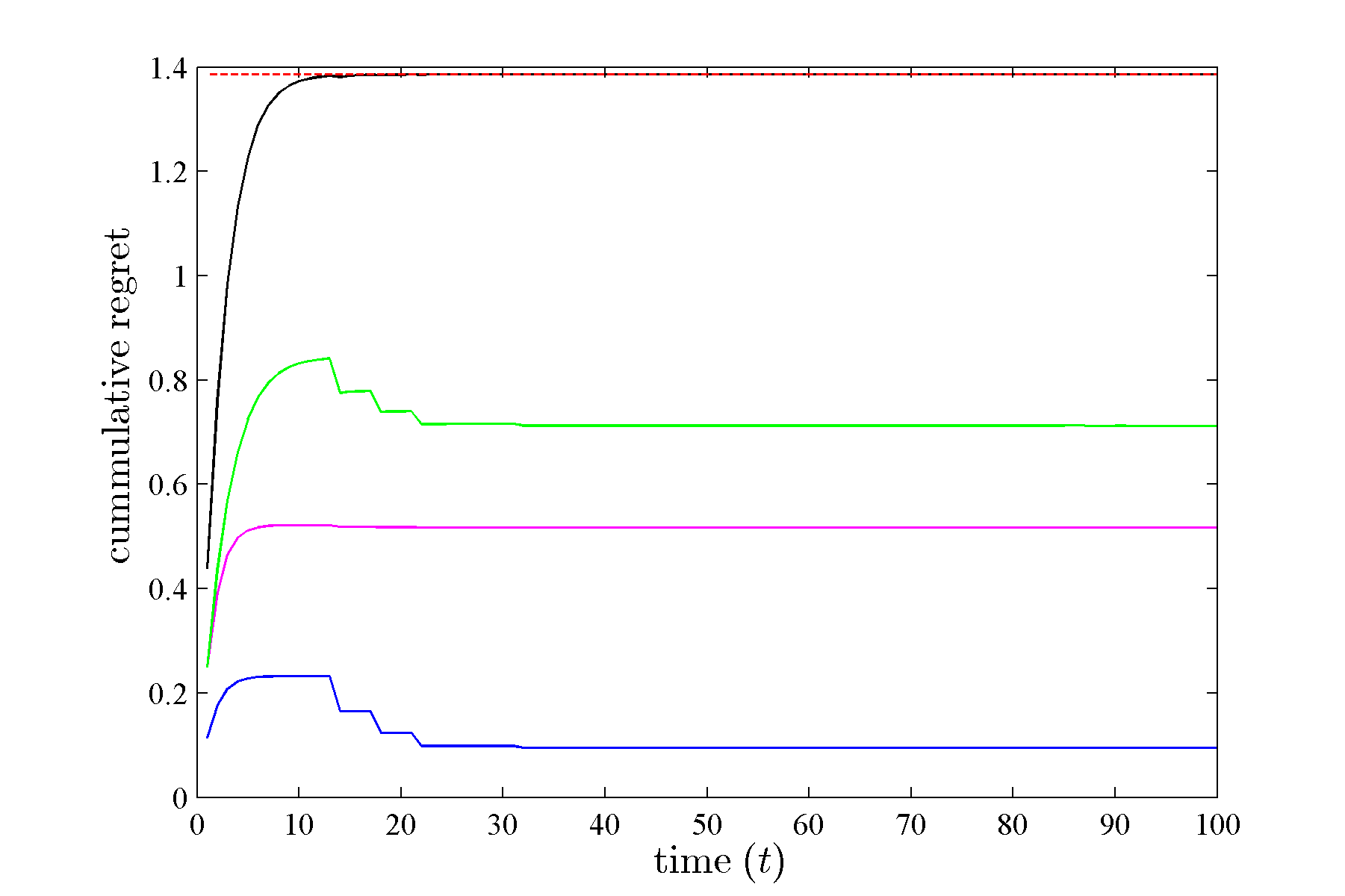}}

    \subfigure[$\eta = 0.1, \{E_t\}_{\text{set.}2}$]{\label{fig:p0d9_n0d1_e2}
      \includegraphics[width=0.32\linewidth]{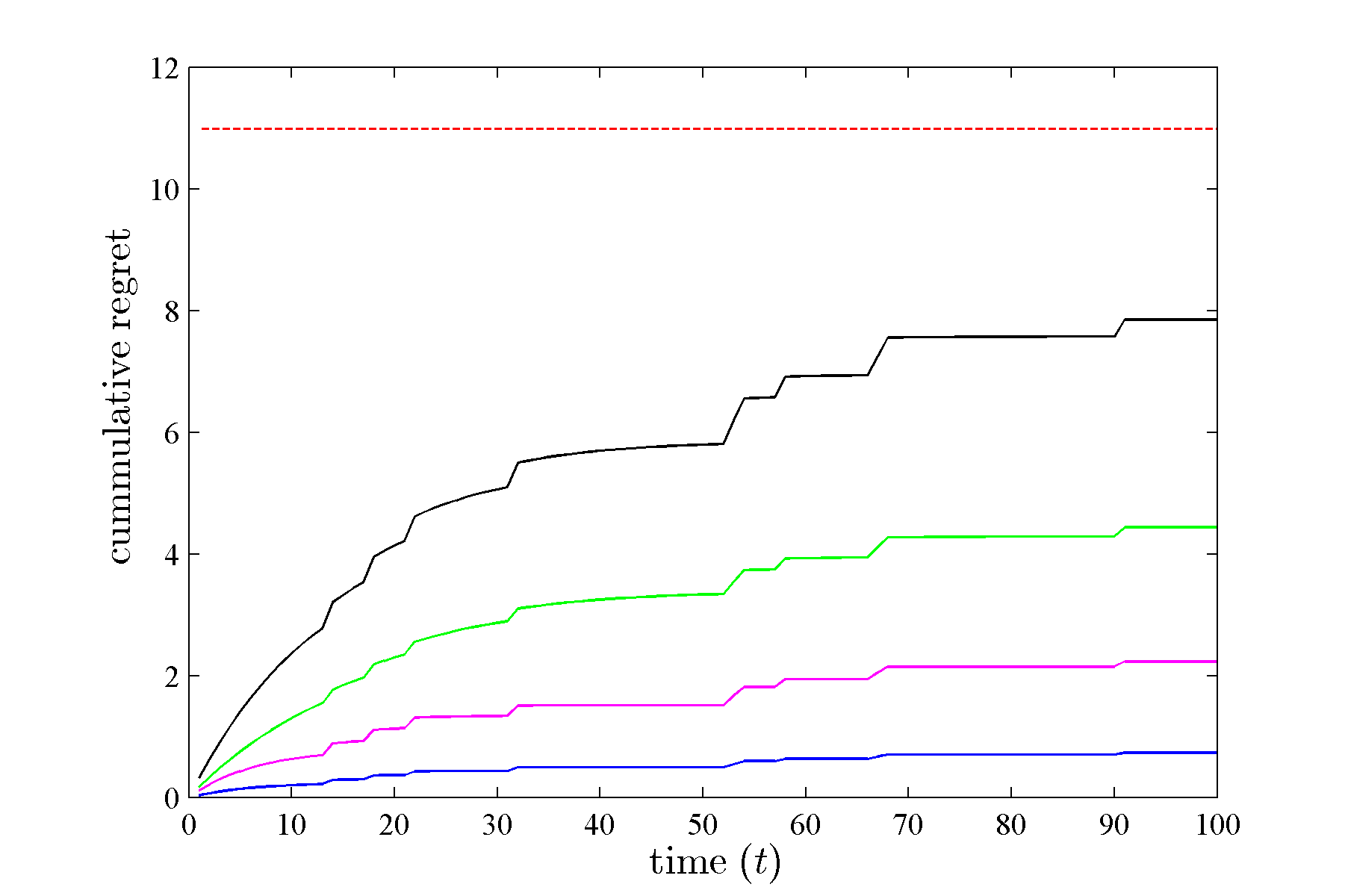}}
    \subfigure[$\eta = 0.3, \{E_t\}_{\text{set.}2}$]{\label{fig:p0d9_n0d3_e2}
      \includegraphics[width=0.32\linewidth]{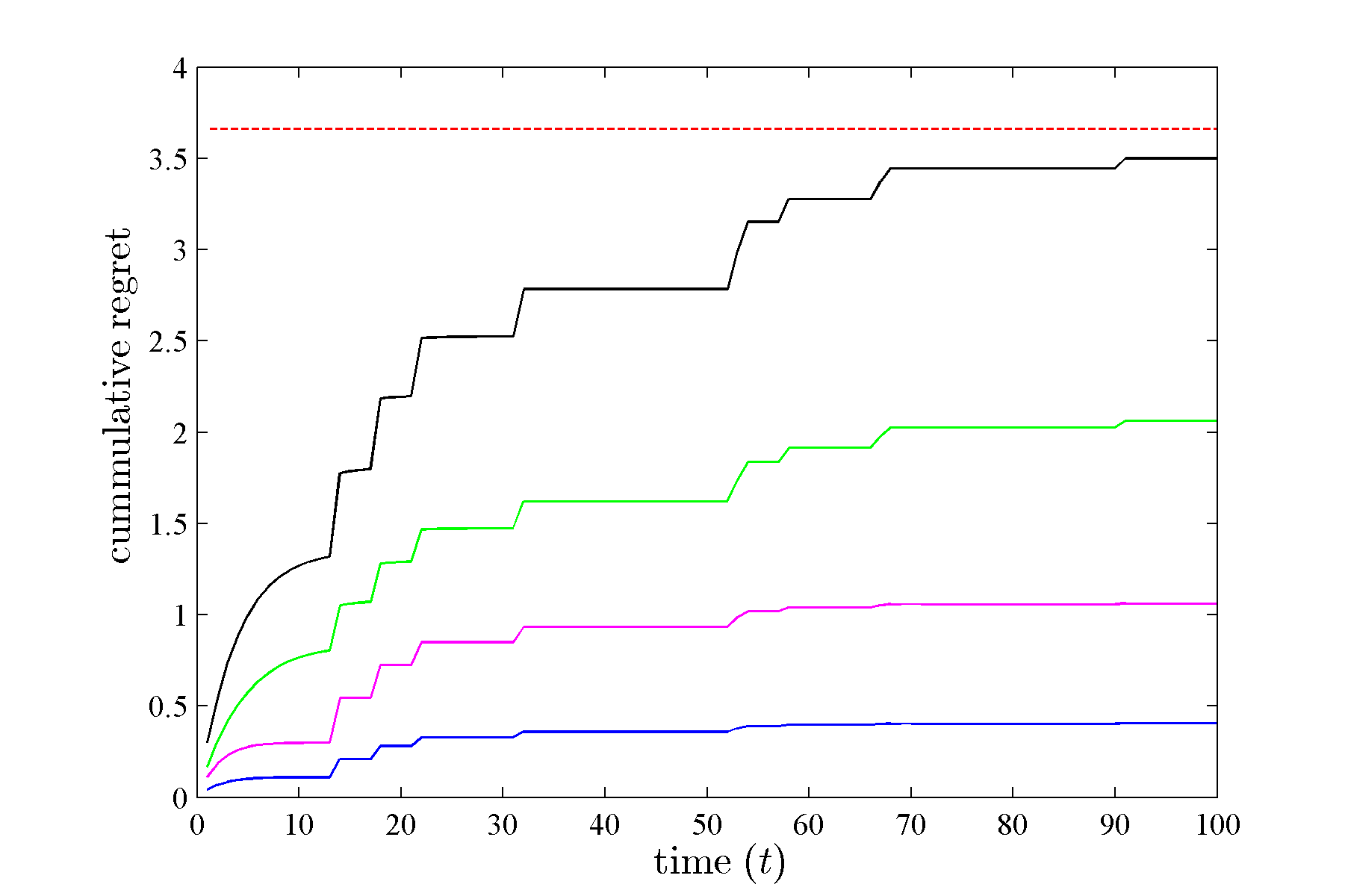}}
    \subfigure[$\eta = 0.5, \{E_t\}_{\text{set.}2}$]{\label{fig:p0d9_n0d5_e2}
      \includegraphics[width=0.32\linewidth]{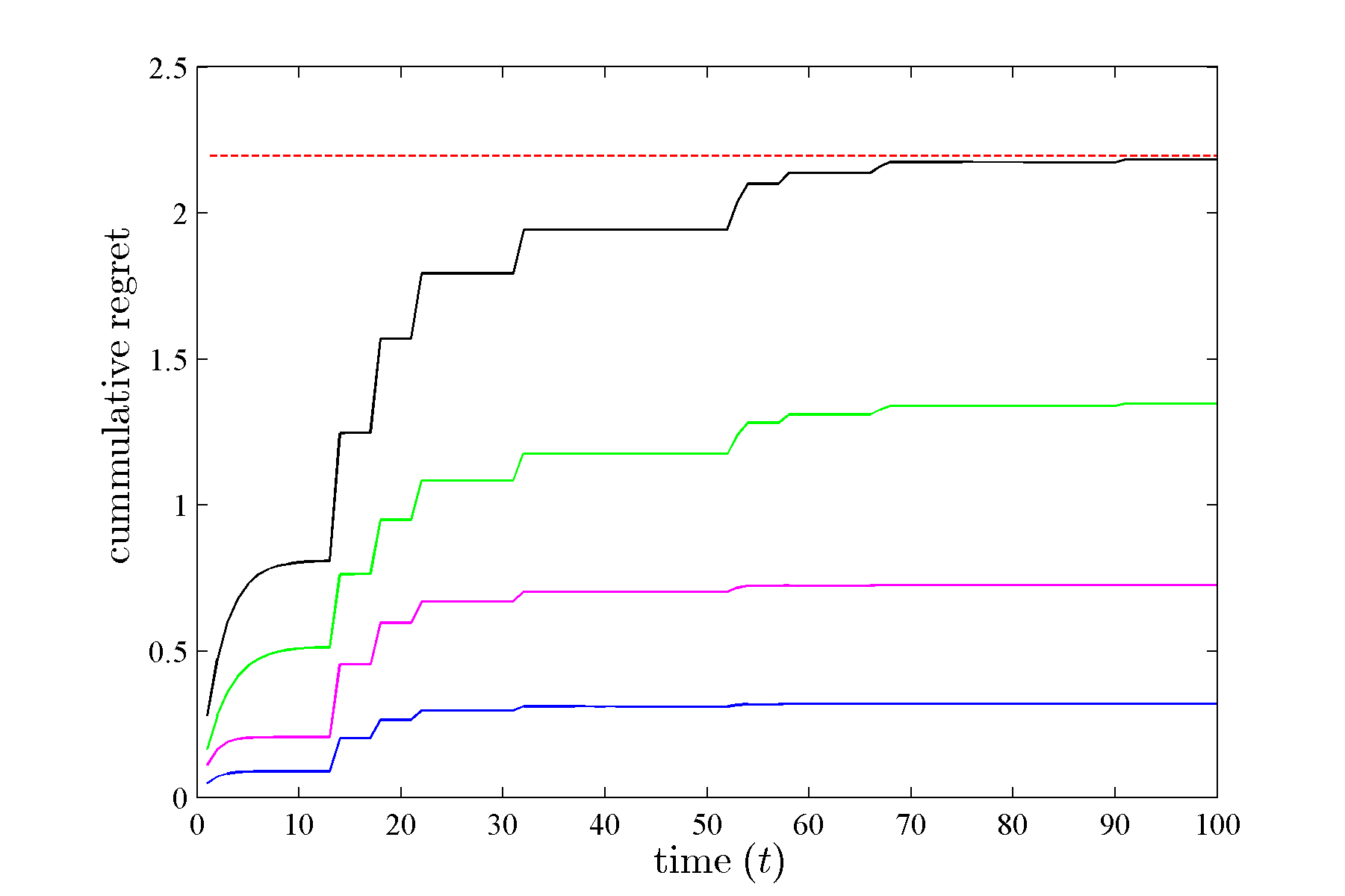}}

    \subfigure[$\eta = 0.1, \{E_t\}_{\text{set.}3}$]{\label{fig:p0d9_n0d1_e3}
      \includegraphics[width=0.32\linewidth]{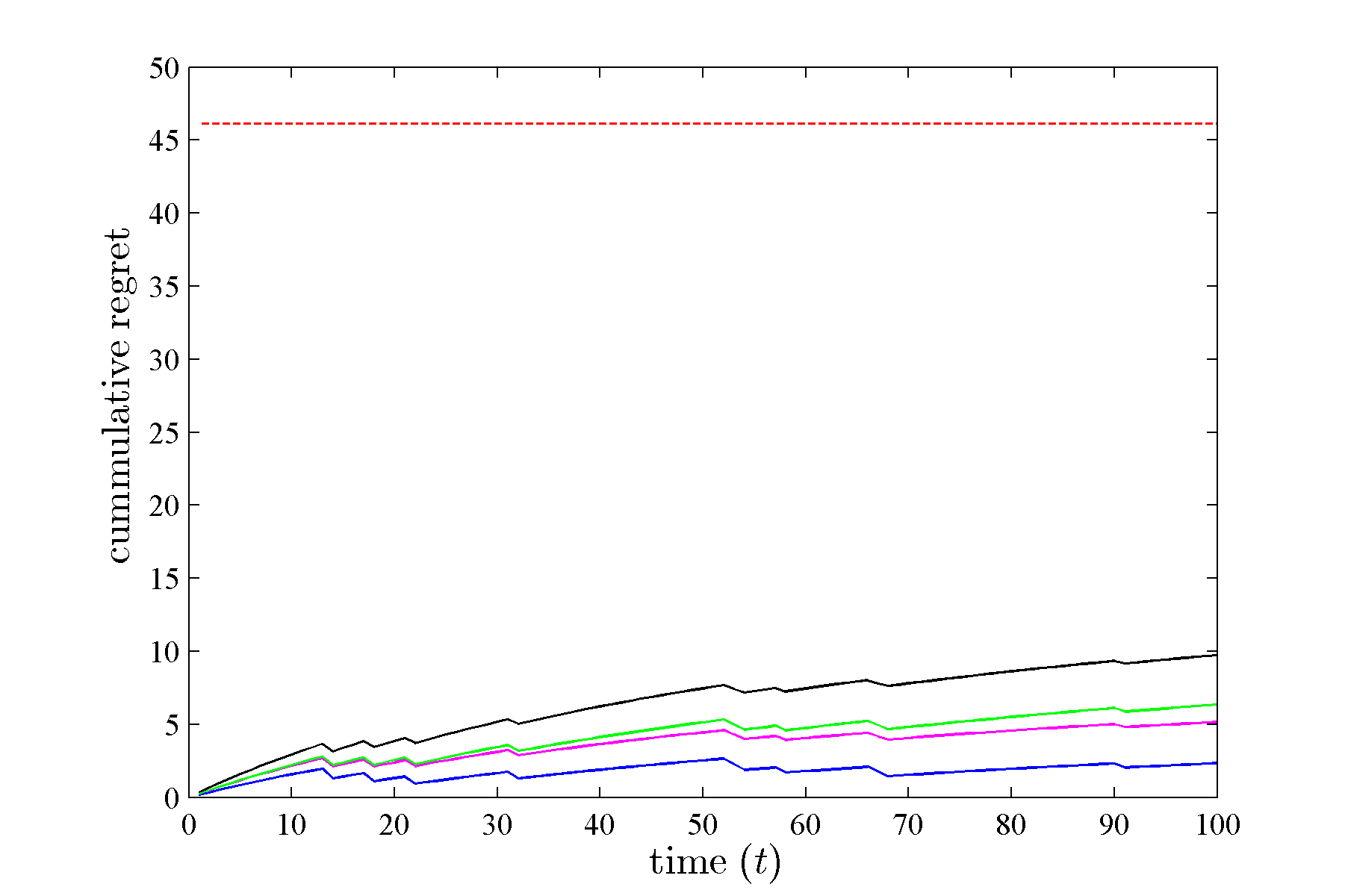}}
    \subfigure[$\eta = 0.3, \{E_t\}_{\text{set.}3}$]{\label{fig:p0d9_n0d3_e3}
      \includegraphics[width=0.32\linewidth]{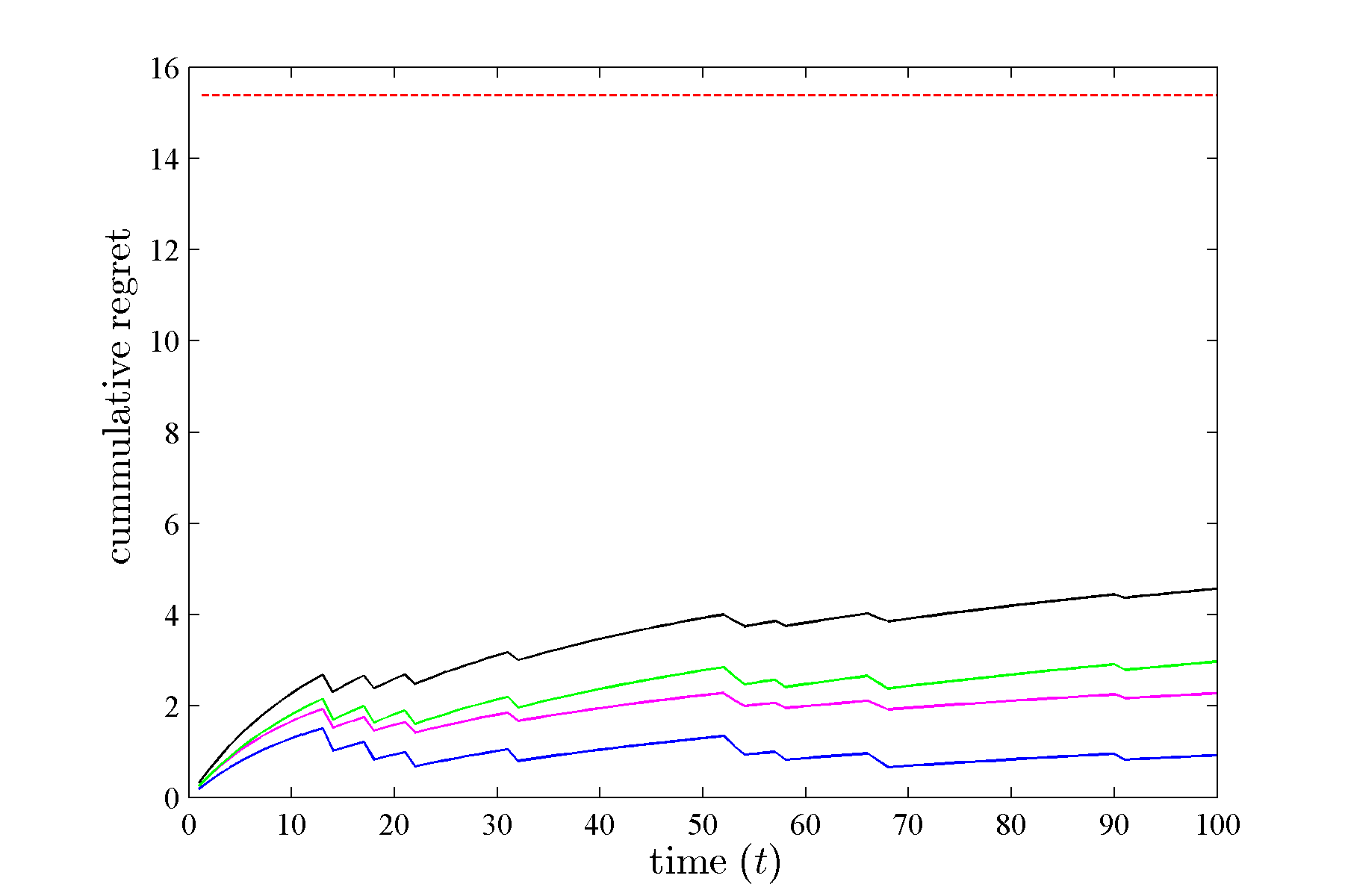}}
    \subfigure[$\eta = 0.5, \{E_t\}_{\text{set.}3}$]{\label{fig:p0d9_n0d5_e3}
      \includegraphics[width=0.32\linewidth]{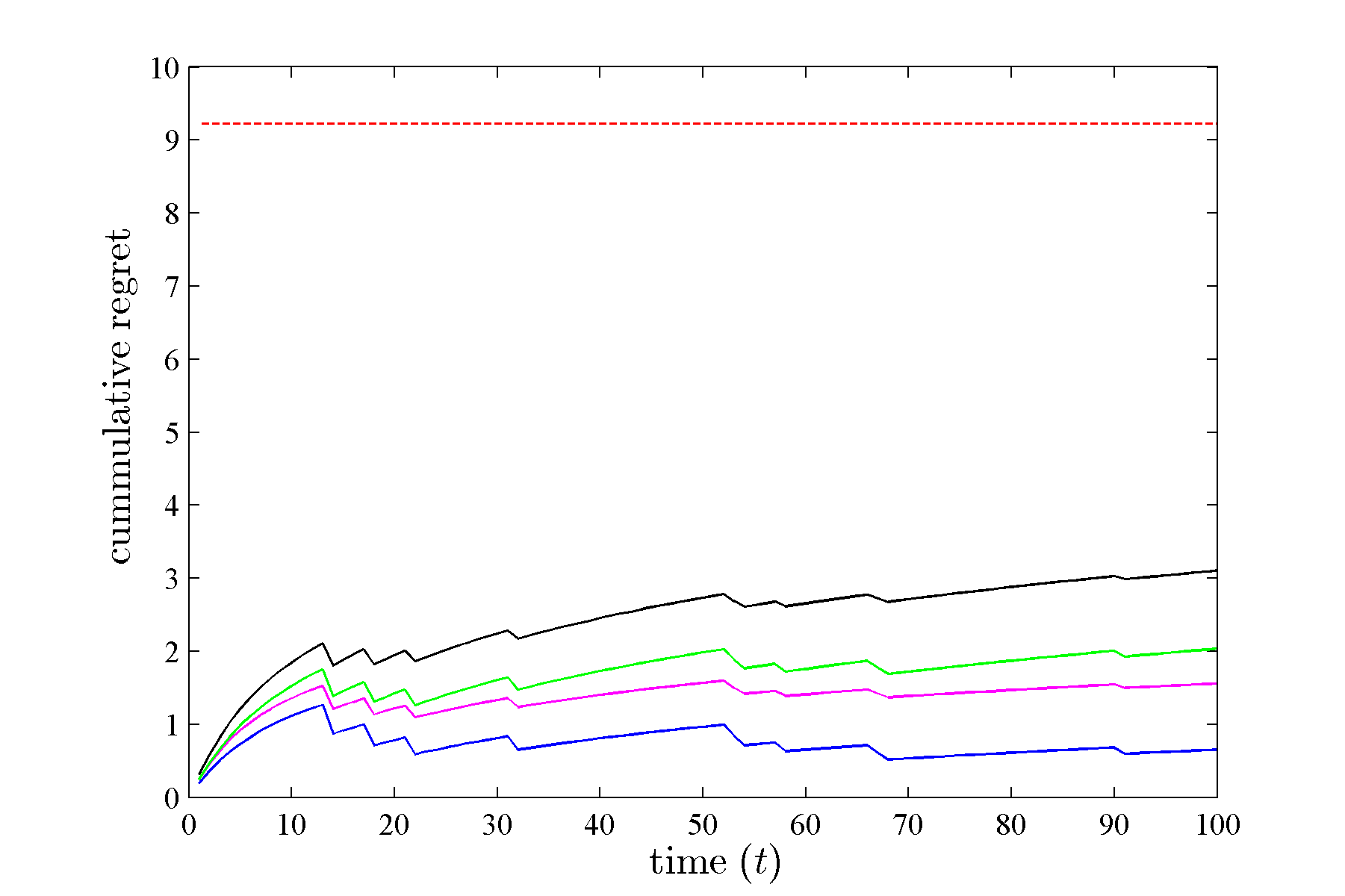}}
  }
\end{figure}

\begin{figure}[htbp]
\floatconts
  {fig:p1d0}
  {\caption{Cumulative regret of the Aggregating Algorithm over the outcome sequence $\{y_t\}_{p=1.0}$ for different choices of substitution functions (Best look ahead(\textcolor{blue}{---}), Worst look ahead(\textcolor{black}{---}), Inverse loss(\textcolor{green}{---}), and Weighted average(\textcolor{magenta}{---})) with learning rate $\eta$ and expert setting $\{E_t\}_i$ (theoretical regret bound is shown by \textcolor{red}{- - -}).}}
  {
    \subfigure[$\eta = 0.1, \{E_t\}_{\text{set.}1}$]{\label{fig:p0d7_n1d0_e1}
      \includegraphics[width=0.32\linewidth]{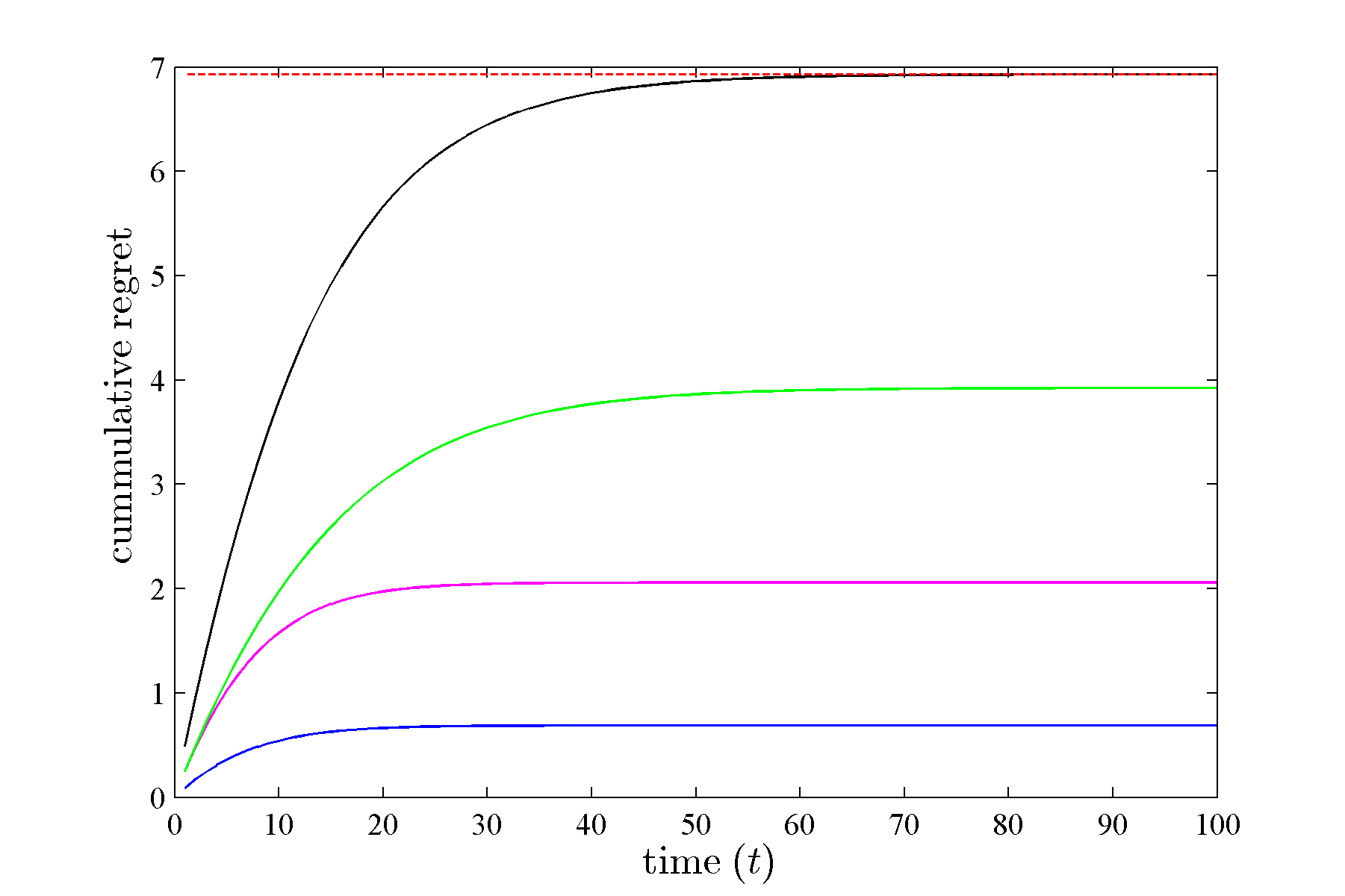}}
    \subfigure[$\eta = 0.3, \{E_t\}_{\text{set.}1}$]{\label{fig:p0d7_n1d0_e1}
      \includegraphics[width=0.32\linewidth]{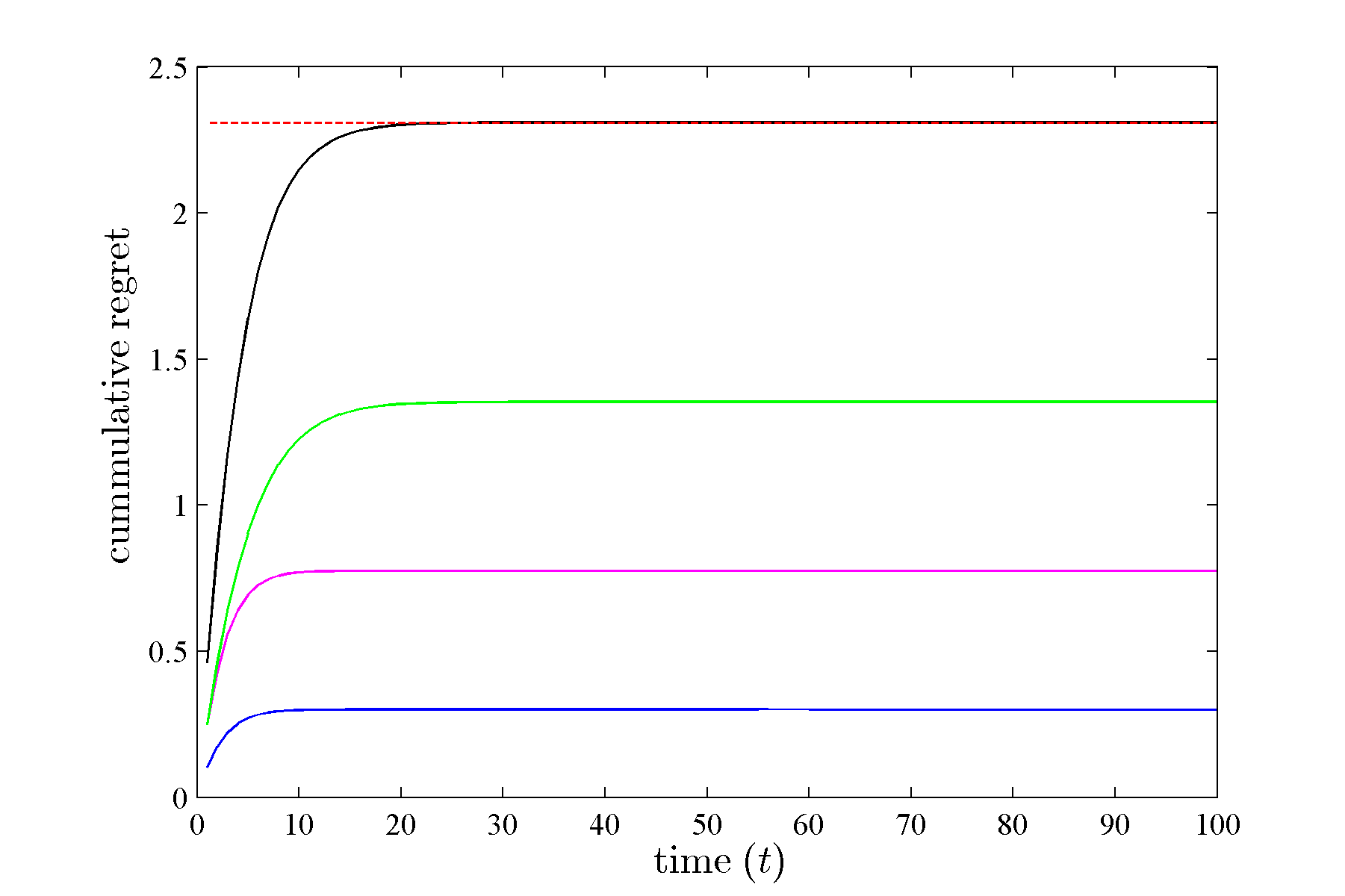}}
    \subfigure[$\eta = 0.5, \{E_t\}_{\text{set.}1}$]{\label{fig:p0d7_n1d0_e1}
      \includegraphics[width=0.32\linewidth]{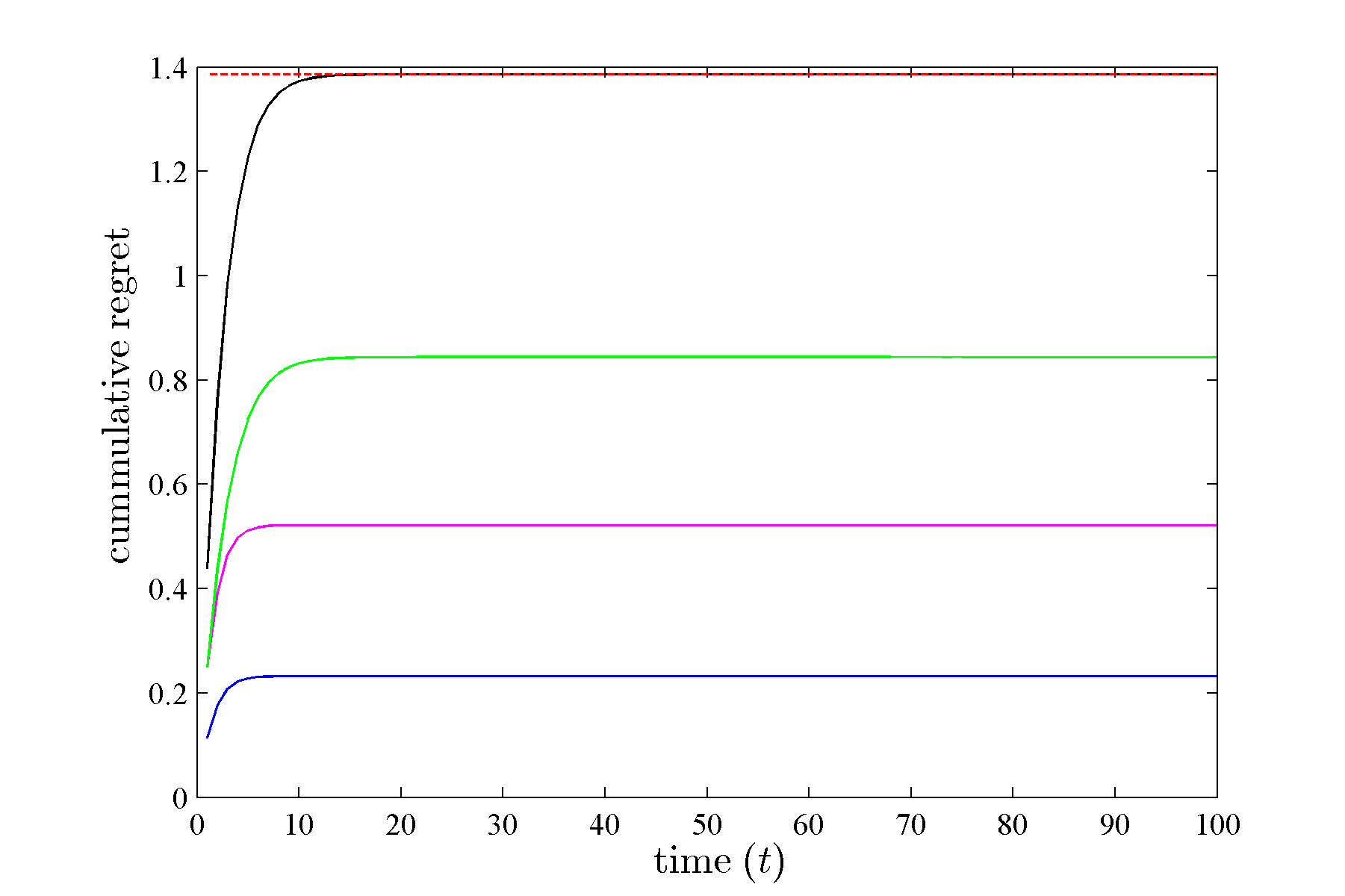}}

    \subfigure[$\eta = 0.1, \{E_t\}_{\text{set.}2}$]{\label{fig:p0d7_n1d0_e2}
      \includegraphics[width=0.32\linewidth]{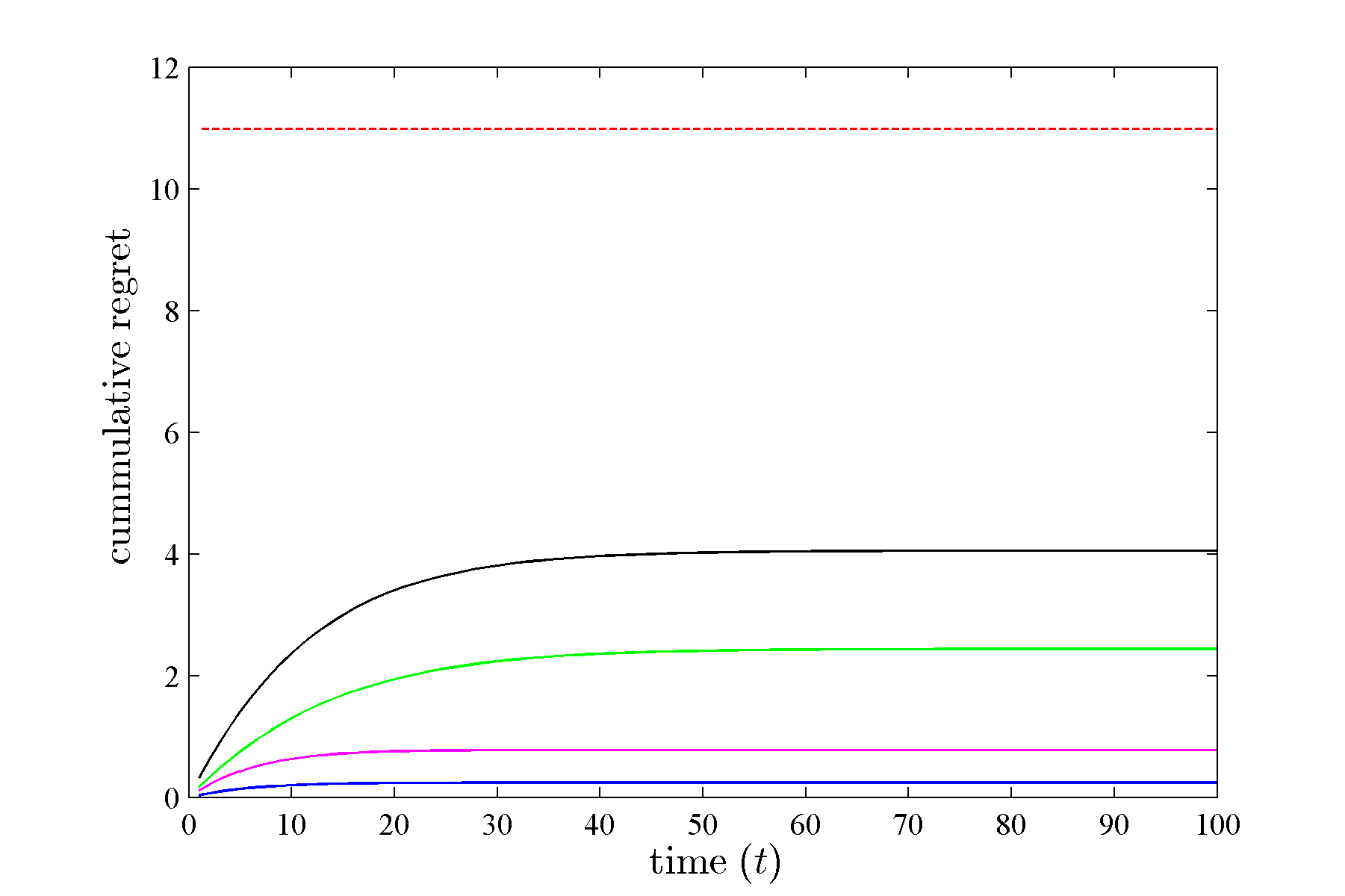}}
    \subfigure[$\eta = 0.3, \{E_t\}_{\text{set.}2}$]{\label{fig:p0d7_n1d0_e2}
      \includegraphics[width=0.32\linewidth]{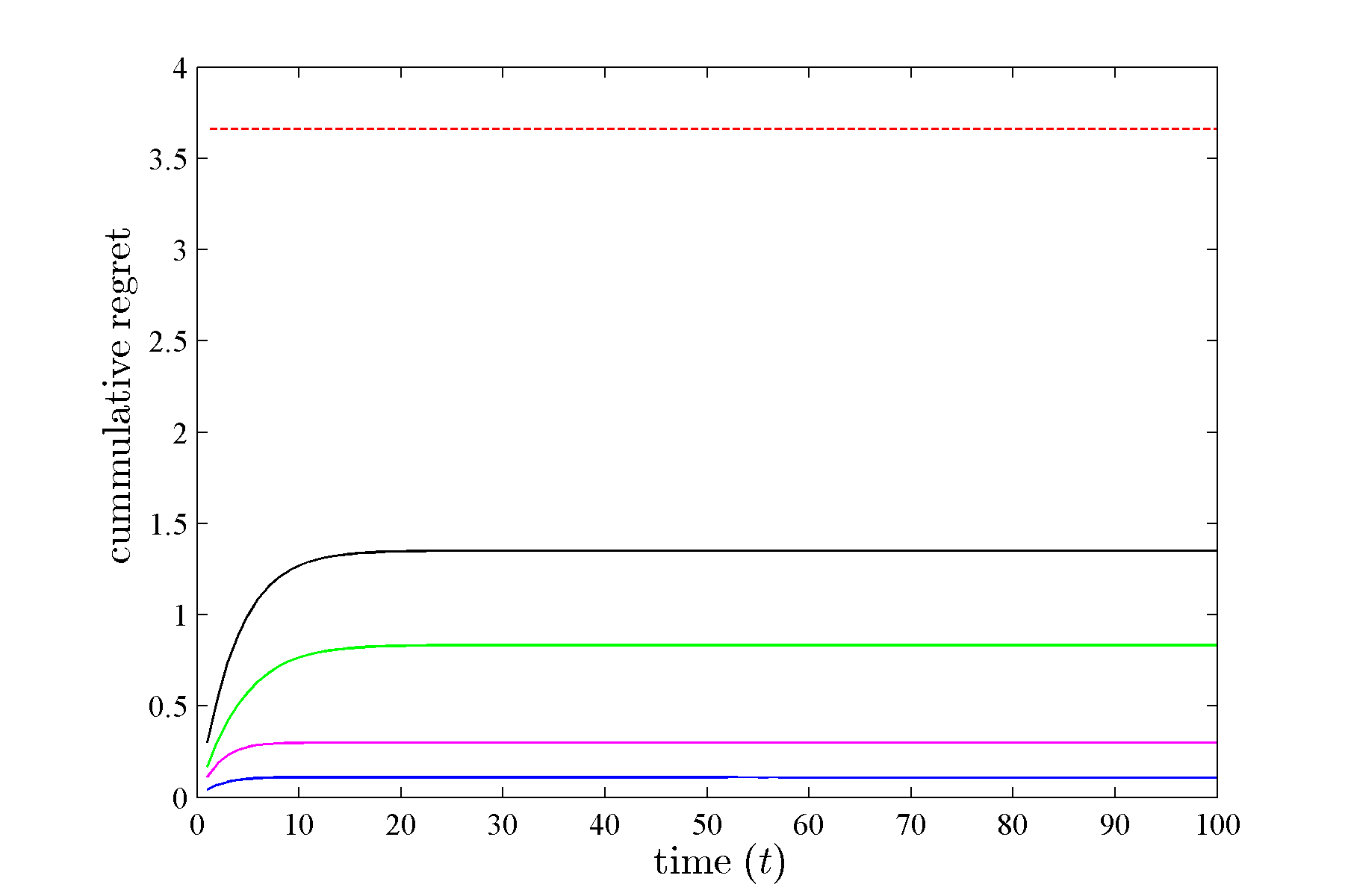}}
    \subfigure[$\eta = 0.5, \{E_t\}_{\text{set.}2}$]{\label{fig:p0d7_n1d0_e2}
      \includegraphics[width=0.32\linewidth]{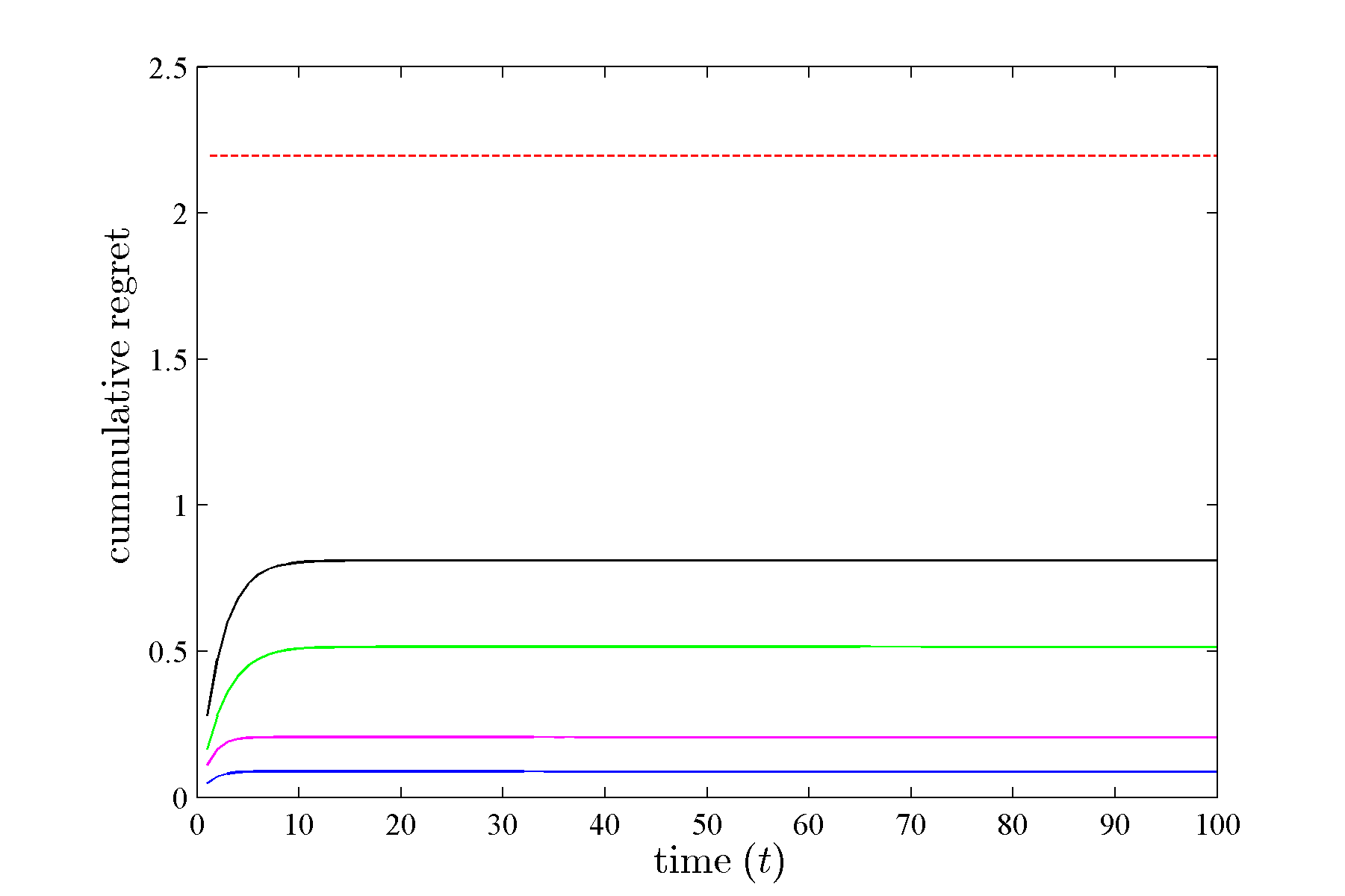}}

    \subfigure[$\eta = 0.1, \{E_t\}_{\text{set.}3}$]{\label{fig:p0d7_n1d0_e3}
      \includegraphics[width=0.32\linewidth]{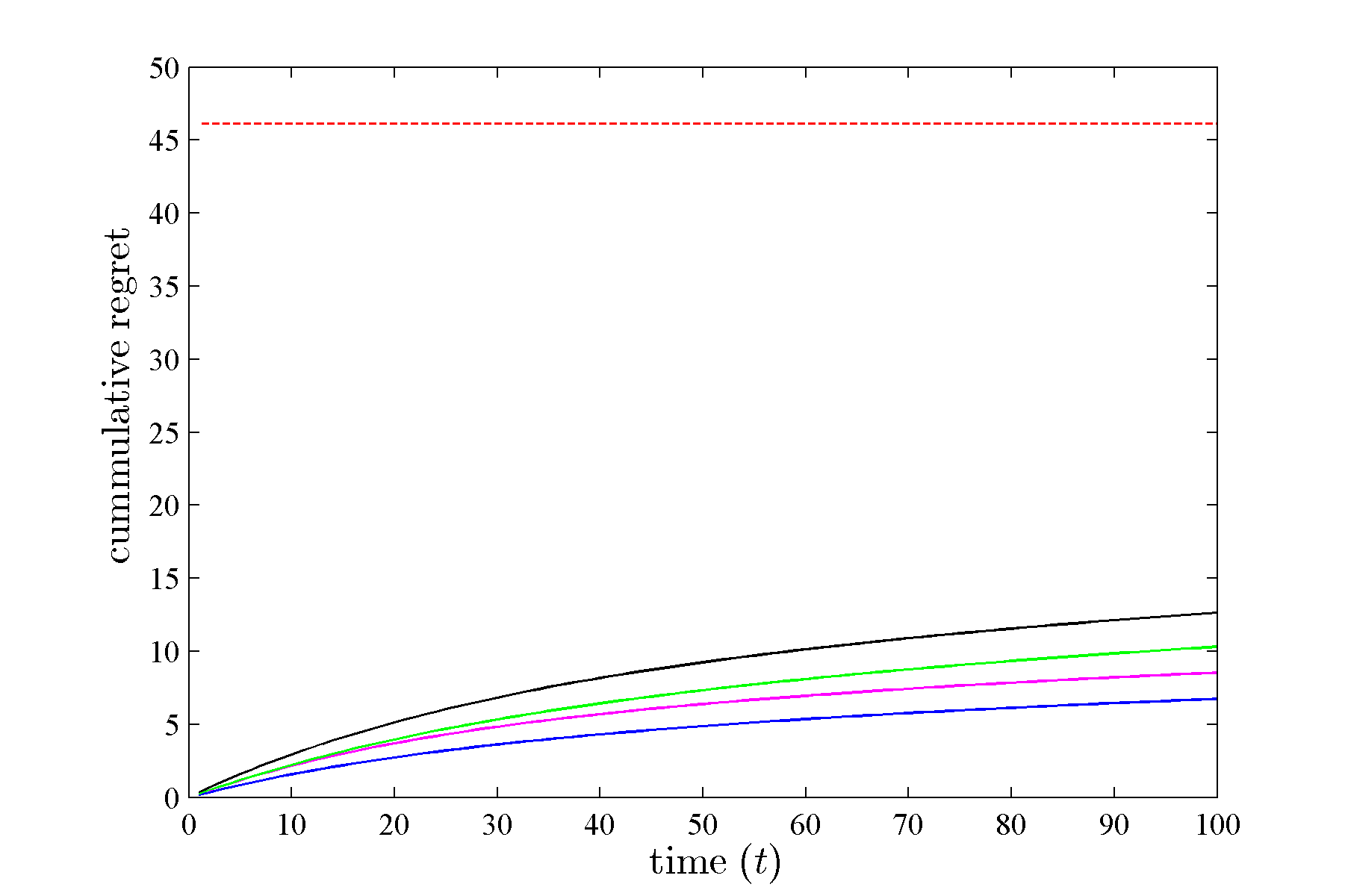}}
    \subfigure[$\eta = 0.3, \{E_t\}_{\text{set.}3}$]{\label{fig:p0d7_n1d0_e3}
      \includegraphics[width=0.32\linewidth]{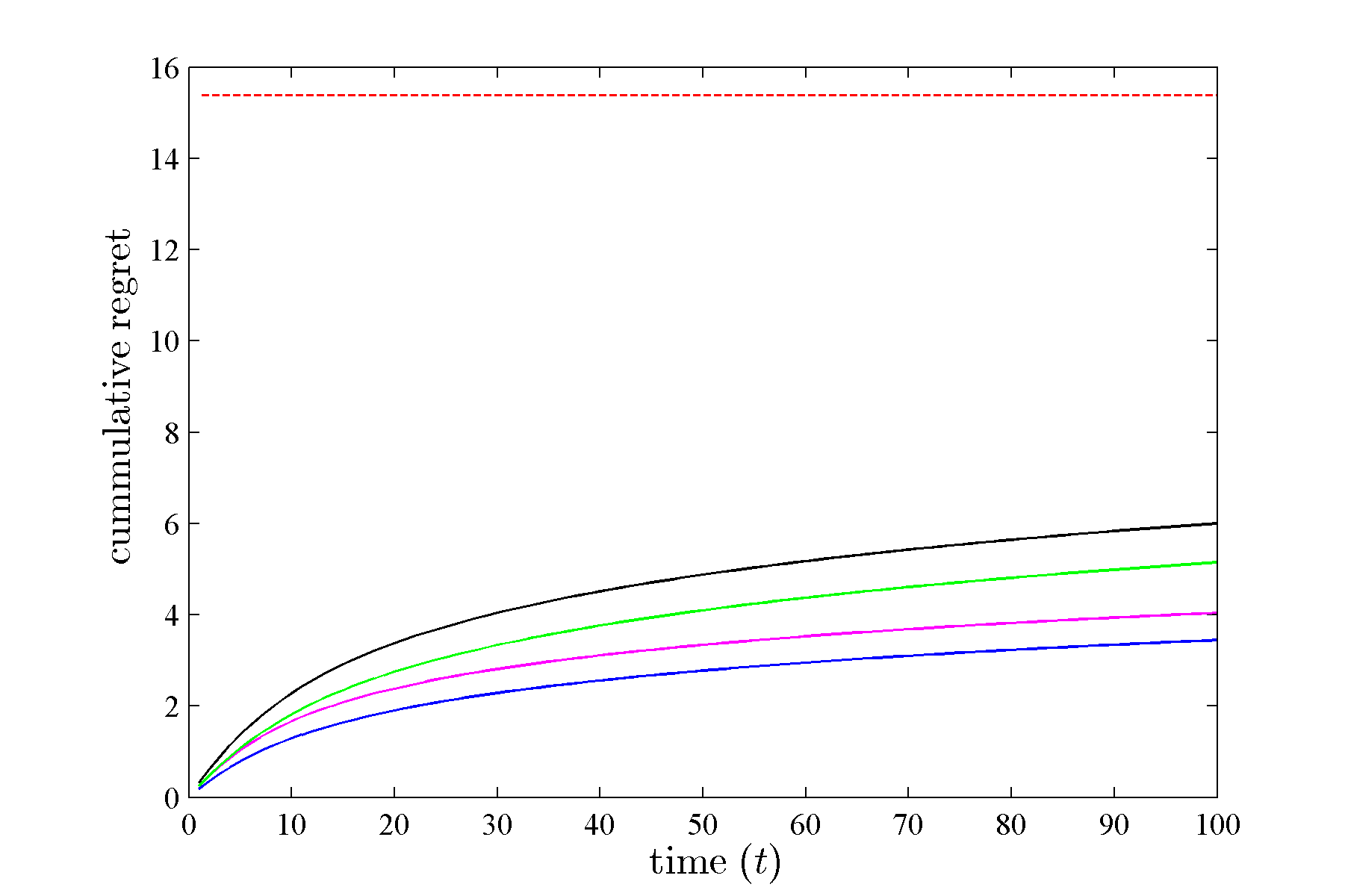}}
    \subfigure[$\eta = 0.5, \{E_t\}_{\text{set.}3}$]{\label{fig:p0d7_n1d0_e3}
      \includegraphics[width=0.32\linewidth]{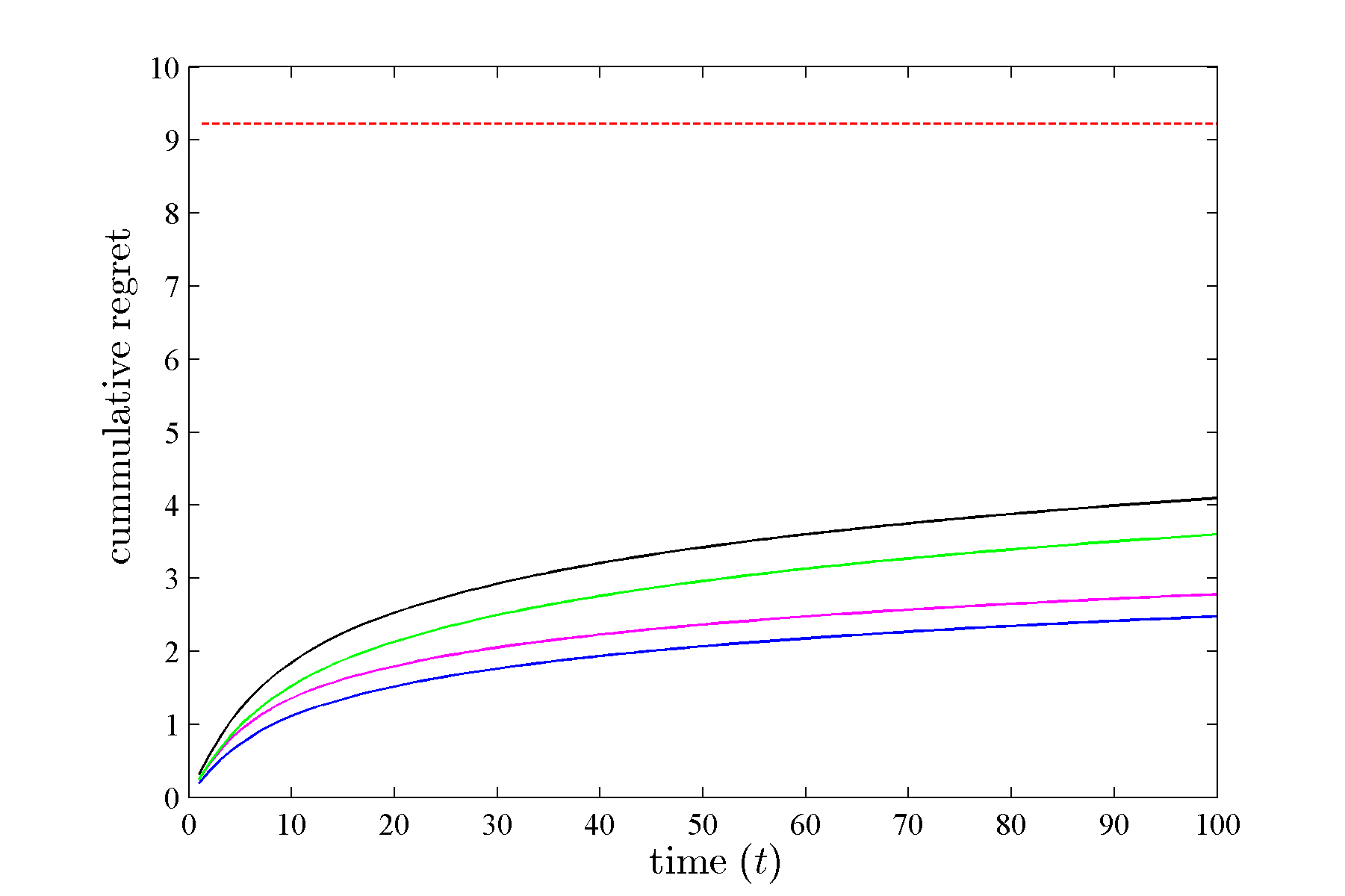}}
  }
\end{figure}

\begin{figure}[htbp]
\floatconts
  {fig:footballdata}
  {\caption{Cumulative regret of the Aggregating Algorithm over the football dataset as used by \cite{vovk2009prediction}, for different choices of substitution functions (Best look ahead(\textcolor{blue}{---}), Worst look ahead(\textcolor{black}{---}), Inverse loss(\textcolor{green}{---}), and Weighted average(\textcolor{magenta}{---})) with learning rate $\eta$ (theoretical regret bound is shown by \textcolor{red}{- - -}).}}
  {
    \subfigure[$\eta = 0.1$]{\label{fig:real_n1d0_e1}
      \includegraphics[width=0.32\linewidth]{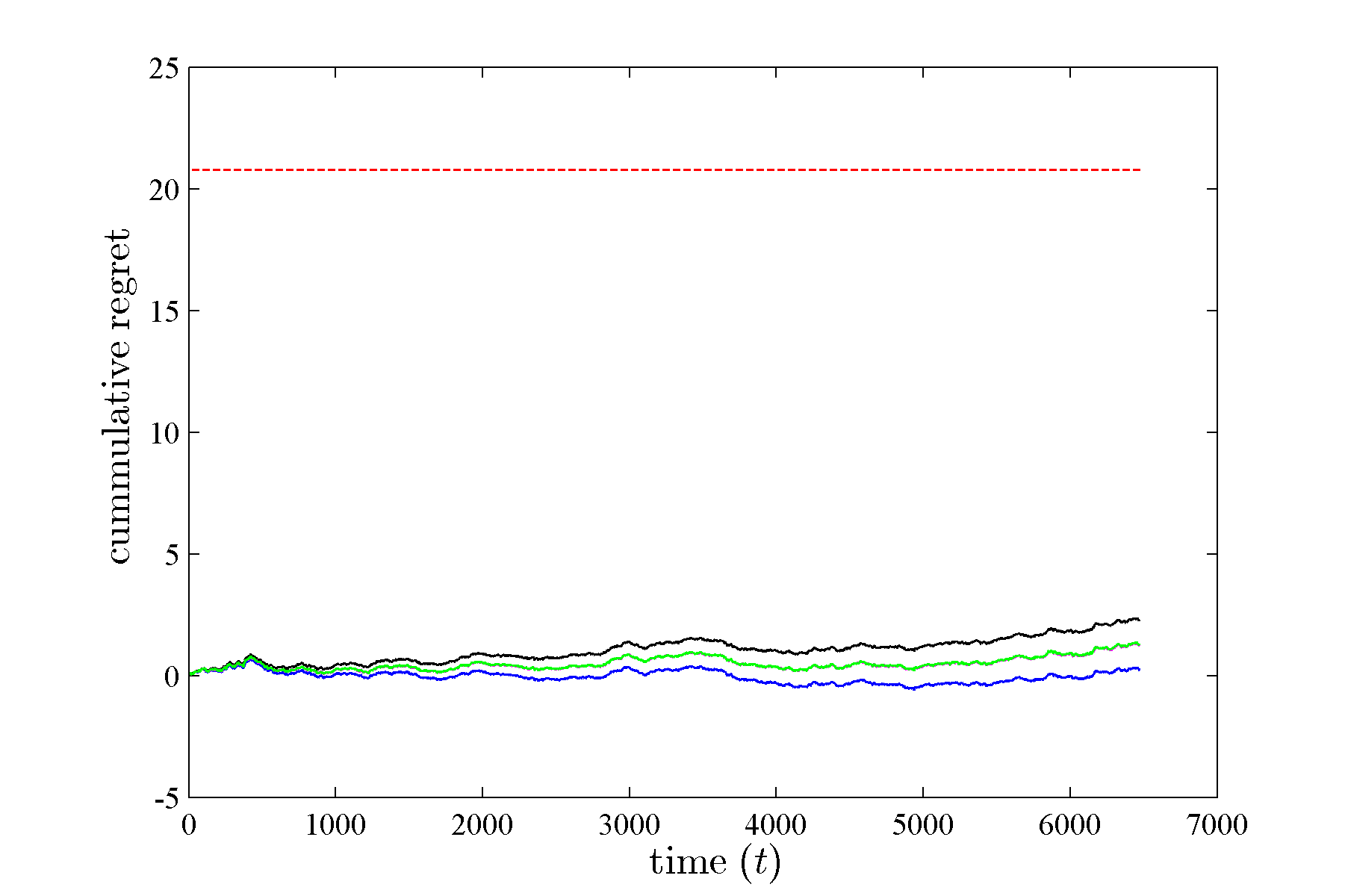}}
    \subfigure[$\eta = 0.3$]{\label{fig:real_n1d0_e1}
      \includegraphics[width=0.32\linewidth]{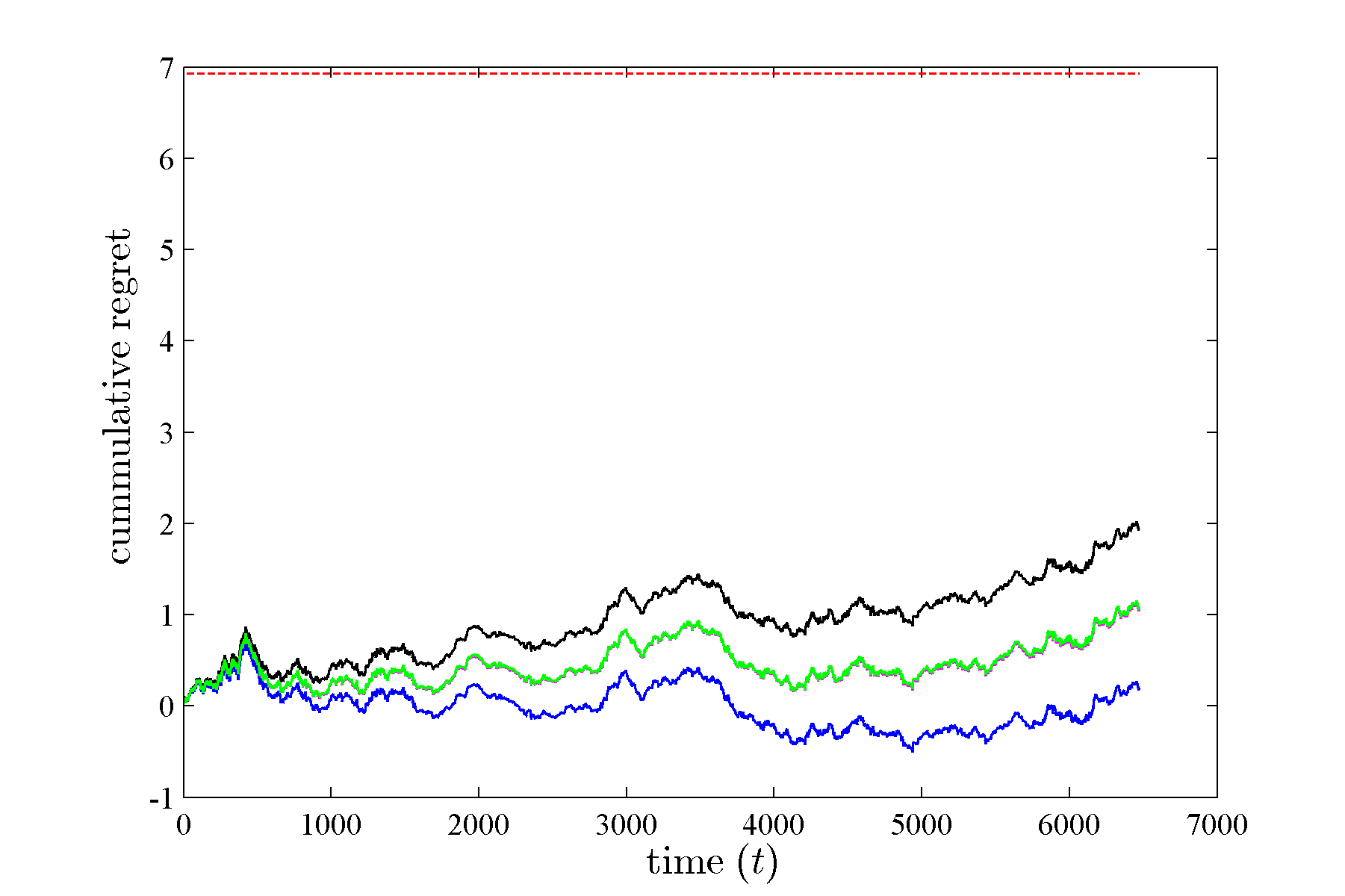}}
    \subfigure[$\eta = 0.5$]{\label{fig:real_n1d0_e1}
      \includegraphics[width=0.32\linewidth]{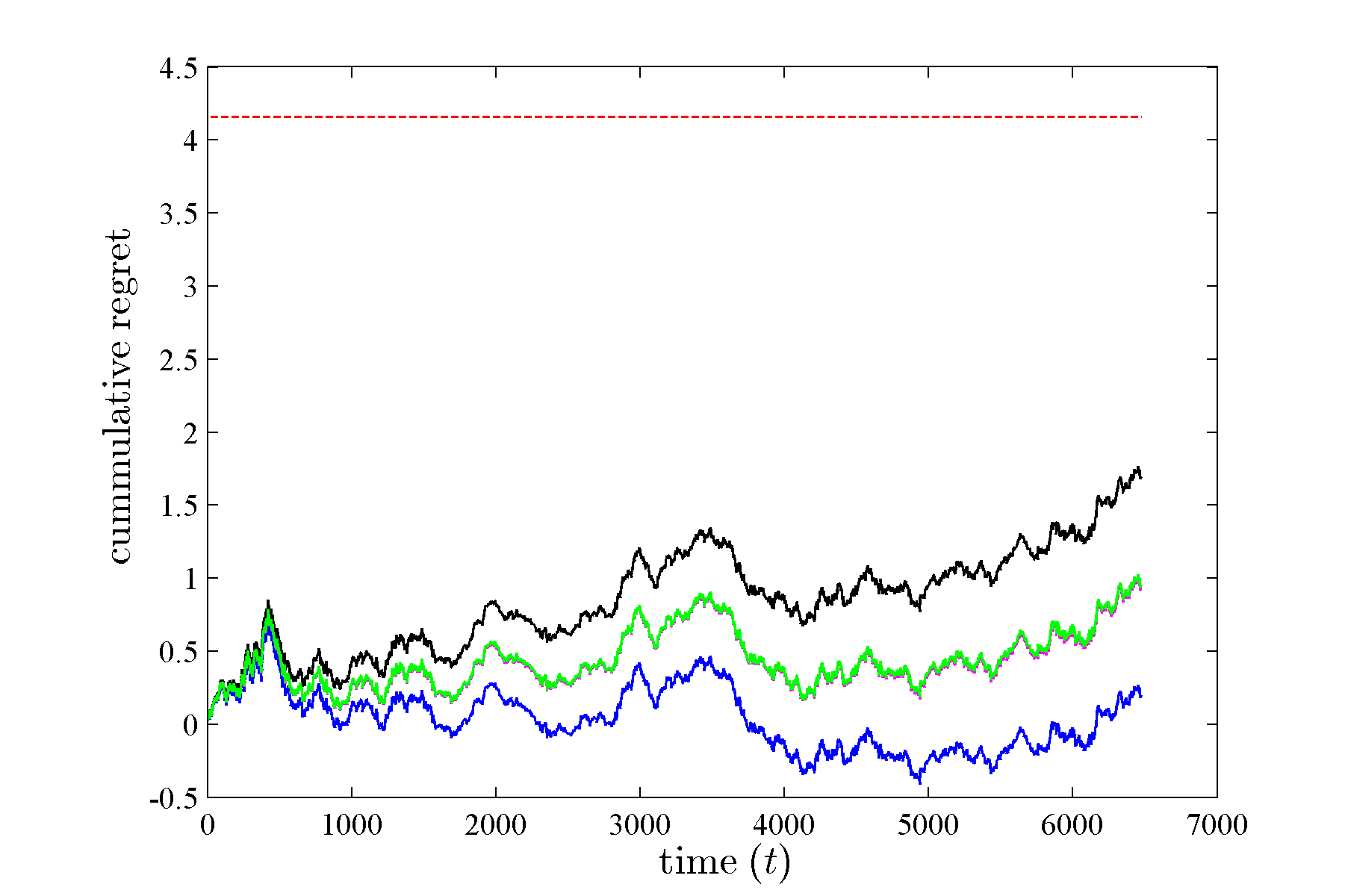}}
  }
\end{figure}

\section{Probability Games with Continuous outcome space}
\label{sec:probability}
We consider an important class of prediction problem called \textit{probability games} (as explained by \cite{vovk2001competitive}), in which the prediction $v$ and the outcome $y$ are probability distributions in some set (for example a finite set of the form $[n]$). A special class of loss functions called \textit{Bregman Loss Functions} (defined below) would be an appropriate choice for the probability games.

Given a differentiable convex function $\phi:\mathcal{S}\rightarrow \mathbb{R}$ defined on a convex set $\mathcal{S} \subset \mathbb{R}^d$ and two points $s_0,s \in \mathcal{S}$ the \textit{Bregman divergence} of $s$ from $s_0$ is defined as
\begin{equation*}
B_{\phi}(s,s_0) := \phi(s) - \phi(s_0) - (s - s_0)' \cdot \textnormal{\textsf{D}} \phi(s_0) ,
\end{equation*}
where $\textnormal{\textsf{D}} \phi(s_0)$ is the gradient of $\phi$ at $s_0$. For any strictly convex function $\phi : \tilde{\Delta}^n \rightarrow \mathbb{R}$, differentiable over the interior of $\tilde{\Delta}^n$, the \textit{Bregman Loss Function} (BLF, \cite{banerjee2005optimality}) $\ell_{\phi}:\Delta^n \times \Delta^n \rightarrow \RR_+$ with generator $\phi$ is given by
\begin{equation}
\label{blfdef}
\ell_{\phi}(y,v) := B_{\phi}(\tilde{y},\tilde{v}) = \phi(\tilde{y}) - \phi(\tilde{v}) - (\tilde{y} - \tilde{v})' \cdot \textnormal{\textsf{D}} \phi(\tilde{v}); \quad y,v \in \Delta^n,
\end{equation}
where $\tilde{y}=\Pi_\Delta(y)$, and $\tilde{v}=\Pi_\Delta(v)$. Since the conditional Bayes risk of a strictly proper loss is strictly concave, any differentiable strictly proper loss $\ell:\Delta^n \rightarrow \RR_+^n$ will generate a BLF $\ell_{\phi}$ with generator $\phi=-\Lubartil_\ell$. Further if $\ell$ is fair, $\ell_{i}(v)=\ell_{\phi}(e_i^n,v)$; i.e. reconstruction is possible. For example the \textit{Kullback-Leibler loss} given by $\ell_{KL}(y,v):=\sum_{i=1}^{n}{y(i)\log{\frac{y(i)}{v(i)}}}$, is a BLF generated by the log loss which is strictly proper.

The following lemma (multi-class extension of a result given by \cite{haussler1998sequential}) provides the mixability condition for probability games.
\begin{lemma}
\label{mixctsn}
For given $\ell:\Delta^n \times \Delta^n \rightarrow \RR_+$, assume that for all $\tilde{y},\tilde{v_1},\tilde{v_2} \in \tilde{\Delta}^n$ (let $y=\Pi_\Delta^{-1}(\tilde{y}), v_1=\Pi_\Delta^{-1}(\tilde{v_1})$, and $v_2=\Pi_\Delta^{-1}(\tilde{v_2})$), the function $g$ defined by
\begin{equation}
\label{gfunc}
g(\tilde{y},\tilde{v_1},\tilde{v_2})=\frac{\beta}{c(\beta)} \ell(y,v_1) - \beta \ell(y,v_2)
\end{equation}
satisfies
\begin{equation}
\label{condmixcts}
\textnormal{\textsf{H}}_{\tilde{y}} g(\tilde{y},\tilde{v_1},\tilde{v_2}) + \textnormal{\textsf{D}}_{\tilde{y}} g(\tilde{y},\tilde{v_1},\tilde{v_2}) \cdot (\textnormal{\textsf{D}}_{\tilde{y}} g(\tilde{y},\tilde{v_1},\tilde{v_2}))' \succcurlyeq 0.
\end{equation}
If
\begin{equation}
\label{mixreqgen}
\exists{\tilde{v^*} \in \tilde{\Delta}^n} \text{ s.t. } \ell(y,v^*) \leq - \frac{c(\beta)}{\beta} \log{\int{e^{-\beta \ell(y,v)}P(d\tilde{v})}}
\end{equation}
holds for the vertices $\tilde{y} = e_i^{n-1}, i \in [n]$, then it holds for all values $\tilde{y} \in \tilde{\Delta}^n$ (where $y=\Pi_\Delta^{-1}(\tilde{y})$, $v^*=\Pi_\Delta^{-1}(\tilde{v^*})$ and $v=\Pi_\Delta^{-1}(\tilde{v})$).
\end{lemma}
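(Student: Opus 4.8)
My plan is to convert the family of inequalities \eqref{mixreqgen} into a single scalar condition ``$G(\tilde{y})\le 1$'' for a suitable function $G$, and then use \eqref{condmixcts} to show that $G$ is convex, so that it suffices to check it at the $n$ vertices of $\tilde{\Delta}^n$. Concretely, I would fix the measure $P$, let $\tilde{v^*}$ be the point supplied by the hypothesis with $v^*=\Pi_\Delta^{-1}(\tilde{v^*})$, and for $\tilde{y}\in\tilde{\Delta}^n$ with $y=\Pi_\Delta^{-1}(\tilde{y})$ set
\[
  G(\tilde{y}):=\int e^{g(\tilde{y},\tilde{v^*},\tilde{v})}\,P(d\tilde{v}),\qquad g(\tilde{y},\tilde{v^*},\tilde{v})=\tfrac{\beta}{c(\beta)}\,\ell(y,v^*)-\beta\,\ell(y,v).
\]
Exponentiating both sides of \eqref{mixreqgen} and rearranging (using $c(\beta)>0$) shows that, for a fixed $\tilde{y}$, \eqref{mixreqgen} holds at $\tilde{y}$ if and only if $G(\tilde{y})\le 1$; moreover $0<G(\tilde{y})\le e^{\beta\ell(y,v^*)/c(\beta)}<\infty$, so $G$ is well defined. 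Hence the goal reduces to: if $G(e_i^{n-1})\le 1$ for all $i\in[n]$ (with the convention $e_n^{n-1}=0_{n-1}$), then $G\le 1$ on all of $\tilde{\Delta}^n$.

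The crucial observation is that \eqref{condmixcts} is precisely the statement that, for each fixed pair $(\tilde{v_1},\tilde{v_2})$, the map $\tilde{y}\mapsto e^{g(\tilde{y},\tilde{v_1},\tilde{v_2})}$ is convex on $\tilde{\Delta}^n$, since the Hessian $\textnormal{\textsf{H}}_{\tilde{y}}(e^{g})$ equals $e^{g}$ times the left-hand side of \eqref{condmixcts} and $e^{g}>0$. I would apply this with $\tilde{v_1}=\tilde{v^*}$ held fixed, so that every integrand $\tilde{y}\mapsto e^{g(\tilde{y},\tilde{v^*},\tilde{v})}$ is convex; convexity then passes to the average $G$ directly from the pointwise inequality defining it (no differentiation under the integral is needed).

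To conclude I would invoke the fact that a convex function on a polytope attains its maximum at an extreme point: the extreme points of $\tilde{\Delta}^n$ are $e_1^{n-1},\dots,e_{n-1}^{n-1},0_{n-1}$ --- that is, the $n$ points $e_i^{n-1}$, $i\in[n]$ --- so $G(\tilde{y})\le\max_{i\in[n]}G(e_i^{n-1})\le 1$ for every $\tilde{y}\in\tilde{\Delta}^n$, the last step being the vertex hypothesis. Translating back through the equivalence from the first paragraph gives \eqref{mixreqgen} for all $\tilde{y}\in\tilde{\Delta}^n$ with the \emph{same} substitution point $\tilde{v^*}$, which is the claim.

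The part needing the most care --- though it is short --- is the verification that the matrix inequality \eqref{condmixcts} coincides with convexity of $e^{g}$ in $\tilde{y}$; this is the multivariate analogue of the scalar second-derivative computation in \cite{haussler1998sequential}, and it tacitly assumes $\ell(\cdot,v)$ is twice differentiable in the simplex coordinates. The other point worth stressing is that $\tilde{v^*}$ stays fixed as $\tilde{y}$ varies --- this is automatic in the convexity argument, and it is exactly what lets one pass from the $n$ vertex constraints to the whole continuum of outcomes $\tilde{y}\in\tilde{\Delta}^n$.
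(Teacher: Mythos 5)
Your proposal is correct and follows essentially the same route as the paper: exponentiate \eqref{mixreqgen} to reduce it to $f(\tilde{y})=\int e^{g(\tilde{y},\tilde{v^*},\tilde{v})}P(d\tilde{v})\le 1$, use \eqref{condmixcts} to show $f$ is convex in $\tilde{y}$, and conclude by maximizing a convex function over the simplex at its vertices. The only (harmless) difference is that you establish convexity of each integrand and pass it to the integral pointwise, whereas the paper differentiates under the integral sign to exhibit the Hessian of $f$ directly.
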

\begin{proof}
From \eqref{mixreqgen}
\begin{equation*}
\frac{\beta}{c(\beta)} \ell(y,v^*) + \log{\int{e^{-\beta \ell(y,v)}P(d\tilde{v})}} \leq 0.
\end{equation*}
By exponentiating both sides we get
\begin{equation*}
e^{\frac{\beta}{c(\beta)} \ell(y,v^*)} \cdot \int{e^{-\beta \ell(y,v)}P(d\tilde{v})} \leq 1.
\end{equation*}
Denoting the left hand side of the above inequality by $f(\tilde{y})$ we have
\begin{equation*}
f(\tilde{y}) = \int{e^{g(\tilde{y},\tilde{v^*},\tilde{v})}P(d\tilde{v})}.
\end{equation*}
Since the Hessian of $f$ w.r.t. $\tilde{y}$ given by
\begin{equation*}
\textsf{H}_{\tilde{y}} f(\tilde{y}) = \int{e^{g(\tilde{y},\tilde{v^*},\tilde{v})} \left( \textsf{H}_{\tilde{y}} g(\tilde{y},\tilde{v^*},\tilde{v}) + \textnormal{\textsf{D}}_{\tilde{y}} g(\tilde{y},\tilde{v^*},\tilde{v}) \cdot (\textnormal{\textsf{D}}_{\tilde{y}} g(\tilde{y},\tilde{v^*},\tilde{v}))' \right) P(d\tilde{v})}
\end{equation*}
is positive semi-definite (by \eqref{condmixcts}), $f(\tilde{y})$ is convex in $\tilde{y}$. So the maximum values of $f$ for $\tilde{y} \in \tilde{\Delta}^n$ occurs for some $\tilde{y} = e_i^{n-1}, i \in [n]$. And by noting that, \eqref{mixreqgen} is equivalent to $f(\tilde{y}) \leq 1$ for $\tilde{y} = e_i^{n-1}, i \in [n]$, the proof is completed.
\end{proof}

The next proposition shows that the mixability and exp-concavity of a strictly proper loss is carried over to the BLF generated by it.
\begin{proposition}
\label{properblfconnect}
For a strictly proper fair loss $\ell:\Delta^n \rightarrow \RR_+^n$, and the BLF $\ell_{\phi}:\Delta^n \times \Delta^n \rightarrow \RR_+$ generated by $\ell$ with $\phi=-\Lubartil_\ell$ , if $\ell$ is $\beta$-mixable (resp. $\alpha$-exp-concave), then $\ell_{\phi}$ is also $\beta$-mixable (resp. $\alpha$-exp-concave).
\end{proposition}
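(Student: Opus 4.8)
The plan is to reduce both claims to a single algebraic identity relating the BLF to the conditional risk of its generating proper loss. Writing $\tilde{y}=\Pi_\Delta(y)$ and $\tilde{v}=\Pi_\Delta(v)$, I would first record the Savage-type representation of a differentiable proper loss: differentiating the properness identity $\Lubartil_\ell(\tilde{p})=L_\ell(p,p)$ and using that the inner minimisation is attained at $\tilde{q}=\tilde{p}$ gives $L_\ell(p,q)=\Lubartil_\ell(\tilde{q})+(\tilde{p}-\tilde{q})'\cdot\textsf{D}\Lubartil_\ell(\tilde{q})$. Substituting $\phi=-\Lubartil_\ell$ into the definition of the Bregman divergence then collapses to
\[
  \ell_\phi(y,v)=B_\phi(\tilde{y},\tilde{v})=L_\ell(y,v)-\Lubartil_\ell(\tilde{y}),\qquad y,v\in\Delta^n .
\]
Thus, for each fixed outcome $y$, the map $v\mapsto\ell_\phi(y,v)$ agrees with the conditional risk $v\mapsto L_\ell(y,v)=y'\cdot\ell(v)$ up to the additive constant $-\Lubartil_\ell(\tilde{y})$, which is independent of the prediction. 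Fairness enters only to pin down the vertices: $\Lubartil_\ell(\Pi_\Delta(e_i^n))=\Lubar_\ell(e_i^n)=\ell_i(e_i^n)=0$, so $\ell_\phi(e_i^n,v)=L_\ell(e_i^n,v)=\ell_i(v)$, consistent with the reconstruction remark below \eqref{blfdef}.

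For exp-concavity I would argue directly from the identity. Since $e^{-\alpha\ell_\phi(y,v)}=e^{\alpha\Lubartil_\ell(\tilde{y})}\cdot e^{-\alpha L_\ell(y,v)}$ and the leading factor is a positive constant in $v$, the map $v\mapsto e^{-\alpha\ell_\phi(y,v)}$ is concave on $\Delta^n$ iff $v\mapsto e^{-\alpha L_\ell(y,v)}$ is. The latter holds for every outcome $y\in\Delta^n$ exactly because $\ell$ is $\alpha$-exp-concave, by the equivalence recalled just before Proposition~\ref{multiexpprop} (a loss is $\alpha$-exp-concave iff $q\mapsto L_\ell(p,q)$ is $\alpha$-exp-concave for all $p\in\Delta^n$, here applied with $p=y$). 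Hence $\ell_\phi(y,\cdot)$ is $\alpha$-exp-concave for every outcome, i.e.\ $\ell_\phi$ is $\alpha$-exp-concave.

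For mixability I would apply Lemma~\ref{mixctsn} with $c(\beta)=1$ to the loss $\ell_\phi$. By the identity above, $g(\tilde{y},\tilde{v_1},\tilde{v_2})=\beta\,\ell_\phi(y,v_1)-\beta\,\ell_\phi(y,v_2)=\beta\,(L_\ell(y,v_1)-L_\ell(y,v_2))$ is an affine function of $\tilde{y}$ (the conditional risk $L_\ell(\cdot,v)=\ell(v)'\cdot(\cdot)$ is linear in the outcome distribution, and the $\Lubartil_\ell(\tilde{y})$ terms cancel), so $\textsf{H}_{\tilde{y}}g=0$ and $\textsf{D}_{\tilde{y}}g\cdot(\textsf{D}_{\tilde{y}}g)'\succcurlyeq 0$; condition \eqref{condmixcts} holds trivially. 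It then remains to verify \eqref{mixreqgen} at the vertices $\tilde{y}=e_i^{n-1}$: using $\ell_\phi(e_i^n,v)=\ell_i(v)$, this reads $\ell_i(v^*)\le-\frac{1}{\beta}\log\int e^{-\beta\ell_i(v)}\,P(d\tilde{v})$ for all $i\in[n]$, i.e.\ that some single prediction $v^*\in\Delta^n$ mixes the distribution $P$ over predictions --- which is precisely $\beta$-mixability of $\ell$ (convexity of $E_\beta(S_\ell)$; a routine closure argument covers non-finitely-supported $P$). Lemma~\ref{mixctsn} then propagates \eqref{mixreqgen} to all $\tilde{y}\in\tilde{\Delta}^n$, which is $\beta$-mixability of $\ell_\phi$. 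Alternatively, bypassing the lemma, $y\mapsto L_\ell(y,v^*)+\frac{1}{\beta}\log\int e^{-\beta L_\ell(y,v)}\,P(d\tilde{v})$ is convex on $\Delta^n$ --- affine plus the $\log$ of an integral of exponentials of maps linear in $y$ --- hence attains its maximum at a vertex, where it is $\le0$ by $\beta$-mixability of $\ell$.

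I expect the only real bookkeeping to be in matching the ``mixing a distribution of predictions'' formulation of $\beta$-mixability of the $n$-outcome loss $\ell$ to the vertex instances of \eqref{mixreqgen}, and in checking that the prediction-independent $\Lubartil_\ell(\tilde{y})$ terms cancel consistently on both sides of the mixability inequality. The core identity and the two reductions are routine; the one step needing care is that the continuous-outcome lifting of Lemma~\ref{mixctsn} applies --- and it does, since $g$ is affine in $\tilde{y}$, the most favourable case of \eqref{condmixcts}.
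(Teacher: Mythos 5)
Your proof is correct, and for the mixability half it follows essentially the same route as the paper: both arguments hinge on the fact that, for the Bregman loss with generator $\phi=-\Lubartil_\ell$, the function $g$ of Lemma~\ref{mixctsn} with $c(\beta)=1$ has vanishing Hessian in $\tilde{y}$, so condition \eqref{condmixcts} is automatic, and the vertex instances of \eqref{mixreqgen} reduce via fairness to the mixability of $\ell$ itself. The paper obtains the vanishing Hessian by direct differentiation of the Bregman form ($\textsf{H}_{\tilde{y}}g=(\frac{\beta}{c(\beta)}-\beta)\,\textsf{H}\phi(\tilde{y})=0$ at $c(\beta)=1$), whereas you first establish the Savage-type identity $\ell_\phi(y,v)=L_\ell(y,v)-\Lubartil_\ell(\tilde{y})$, which makes the affineness of $g$ in $\tilde{y}$ and the reconstruction $\ell_\phi(e_i^n,v)=\ell_i(v)$ transparent. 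Where you genuinely diverge is the exp-concavity half: the paper disposes of it in one clause, by requiring $\tilde{v}^*=\int\tilde{v}\,P(d\tilde{v})$ in \eqref{mixreqgen} and re-invoking the lemma, which tacitly asks the reader to check that the vertex-to-simplex propagation still works when $v^*$ is pinned to the mean; your argument --- that $e^{-\alpha\ell_\phi(y,\cdot)}$ is a positive constant (in $v$) times $e^{-\alpha L_\ell(y,\cdot)}$, the latter being concave for every $y\in\Delta^n$ by the equivalence stated just before Proposition~\ref{multiexpprop} --- is more direct and self-contained, avoiding the continuous-outcome machinery entirely for that half. Both routes deliver the same conclusion; yours makes the exp-concavity claim easier to audit, at the cost of first proving the (standard, but differentiability-dependent) Savage identity.
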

\begin{proof}
 From \eqref{gfunc} and \eqref{blfdef}, for the BLF $\ell_{\phi}$ we have
\begin{align*}
g(\tilde{y},\tilde{v_1},\tilde{v_2}) &= \frac{\beta}{c(\beta)} \{ \phi(\tilde{y}) - \phi(\tilde{v_1}) - (\tilde{y} - \tilde{v_1})' \cdot \textsf{D}\phi(\tilde{v_1}) \} - \beta \{ \phi(\tilde{y}) - \phi(\tilde{v_2}) - (\tilde{y} - \tilde{v_2})' \cdot \textsf{D}\phi(\tilde{v_2}) \}, \\
\textsf{D}_{\tilde{y}} g(\tilde{y},\tilde{v_1},\tilde{v_2}) &= \frac{\beta}{c(\beta)} \{ \textsf{D}\phi(\tilde{y}) - \textsf{D}\phi(\tilde{v_1}) \} - \beta \{ \textsf{D}\phi(\tilde{y}) - \textsf{D}\phi(\tilde{v_2}) \}, \\
\textsf{H}_{\tilde{y}} g(\tilde{y},\tilde{v_1},\tilde{v_2}) &= \frac{\beta}{c(\beta)} \textsf{H}\phi(\tilde{y}) - \beta \textsf{H}\phi(\tilde{y}).
\end{align*}
And since $x \cdot x' \succcurlyeq 0, \forall{x \in \RR^n}$, \eqref{condmixcts} is satisfied for all $\tilde{y},\tilde{v_1},\tilde{v_2} \in \tilde{\Delta}^n$ when $c(\beta)=1$, which is the mixability condition (in addition requiring $\tilde{v^*}=\int{\tilde{v}P(d\tilde{v})}$ in \eqref{mixreqgen} is the exp-concavity condition). Then by applying Lemma~\ref{mixctsn} proof is completed.
\end{proof}
As an application of Proposition~\ref{properblfconnect}, we can see that both Kullback-Leibler loss and log loss are 1-mixable and 1-exp-concave.

\section{Proofs}
\label{sec:proof}

\begin{proof} \textbf{(Proposition \ref{geoprop})}
We first prove that the set $\psi(\Delta^n)$ is convex.
For any $p,q \in \Delta^n$, by assumption there exists $c \ge 0$ and $r \in \Delta^n$ such that
$\frac{1}{2}(E_\beta(\ell(p))+E_\beta(\ell(q))) + c \vone_n =E_\beta(\ell(r))$.
Therefore $\frac{1}{2}(\psi(p)+\psi(q))=\psi(r)$,
which implies the convexity of the set $\psi(\Delta^n)$.

Let $T:\RR^{n-1} \ni (e^{-\beta z_1}-e^{-\beta z_n},...,e^{-\beta z_{n-1}}-e^{-\beta z_n})' \rightarrow (e^{-\beta z_1},...,e^{-\beta z_n})' \in [0,1]^n$.
Note this mapping from low dimension to high dimension is well defined because if there are two different $z$ and $\zbar$ in $\ell(\Vcal)$ such that $J E_\beta(z)$ = $J E_\beta(\zbar)$,
then there must be $c \neq 0$ such that $E_\beta(z) + c \vone = E_\beta(\zbar)$.
This means $z > \zbar$ or $z < \zbar$, which violates the strict properness of $\ell$.

Since for any $v=\psi(p)=J E_\beta(\ell(p))$ we have $p=\ell^{-1}\left( E_\beta^{-1}(Tv)\right)$, the link $\psi$ is invertible ($\ell$ is invertible if it is strictly proper (\cite{vernet2011composite}), and $E_\beta$ is invertible for $\beta > 0$).

Now $\ell \circ \psi^{-1}$ is $\beta$-exp-concave if for all $p,q \in \Delta^n$
\begin{equation}
\label{expproof}
E_\beta \left( \ell \circ \psi^{-1} \left( \frac{1}{2}(\psi(p)+\psi(q)) \right) \right) \, \geq \, \frac{1}{2} E_\beta \left( \ell \circ \psi^{-1} (\psi(p)) \right) + \frac{1}{2}E_\beta \left( \ell \circ \psi^{-1} (\psi(q)) \right).
\end{equation}
The right-hand side is obviously $\frac{1}{2}\left(E_\beta(\ell(p))+E_\beta(\ell(q))\right)$. Let $r = \psi^{-1} \left( \frac{1}{2}(\psi(p)+\psi(q)) \right) \in \Delta^n$. Then
\begin{equation*}
J E_\beta(\ell(r))=\psi(r)=\frac{1}{2}\left( J E_\beta(\ell(p)) + J E_\beta(\ell(q)) \right).
\end{equation*}
Therefore $\frac{1}{2}\left( E_\beta(\ell(p)) + E_\beta(\ell(q)) \right) = E_\beta(\ell(r)) + c \vone_n$ for some $c \in \RR$. To establish \eqref{expproof}, it suffices to show $c \leq 0$. But this is guaranteed by the condition assumed.
\end{proof}

\begin{proof} \textbf{(Proposition \ref{exp_concave_approx})}
We first show that for a half space $H_{-p}^{\gamma_p}$ defined in \eqref{eq:supp_T_ell} with $p \in \Delta^n_\epsilon$, $E_\beta^{-1}(H_{-p}^{\gamma_p} \cap \RR^n_+)$ must be the super-prediction set of a scaled and shifted log loss.
In fact, as $p_i > 0$, clearly $\gamma_p = \min_{x \in T_\ell} x' \cdot (-p) < 0$.
Define a new loss $\elltil^{\text{log}}_i(q) = -\frac{1}{\beta} \log (-\frac{\gamma_p}{p_i} q_i)$ over $q \in \Delta^n$.
Then $S_{\elltil^{\log}} \subseteq E_\beta^{-1}(H_{-p}^{\gamma_p} \cap \RR^n_+)$ can be seen from
\begin{align}
  \sum_i (-p_i) \exp(-\beta \elltil^{\text{log}}_i(q)) = \sum_i (-p_i) (-\frac{\gamma_p}{p_i} q_i) = \gamma_p.
\end{align}
Conversely, for any $u$ such that $u_i > 0$ and $(-p)'\cdot u = \gamma_p$,
simply choose $q_i = -\frac{u_i p_i}{\gamma_p}$.
Then $q \in \Delta^n$ and $E_\beta(\elltil^{\text{log}}(q)) = u$.
In summary, $E_\beta^{-1}(H_{-p}^{\gamma_p} \cap \RR^n_+)$ is the super-prediction set of $\elltil^{\text{log}}$.

To prove Proposition \ref{exp_concave_approx}, we first show that for any point $a \in T_\ell^{\epsilon}$ and any direction $d$ from the relative interior of the positive orthant (which includes the $\vone_n$ direction),
the ray $\{a + r d : r \ge 0\}$ will be blocked by a boundary point of $T_\ell^{\epsilon}$.
This is because by the definition of $T_\ell^\epsilon$ in \eqref{eq:ext_T_ell},
the largest value of $r$ to guarantee $a + rd \in T_\ell^\epsilon$ can be computed by
\begin{align}
  r^* := \sup \{r\ge 0: a+rd \in T_\ell^{\epsilon}\} = \sup \{r\ge 0: (a+rd)' \cdot (-p) \ge \gamma_p, \forall p \in \Delta^n_\epsilon \}
\end{align}
must be finite and attained.
Denote $x = a + r^*d$, which must be on the boundary of $T_\ell^{\epsilon}$ because
\begin{align}
\label{eq:bd_1}
  -x' \cdot p &\ge \gamma_p, \text{ for all } p \in \Delta^n_\epsilon, \\
\label{eq:bd_2}
  \text{ and } -x' \cdot p^* &= \gamma_{p^*} \text{ for some } p^* \in \Delta^n_\epsilon \text{ (not necessarily unique)}.
\end{align}
In order to prove the first statement of Proposition \ref{exp_concave_approx}, it suffices to show that for any point $x$ on the boundary of $T_\ell^{\epsilon}$,
there exists a $q \in \Delta^n$ such that $E_\beta(\elltil_\epsilon(q)) = x$.
Suppose $x$ satisfies \eqref{eq:bd_1} and \eqref{eq:bd_2}.
Then consider the (shifted/scaled) log loss $\elltil^{\log}$ that corresponds to $H_{p^*}^{\gamma_{p^*}}$.
Because log loss is strictly proper, there must be a unique $q \in \Delta^n$ such that the hyperplane $H_0:=\{z: q' \cdot z = q' \cdot E_\beta^{-1}(x)\}$ supports the super-prediction set of $\elltil^{\log}$ (\ie\ $E_\beta^{-1}(H_{-p^*}^{\gamma_{p^*}} \cap \RR_+^n)$) at $E_\beta^{-1}(x)$.
Since $E_\beta^{-1}(T_\ell^\epsilon \cap \RR_+^n)$ is a convex subset of $E_\beta^{-1}(H_{-p^*}^{\gamma_{p^*}} \cap \RR_+^n)$,
this hyperplane also supports $E_\beta^{-1}(T_\ell^\epsilon \cap \RR_+^n)$ at $E_\beta^{-1}(x)$.
Therefore $E_\beta^{-1}(x)$ is an optimal solution to the problem in the definition of $\elltil_\epsilon(q)$ in \eqref{def:elltil}.
Finally observe that it must be the unique optimal solution,
because if there were another solution which also lies on $H_0$, then by the convexity of the super-prediction set of $\elltil_\epsilon$,
the line segment between them must also lie on the prediction set of $\elltil_\epsilon$.
This violates the mixability condition of $\elltil_\epsilon$,
because by construction its sub-exp-prediction set is convex.

In order to check where $\ell(p) = \elltil_\epsilon(p)$, a sufficient condition is that the normal direction $d$ on the exp-prediction set evaluated at $E_\beta(\ell(p))$ satisfies $d_i / \sum_j d_j > \epsilon$.
Simple calculus shows that $d_i \propto p_i \exp(\beta \ell_i(p))$.
Therefore as long as $p$ is in the relative interior of $\Delta^n$,
$d_i / \sum_j d_j > \epsilon$ can always be satisfied by choosing a sufficiently small $\epsilon$.
And for each fixed $\epsilon$, the set $S_\epsilon$ mentioned in the theorem consists exactly of all such $p$ that satisfies this condition.
\end{proof} 

\begin{proof} \textbf{(Proposition \ref{complexversion})}
When $n=2$, \eqref{binaryfirstder}, \eqref{binarysecder1} and \eqref{binarysecder2} and the positivity of $\tilde{\psi}'$ simplify \eqref{multiexpcondition} to the two conditions:
\begin{eqnarray*}
(1-\tilde{p}) \, k'(\tilde{p}) &\leq& k(\tilde{p}) - \alpha \, \tilde{\psi}'(\tilde{p}) \, k(\tilde{p})^2 \, (1-\tilde{p})^2, \\
-\tilde{p} \, k'(\tilde{p}) &\leq& k(\tilde{p}) - \alpha \, \tilde{\psi}'(\tilde{p}) \, k(\tilde{p})^2 \, \tilde{p}^2,
\end{eqnarray*}
for all $\tilde{p} \in (0,1)$. These two conditions can be merged as follows
\begin{equation*}
-\frac{1}{\tilde{p}} + \alpha \, \tilde{\psi}'(\tilde{p}) \, k(\tilde{p}) \, \tilde{p} \enspace \leq \enspace \frac{k'(\tilde{p})}{k(\tilde{p})} \enspace \leq \enspace \frac{1}{1-\tilde{p}} - \alpha \, \tilde{\psi}'(\tilde{p}) \, k(\tilde{p}) \, (1-\tilde{p}), \quad \forall{\tilde{p} \in (0,1)}.
\end{equation*}
By noting that $k(\tilde{p})=\frac{w(\tilde{p})}{{\tilde{\psi}}'(\tilde{p})}$ and $k'(\tilde{p})=\frac{w'(\tilde{p}){\tilde{\psi}}'(\tilde{p})-w(\tilde{p}){\tilde{\psi}}''(\tilde{p})}{{{\tilde{\psi}}'(\tilde{p})}^2}$ completes the proof.
\end{proof}

\begin{proof} \textbf{(Proposition \ref{simplerversion})}
Let $g(\tilde{p}) = \frac{1}{w(\tilde{p})}$ and so $g'(\tilde{p}) = - \frac{1}{w(\tilde{p})^2} w'(\tilde{p}), \, g(v) = \int_{\frac{1}{2}}^{v}g'(\tilde{p})d\tilde{p} + g(\frac{1}{2})$ and $g(\frac{1}{2}) = \frac{1}{w(\frac{1}{2})} = 1$. By dividing all sides of \eqref{maineq} by $-w(\tilde{p})$ and applying the substitution we get,
\begin{equation}
\label{temp1}
\frac{1}{\tilde{p}}g(\tilde{p}) - \alpha \tilde{p} \enspace \geq \enspace  g'(\tilde{p}) - \Phi_{\tilde{\psi}}(\tilde{p}) g(\tilde{p}) \enspace \geq \enspace -\frac{1}{1-\tilde{p}}g(\tilde{p}) + \alpha (1-\tilde{p}), \quad \forall \tilde{p} \in (0,1),
\end{equation}
where $\Phi_{\tilde{\psi}}(\tilde{p}):=-\frac{\tilde{\psi}''(\tilde{p})}{\tilde{\psi}'(\tilde{p})}$. If we take the first inequality of \eqref{temp1} and rearrange it we obtain,
\begin{equation}
\label{temp2}
-\alpha \geq \left( g'(\tilde{p})\frac{1}{\tilde{p}} - g(\tilde{p})\frac{1}{\tilde{p}^2} \right) - \Phi_{\tilde{\psi}}(\tilde{p}) \frac{g(\tilde{p})}{\tilde{p}} = \left( \frac{g(\tilde{p})}{\tilde{p}} \right)' - \Phi_{\tilde{\psi}}(\tilde{p}) \left( \frac{g(\tilde{p})}{\tilde{p}} \right), \quad \forall \tilde{p} \in (0,1).
\end{equation}
Multiplying \eqref{temp2} by $e^{-\int_{0}^{\tilde{p}}\Phi_{\tilde{\psi}}(t)dt}$ will result in,
\begin{eqnarray}
-\alpha e^{-\int_{0}^{\tilde{p}}\Phi_{\tilde{\psi}}(t)dt} &\geq& \left( \frac{g(\tilde{p})}{\tilde{p}} \right)' e^{-\int_{0}^{\tilde{p}}\Phi_{\tilde{\psi}}(t)dt} + \left( \frac{g(\tilde{p})}{\tilde{p}} \right) e^{-\int_{0}^{\tilde{p}}\Phi_{\tilde{\psi}}(t)dt} (- \Phi_{\tilde{\psi}}(\tilde{p})) \nonumber \\
&=& \left( \frac{g(\tilde{p})}{\tilde{p}} e^{-\int_{0}^{\tilde{p}}\Phi_{\tilde{\psi}}(t)dt} \right)', \quad \forall \tilde{p} \in (0,1). \label{temp3}
\end{eqnarray}
Since
\begin{equation*}
- \int_{0}^{\tilde{p}}\Phi_{\tilde{\psi}}(t)dt = - \int_{0}^{\tilde{p}} - \frac{\tilde{\psi}''(t)}{\tilde{\psi}'(t)} dt = \int_{0}^{\tilde{p}}(\log{\tilde{\psi}'(t)})'dt = \log{\frac{\tilde{\psi}'(\tilde{p})}{\tilde{\psi}'(0)}},
\end{equation*}
\eqref{temp3} is reduced to
\begin{eqnarray*}
-\alpha \frac{\tilde{\psi}'(\tilde{p})}{\tilde{\psi}'(0)} &\geq& \left( \frac{g(\tilde{p})}{\tilde{p}} \frac{\tilde{\psi}'(\tilde{p})}{\tilde{\psi}'(0)} \right)', \quad \forall \tilde{p} \in (0,1) \\
\Rightarrow -\alpha \tilde{\psi}'(\tilde{p}) &\geq& \left( \frac{g(\tilde{p})}{\tilde{p}} \tilde{\psi}'(\tilde{p}) \right)', \quad \forall \tilde{p} \in (0,1).
\end{eqnarray*}
For $v \geq \frac{1}{2}$ we thus have
\begin{eqnarray*}
-\alpha \int_{\frac{1}{2}}^{v}\tilde{\psi}'(\tilde{p}) d\tilde{p} &\geq& \int_{\frac{1}{2}}^{v}\left( \frac{g(\tilde{p})}{\tilde{p}} \tilde{\psi}'(\tilde{p}) \right)' d\tilde{p}, \quad \forall v \in [1/2,1) \\
\Rightarrow -\alpha (\tilde{\psi}(v) - \tilde{\psi}(\frac{1}{2})) &\geq& \left( \frac{g(v)}{v} \tilde{\psi}'(v) - \frac{g(\frac{1}{2})}{\frac{1}{2}} \tilde{\psi}'(\frac{1}{2})\right) \\
&=& \left( \frac{1}{w(v)v} \tilde{\psi}'(v) - 2 \tilde{\psi}'(\frac{1}{2})\right), \quad \forall v \in [1/2,1) \\
\Rightarrow w(v) &\geq& \frac{\tilde{\psi}'(v)}{v (2\tilde{\psi}'(\frac{1}{2}) - \alpha (\tilde{\psi}(v) - \tilde{\psi}(\frac{1}{2})))}, \quad \forall v \in [1/2,1).
\end{eqnarray*}
Also by considering $v \leq \frac{1}{2}$ case as above, we get
\begin{equation*}
\frac{\tilde{\psi}'(v)}{v (2\tilde{\psi}'(\frac{1}{2}) - \alpha (\tilde{\psi}(v) - \tilde{\psi}(\frac{1}{2})))} \lesseqgtr w(v), \quad \forall v \in (0,1).
\end{equation*}
Finally by following the similar steps for the second inequality of \eqref{temp1}, the proof will be completed.
\end{proof}

Here we provide an integral inequalities related result (without proof) due to Beesack and presented in \cite{dragomir2000some}. 
\begin{theorem}
	\label{beesacktheo}
	Let $y$ and $k$ be continuous and $f$ and $g$ Riemann integrable functions on $J = [\alpha, \beta]$ with $g$ and $k$ nonnegative on $J$. If
	\begin{equation*}
		y(x) \enspace \geq \enspace f(x) + g(x) \int_{\alpha}^{x}y(t)k(t)dt, \quad x \in J,
	\end{equation*}
	then
	\begin{equation*}
		y(x) \enspace \geq \enspace f(x) + g(x) \int_{\alpha}^{x}f(t)k(t)\exp{\left( \int_{t}^{x}g(r)k(r)dr \right)}dt, \quad  x \in J.
	\end{equation*}
	The result remains valid if $\int_{\alpha}^{x}$ is replaced by $\int_{x}^{\beta}$ and $\int_{t}^{x}$ by $\int_{x}^{t}$ throughout.
\end{theorem}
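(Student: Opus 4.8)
The plan is to recognise this as a Gronwall-type inequality and prove it by the standard integrating-factor argument applied to the ``tail'' integral. First I would set
\[
z(x) := \int_\alpha^x y(t)k(t)\,dt ,
\]
which (since $y$ and $k$ are continuous) is continuously differentiable on $J$ with $z(\alpha) = 0$ and $z'(x) = y(x)k(x)$. Multiplying the hypothesis $y(x) \geq f(x) + g(x)z(x)$ by $k(x) \geq 0$ turns it into the differential inequality
\[
z'(x) - g(x)k(x)\,z(x) \;\geq\; f(x)k(x), \qquad x \in J .
\]

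Next I would introduce the integrating factor $e^{-G(x)}$ with $G(x) := \int_\alpha^x g(r)k(r)\,dr$. Since $gk$ is Riemann integrable (being the product of the Riemann integrable $g$ and the bounded continuous $k$), $G$ is Lipschitz, hence absolutely continuous, with $G'(x) = g(x)k(x)$ for almost every $x$; consequently $e^{-G}z$ is absolutely continuous and, for almost every $x$,
\[
\frac{d}{dx}\bigl( e^{-G(x)} z(x) \bigr) \;=\; e^{-G(x)}\bigl( z'(x) - g(x)k(x)z(x) \bigr) \;\geq\; e^{-G(x)} f(x) k(x) .
\]
Integrating this over $[\alpha,x]$ and using $z(\alpha) = G(\alpha) = 0$ gives $e^{-G(x)} z(x) \geq \int_\alpha^x e^{-G(t)} f(t)k(t)\,dt$, and multiplying by $e^{G(x)}$ together with $G(x)-G(t) = \int_t^x g(r)k(r)\,dr$ yields
\[
z(x) \;\geq\; \int_\alpha^x f(t)k(t)\,\exp\!\Bigl( \int_t^x g(r)k(r)\,dr \Bigr)\,dt .
\]

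Finally I would feed this lower bound for $z(x)$ back into the hypothesis $y(x) \geq f(x) + g(x)z(x)$, the inequality being preserved because $g(x) \geq 0$; this is exactly the asserted conclusion. The reflected version (with $\int_x^\beta$ in place of $\int_\alpha^x$) follows by applying the result just proved to $\tilde y(x) := y(\alpha+\beta-x)$ and the analogously reflected $f,g,k$. The only point requiring any care is the regularity bookkeeping --- justifying the product rule and the termwise integration when $f$ and $g$ are merely Riemann integrable rather than continuous --- and this is genuinely routine once every step is phrased in integral form, which is presumably why the paper cites Beesack's inequality without proof; I do not expect a real obstacle here.
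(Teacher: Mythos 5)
Your proof is correct. Note, however, that the paper itself offers no argument to compare against: it states this result explicitly ``without proof,'' attributing it to Beesack via the survey of Dragomir, so you are supplying a proof where the authors chose to cite one. Your argument is the standard Gronwall--Bellman integrating-factor proof, and every step checks out: $z(x)=\int_\alpha^x y(t)k(t)\,dt$ is $C^1$ because $yk$ is continuous; multiplying the hypothesis by $k(x)\ge 0$ gives the differential inequality $z'-gkz\ge fk$; the integrating factor $e^{-G}$ with $G(x)=\int_\alpha^x g(r)k(r)\,dr$ is Lipschitz (indefinite integral of a bounded Riemann integrable function) with $G'=gk$ at every continuity point of $gk$, hence a.e.; the product $e^{-G}z$ is absolutely continuous, so the a.e.\ product rule plus the fundamental theorem of calculus for absolutely continuous functions justifies integrating the inequality from $\alpha$ to $x$; and feeding the resulting lower bound on $z(x)$ back into the hypothesis uses $g(x)\ge 0$ exactly where needed. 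The reflection $x\mapsto\alpha+\beta-x$ for the second version is also fine. The one place a careless write-up could stumble --- differentiating $e^{-G}z$ when $G$ is merely Lipschitz rather than $C^1$ --- is precisely the point you flag and handle by working with absolute continuity, so there is no gap.
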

Using the above theorem, we get the following simplified test for the conditions in Theorem~\ref{propexpsuff}: 
\[
-\alpha + \frac{\alpha}{2\tilde{p}^2} - \frac{2}{\tilde{p}^2} ~\leq~ a(\tilde{p}) ~\implies~ \left[ \frac{\alpha (1-\tilde{p})}{\tilde{p}} - \frac{2}{\tilde{p}(1-\tilde{p})} \right] + \frac{1}{\tilde{p}(1-\tilde{p})} \int_{\tilde{p}}^{1/2}a(t)dt ~\leq~ a(\tilde{p})
\]
and
\[
\frac{\alpha \tilde{p}}{(1-\tilde{p})} + \frac{2 \alpha \tilde{p} \! - \! \alpha \! - \! 4}{2 (1-\tilde{p})^2} ~\leq~ b(\tilde{p}) ~\implies~ \left[ \frac{\alpha \tilde{p}}{(1-\tilde{p})} - \frac{2}{\tilde{p}(1-\tilde{p})} \right] + \frac{1}{\tilde{p}(1-\tilde{p})} \int_{1/2}^{\tilde{p}}b(t)dt ~\leq~ b(\tilde{p}) .
\]
Above two identities are proved in the following proof of Theorem~\ref{propexpsuff}.

\begin{proof} \textbf{(Theorem \ref{propexpsuff})}
	The necessary \textit{and} sufficient condition for the exp-concavity of proper losses is given by \eqref{identitynscond}. But from \eqref{temp2}, we can see that \eqref{identitynscond} is equivalent to
	\begin{equation}
	\label{identitynscondequiv1}
	\left( \frac{g(\tilde{p})}{\tilde{p}} \right)' \leq -\alpha, \quad \forall \tilde{p} \in (0,1),
	\end{equation}
	and
	\begin{equation}
	\label{identitynscondequiv2}
	-\left( \frac{g(\tilde{p})}{1-\tilde{p}} \right)' \leq -\alpha, \quad \forall \tilde{p} \in (0,1),
	\end{equation}
	where $g(\tilde{p})=\frac{1}{w(\tilde{p})}$ with $w(\frac{1}{2})=1$; i.e. \eqref{identitynscond} if and only if \eqref{identitynscondequiv1} \& \eqref{identitynscondequiv2}.
	
	Now if we choose the weight function $w(\tilde{p})$ as follows
	\begin{equation}
	\label{weightchoicea}
	w(\tilde{p})=\frac{1}{\tilde{p} \left( 2 + \int_{1/2}^{\tilde{p}}a(t)dt \right)},
	\end{equation}
	such that $a(t) \leq -\alpha$, then \eqref{identitynscondequiv1} will be satisfied (since $\eqref{weightchoicea} \implies 2 + \int_{1/2}^{\tilde{p}}a(t)dt = \frac{1}{w(\tilde{p})\tilde{p}} = \frac{g(\tilde{p})}{\tilde{p}} \implies a(\tilde{p}) = \left( \frac{g(\tilde{p})}{\tilde{p}} \right)'$). Similarly the weight function $w(\tilde{p})$ given by
	\begin{equation}
	\label{weightchoiceb}
	w(\tilde{p})=\frac{1}{(1-\tilde{p}) \left( 2 - \int_{1/2}^{\tilde{p}}b(t)dt \right)},
	\end{equation}
	with $b(t) \leq -\alpha$ will satisfy \eqref{identitynscondequiv2} (since $\eqref{weightchoiceb} \implies 2 - \int_{1/2}^{\tilde{p}}b(t)dt = \frac{1}{w(\tilde{p})(1-\tilde{p})} = \frac{g(\tilde{p})}{1-\tilde{p}} \implies -b(\tilde{p}) = \left( \frac{g(\tilde{p})}{1-\tilde{p}} \right)'$). To satisfy both \eqref{identitynscondequiv1} and \eqref{identitynscondequiv2} at the same time (then obviously \eqref{identitynscond} will be satisfied), we can make the two forms of the weight function (\eqref{weightchoicea} and \eqref{weightchoiceb}) equivalent with the appropriate choice of $a(t)$ and $b(t)$. This can be done in two cases.
	
	In the first case, for $\tilde{p} \in (0,1/2]$ we can fix the weight function $w(\tilde{p})$ as given by \eqref{weightchoicea} and choose $a(t)$ such that,
	\begin{itemize}
		\item{$a(t) \leq -\alpha$ (then \eqref{identitynscondequiv1} is satisfied) and}
		\item{$\eqref{weightchoicea} = \eqref{weightchoiceb} \implies b(t) \leq -\alpha$ (then \eqref{identitynscondequiv2} is satisfied).}
	\end{itemize}
	But $\eqref{weightchoicea} = \eqref{weightchoiceb}$ for all $\tilde{p} \in (0,1/2]$ if and only if
	\begin{eqnarray*}
		&&\tilde{p} \left( 2 - \int_{\tilde{p}}^{1/2}a(t)dt \right)=(1-\tilde{p}) \left( 2 + \int_{\tilde{p}}^{1/2}b(t)dt \right), \quad \tilde{p} \in (0,1/2] \\
		&\iff&\frac{\tilde{p}}{1-\tilde{p}} \left( 2 - \int_{\tilde{p}}^{1/2}a(t)dt \right) - 2 = \int_{\tilde{p}}^{1/2}b(t)dt, \quad \tilde{p} \in (0,1/2] \\
		&\iff&\frac{\tilde{p}}{1-\tilde{p}}a(\tilde{p}) + \frac{1}{(1-\tilde{p})^2} \left( 2 - \int_{\tilde{p}}^{1/2}a(t)dt \right) = -b(\tilde{p}), \quad \tilde{p} \in (0,1/2],
	\end{eqnarray*}
	where the last step is obtained by differentiating both sides w.r.t $\tilde{p}$. Thus the constraint $\eqref{weightchoicea} = \eqref{weightchoiceb} \implies b(t) \leq -\alpha$ can be given as
	\begin{eqnarray*}
		&&\frac{\tilde{p}}{1-\tilde{p}}a(\tilde{p}) + \frac{1}{(1-\tilde{p})^2} \left( 2 - \int_{\tilde{p}}^{1/2}a(t)dt \right) \enspace \geq \enspace \alpha, \quad \tilde{p} \in (0,1/2] \\
		&\iff& a(\tilde{p}) \enspace \geq \enspace \left[ \frac{\alpha (1-\tilde{p})}{\tilde{p}} - \frac{2}{\tilde{p}(1-\tilde{p})} \right] + \frac{1}{\tilde{p}(1-\tilde{p})} \int_{\tilde{p}}^{1/2}a(t)dt, \quad \tilde{p} \in (0,1/2] \\
		&\iff& a(\tilde{p}) \enspace \geq \enspace f(\tilde{p}) + g(\tilde{p}) \int_{\tilde{p}}^{1/2}a(t)k(t)dt, \quad \tilde{p} \in (0,1/2],
	\end{eqnarray*}
	where $f(\tilde{p}) = \left[ \frac{\alpha (1-\tilde{p})}{\tilde{p}} - \frac{2}{\tilde{p}(1-\tilde{p})} \right]$, $g(\tilde{p}) = \frac{1}{\tilde{p}(1-\tilde{p})} = \frac{1}{\tilde{p}} + \frac{1}{(1-\tilde{p})}$ and $k(t)=1$. Now by applying Theorem~\ref{beesacktheo} we have
	\begin{equation*}
		a(\tilde{p}) \enspace \geq \enspace f(\tilde{p}) + g(\tilde{p}) \int_{\tilde{p}}^{1/2}f(t)k(t)\exp{\left( \int_{\tilde{p}}^{t}g(r)k(r)dr \right)}dt, \quad \tilde{p} \in (0,1/2].
	\end{equation*}
	Since
	\begin{equation*}
		\int_{\tilde{p}}^{t}g(r)k(r)dr = \int_{\tilde{p}}^{t}\frac{1}{r} + \frac{1}{(1-r)}dr = [\ln r - \ln (1-r)]_{\tilde{p}}^{t} = \ln \left(\frac{t}{(1-t)} \frac{(1-\tilde{p})}{\tilde{p}}\right),
	\end{equation*}
	\begin{eqnarray*}
		\int_{\tilde{p}}^{1/2}f(t)k(t)\exp{\left( \int_{\tilde{p}}^{t}g(r)k(r)dr \right)}dt &=& \int_{\tilde{p}}^{1/2}f(t)\frac{t}{(1-t)} \frac{(1-\tilde{p})}{\tilde{p}}dt \\
		&=& \frac{(1-\tilde{p})}{\tilde{p}} \int_{\tilde{p}}^{1/2}\left[ \frac{\alpha (1-t)}{t} - \frac{2}{t(1-t)} \right]\frac{t}{(1-t)} dt \\
		&=& \frac{(1-\tilde{p})}{\tilde{p}} \int_{\tilde{p}}^{1/2} \alpha - \frac{2}{(1-t)^2} dt \\
		&=& \frac{(1-\tilde{p})}{\tilde{p}} \left[ \alpha t - \frac{2}{1-t} \right]_{\tilde{p}}^{1/2} \\
		&=& \frac{(1-\tilde{p})}{\tilde{p}} \left[ \frac{\alpha}{2} - 4 - \alpha \tilde{p} + \frac{2}{1-\tilde{p}} \right],
	\end{eqnarray*}
	we get
	\begin{eqnarray*}
		a(\tilde{p}) &\geq& \left[ \frac{\alpha (1-\tilde{p})}{\tilde{p}} - \frac{2}{\tilde{p}(1-\tilde{p})} \right] + \frac{1}{\tilde{p}(1-\tilde{p})} \frac{(1-\tilde{p})}{\tilde{p}} \left[ \frac{\alpha}{2} - 4 - \alpha \tilde{p} + \frac{2}{1-\tilde{p}} \right], \quad \tilde{p} \in (0,1/2] \\
		&=& -\alpha+ \frac{\alpha}{2\tilde{p}^2} - \frac{2}{\tilde{p}^2}, \quad \tilde{p} \in (0,1/2].
	\end{eqnarray*}
	
	Similarly in the second case, for $\tilde{p} \in [1/2,1)$ we can fix the weight function $w(\tilde{p})$ as given by \eqref{weightchoiceb} and choose $b(t)$ such that,
	\begin{itemize}
		\item{$b(t) \leq -\alpha$ (then \eqref{identitynscondequiv2} is satisfied) and}
		\item{$\eqref{weightchoicea} = \eqref{weightchoiceb} \implies a(t) \leq -\alpha$ (then \eqref{identitynscondequiv1} is satisfied).}
	\end{itemize}
	But $\eqref{weightchoicea} = \eqref{weightchoiceb}$ for all $\tilde{p} \in [1/2,1)$ if and only if
	\begin{eqnarray*}
		&&\tilde{p} \left( 2 + \int_{1/2}^{\tilde{p}}a(t)dt \right)=(1-\tilde{p}) \left( 2 - \int_{1/2}^{\tilde{p}}b(t)dt \right), \quad \tilde{p} \in [1/2,1) \\
		&\iff&\int_{\tilde{p}}^{1/2}a(t)dt = \frac{1-\tilde{p}}{\tilde{p}} \left( 2 - \int_{1/2}^{\tilde{p}}b(t)dt \right) - 2, \quad \tilde{p} \in [1/2,1) \\
		&\iff&a(\tilde{p}) = -\frac{1-\tilde{p}}{\tilde{p}}b(\tilde{p}) - \frac{1}{\tilde{p}^2} \left( 2 - \int_{1/2}^{\tilde{p}}b(t)dt \right), \quad \tilde{p} \in [1/2,1),
	\end{eqnarray*}
	where the last step is obtained by differentiating both sides w.r.t $\tilde{p}$. Thus the constraint $\eqref{weightchoicea} = \eqref{weightchoiceb} \implies a(t) \leq -\alpha$ can be given as
	\begin{eqnarray*}
		&&\frac{1-\tilde{p}}{\tilde{p}}b(\tilde{p}) + \frac{1}{\tilde{p}^2} \left( 2 - \int_{1/2}^{\tilde{p}}b(t)dt \right) \enspace \geq \enspace \alpha, \quad \tilde{p} \in [1/2,1) \\
		&\iff& b(\tilde{p}) \enspace \geq \enspace \left[ \frac{\alpha \tilde{p}}{(1-\tilde{p})} - \frac{2}{\tilde{p}(1-\tilde{p})} \right] + \frac{1}{\tilde{p}(1-\tilde{p})} \int_{1/2}^{\tilde{p}}b(t)dt, \quad \tilde{p} \in [1/2,1) \\
		&\iff& b(\tilde{p}) \enspace \geq \enspace f(\tilde{p}) + g(\tilde{p}) \int_{1/2}^{\tilde{p}}b(t)k(t)dt, \quad \tilde{p} \in [1/2,1),
	\end{eqnarray*}
	where $f(\tilde{p}) = \left[ \frac{\alpha \tilde{p}}{(1-\tilde{p})} - \frac{2}{\tilde{p}(1-\tilde{p})} \right]$, $g(\tilde{p}) = \frac{1}{\tilde{p}(1-\tilde{p})} = \frac{1}{\tilde{p}} + \frac{1}{(1-\tilde{p})}$ and $k(t)=1$. Again by applying Theorem~\ref{beesacktheo} we have
	\begin{equation*}
		b(\tilde{p}) \enspace \geq \enspace f(\tilde{p}) + g(\tilde{p}) \int_{1/2}^{\tilde{p}}f(t)k(t)\exp{\left( \int_{t}^{\tilde{p}}g(r)k(r)dr \right)}dt, \quad \tilde{p} \in [1/2,1).
	\end{equation*}
	Since
	\begin{equation*}
		\int_{t}^{\tilde{p}}g(r)k(r)dr = \int_{t}^{\tilde{p}}\frac{1}{r} + \frac{1}{(1-r)}dr = [\ln r - \ln (1-r)]_{t}^{\tilde{p}} = \ln \left(\frac{\tilde{p}}{(1-\tilde{p})} \frac{(1-t)}{t}\right),
	\end{equation*}
	\begin{eqnarray*}
		\int_{1/2}^{\tilde{p}}f(t)k(t)\exp{\left( \int_{t}^{\tilde{p}}g(r)k(r)dr \right)}dt &=& \int_{1/2}^{\tilde{p}}f(t)\frac{\tilde{p}}{(1-\tilde{p})} \frac{(1-t)}{t}dt \\
		&=& \frac{\tilde{p}}{(1-\tilde{p})} \int_{1/2}^{\tilde{p}}\left[ \frac{\alpha t}{(1-t)} - \frac{2}{t(1-t)} \right]\frac{(1-t)}{t} dt \\
		&=& \frac{\tilde{p}}{(1-\tilde{p})} \int_{1/2}^{\tilde{p}} \alpha - \frac{2}{t^2} dt \\
		&=& \frac{\tilde{p}}{(1-\tilde{p})} \left[ \alpha t + \frac{2}{t} \right]_{1/2}^{\tilde{p}} \\
		&=& \frac{\tilde{p}}{(1-\tilde{p})} \left[ \alpha \tilde{p} + \frac{2}{\tilde{p}} - \frac{\alpha}{2} - 4 \right],
	\end{eqnarray*}
	we get
	\begin{eqnarray*}
		b(\tilde{p}) &\geq& \left[ \frac{\alpha \tilde{p}}{(1-\tilde{p})} - \frac{2}{\tilde{p}(1-\tilde{p})} \right] + \frac{1}{\tilde{p}(1-\tilde{p})} \frac{\tilde{p}}{(1-\tilde{p})} \left[ \alpha \tilde{p} + \frac{2}{\tilde{p}} - \frac{\alpha}{2} - 4 \right], \quad \tilde{p} \in [1/2,1) \\
		&=& \frac{\alpha \tilde{p}}{(1-\tilde{p})} + \frac{\alpha \tilde{p}}{(1-\tilde{p})^2} - \frac{\alpha}{2(1-\tilde{p})^2} - \frac{2}{(1-\tilde{p})^2}, \quad \tilde{p} \in [1/2,1).
	\end{eqnarray*}
\end{proof}


\begin{proof} \textbf{(Corollary \ref{specialcoro})}
We have to show that $\tilde{\psi}_{\ell}^*$ will satisfy \eqref{maineq} with $\alpha = \beta$, for all $\beta$-mixable proper loss functions. Since
\begin{equation}
\label{explinkdesign}
\frac{(\tilde{\psi}_{\ell}^*)''(\tilde{p})}{(\tilde{\psi}_{\ell}^*)'(\tilde{p})} = \frac{\frac{w_{\ell}'(\tilde{p}) w_{\ell^{\mathrm{log}}}(\tilde{p}) - w_{\ell}(\tilde{p}) w_{\ell^{\mathrm{log}}}'(\tilde{p})}{{w_{\ell^{\mathrm{log}}}(\tilde{p})}^2}}{\frac{w_{\ell}(\tilde{p})}{w_{\ell^{\mathrm{log}}}(\tilde{p})}} = \frac{w_{\ell}'(\tilde{p})}{w_{\ell}(\tilde{p})} - \frac{w_{\ell^{\mathrm{log}}}'(\tilde{p})}{w_{\ell^{\mathrm{log}}}(\tilde{p})} = \frac{w_{\ell}'(\tilde{p})}{w_{\ell}(\tilde{p})} - (\log{w_{\ell^{\mathrm{log}}}(\tilde{p})})' ,
\end{equation}
by substituting ${\tilde{\psi}} = \tilde{\psi}_{\ell}^*$ and $\alpha = \beta$ in \eqref{maineq} we have,
\begin{eqnarray*}
&&-\frac{1}{\tilde{p}} + \beta w_{\ell}(\tilde{p}) \tilde{p} \enspace \leq \enspace \frac{w_{\ell}'(\tilde{p})}{w_{\ell}(\tilde{p})} - \frac{(\tilde{\psi}_{\ell}^*)''(\tilde{p})}{(\tilde{\psi}_{\ell}^*)'(\tilde{p})} \enspace \leq \enspace \frac{1}{1-\tilde{p}} - \beta w_{\ell}(\tilde{p}) (1-\tilde{p}), \quad \forall \tilde{p} \in (0,1) \\
&\iff&-\frac{1}{\tilde{p}} + \beta w_{\ell}(\tilde{p}) \tilde{p} \enspace \leq \enspace (\log{w_{\ell^{\mathrm{log}}}(\tilde{p})})' \enspace \leq \enspace \frac{1}{1-\tilde{p}} - \beta w_{\ell}(\tilde{p}) (1-\tilde{p}), \quad \forall \tilde{p} \in (0,1) \\
&\iff&-\frac{1}{\tilde{p}} + \beta w_{\ell}(\tilde{p}) \tilde{p} \enspace \leq \enspace -\frac{1}{\tilde{p}} + \frac{1}{1-\tilde{p}} \enspace \leq \enspace \frac{1}{1-\tilde{p}} - \beta w_{\ell}(\tilde{p}) (1-\tilde{p}), \quad \forall \tilde{p} \in (0,1) \\
&\iff&\beta \enspace \leq \enspace \frac{1}{\tilde{p}(1-\tilde{p})w_{\ell}(\tilde{p})} \enspace = \enspace \frac{w_{\ell^{\mathrm{log}}}(\tilde{p})}{w_{\ell}(\tilde{p})}, \quad \forall \tilde{p} \in (0,1),
\end{eqnarray*}
which is true for all $\beta$-mixable binary proper loss functions. From \eqref{explinkdesign}
\begin{align*}
\left( \log{(\tilde{\psi}_{\ell}^*)'(\tilde{p})} \right)' &= (\log{w_{\ell}(\tilde{p})})' - (\log{w_{\ell^{\mathrm{log}}}(\tilde{p})})' = \left( \log{\frac{w_{\ell}(\tilde{p})}{w_{\ell^{\mathrm{log}}}(\tilde{p})}} \right)', \\
\Rightarrow \left[ \log{(\tilde{\psi}_{\ell}^*)'(\tilde{p})} \right]_{1/2}^{\tilde{p}} &= \left[ \log{\frac{w_{\ell}(\tilde{p})}{w_{\ell^{\mathrm{log}}}(\tilde{p})}} \right]_{1/2}^{\tilde{p}}, \\
\Rightarrow \log{(\tilde{\psi}_{\ell}^*)'(\tilde{p})} - \log{(\tilde{\psi}_{\ell}^*)'\left(\frac{1}{2}\right)} &= \log{\frac{w_{\ell}(\tilde{p})}{w_{\ell^{\mathrm{log}}}(\tilde{p})} \cdot \frac{w_{\ell^{\mathrm{log}}}(\frac{1}{2})}{w_{\ell}(\frac{1}{2})}},
\end{align*}
it can be seen that a design choice of $(\tilde{\psi}_{\ell}^*)'\left(\frac{1}{2}\right) = 1$ is made in the construction of this link function.
\end{proof}

%

\section{Squared Loss}
\label{sec:square_loss}

\begin{figure}[htbp!]
\centering
  \includegraphics[width=7cm]{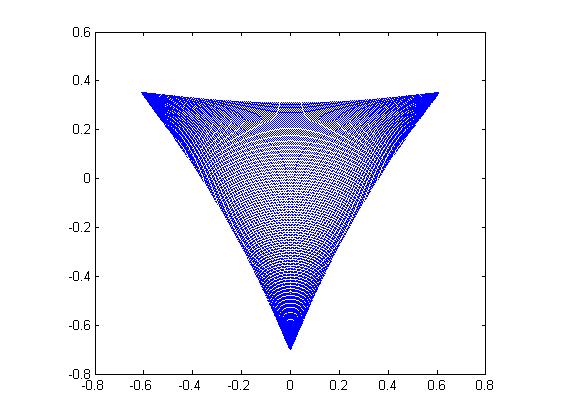}\\
  \caption{Projection of the exp-prediction set of square loss ($\beta=1$) along the $\vone_3$ direction.
   By the apparent lack of convexity of the projection, the condition $\partial_{\vone_n} \mathcal{B}_{\beta} \subseteq E_\beta (\ell(\mathcal{V}))$ in Proposition \ref{geoprop} does not hold in this case.}\label{fig:proj_square}
\end{figure}

In this section we will consider the multi-class squared loss with partial losses given by $\ell_{i}^{\mathrm{sq}}(p):=\sum_{j \in [n]}(\llbracket i=j \rrbracket - p_j)^2$. The Bayes risk of this loss is $\Lubartil_{\ell^{\mathrm{sq}}}(\tilde{p}) = 1 - \sum_{i=1}^{n-1}{p_i^2} - (1 - \sum_{i=1}^{n-1}{p_i})^2$. Thus the Hessian of the Bayes risk is given by
\begin{equation*}
\textnormal{\textsf{H}}\Lubartil_{\ell^{\mathrm{sq}}}(\tilde{p}) = 2
\begin{pmatrix}
  -2 & -1 & \cdots & -1 \\
  -1 & -2 & \cdots & -1 \\
  \vdots  & \vdots  & \ddots & \vdots  \\
  -1 & -1 & \cdots & -2
 \end{pmatrix}.
\end{equation*}
For the identity link, from \eqref{kpeq} we get $k(\tilde{p})=-\textnormal{\textsf{H}}\Lubartil_{\ell^{\mathrm{sq}}}(\tilde{p})$ and $\textnormal{\textsf{D}}_v[k(\tilde{p})]=0$ since $\textnormal{\textsf{D}}\tilde{\psi}(\tilde{p})=I_{n-1}$. Thus from \eqref{multiexpcondition}, the multi-class squared loss is $\alpha$-exp-concave (with $\alpha > 0$) if and only if for all $\tilde{p} \in \mathring{\tilde{\Delta}}^n$ and for all $i \in [n]$
\begin{align}
  0 &\preccurlyeq k(\tilde{p}) - \alpha k(\tilde{p}) \cdot (e_i^{n-1}-\tilde{p}) \cdot (e_i^{n-1}-\tilde{p})' \cdot k(\tilde{p}) \nonumber \\
  \iff k(\tilde{p})^{-1} &\succcurlyeq \alpha (e_i^{n-1}-\tilde{p}) \cdot (e_i^{n-1}-\tilde{p})'. \label{idcancond1}
\end{align}
Similarly for the canonical link, from \eqref{canlinkcond1} and \eqref{canlinkcond2}, the composite loss is $\alpha$-exp-concave (with $\alpha > 0$) if and only if for all $\tilde{p} \in \mathring{\tilde{\Delta}}^n$ and for all $i \in [n]$
\begin{equation}
k(\tilde{p})^{-1} = -[\textnormal{\textsf{H}}\Lubartil_{\ell^{\mathrm{sq}}}(\tilde{p})]^{-1} \succcurlyeq \alpha (e_i^{n-1}-\tilde{p}) \cdot (e_i^{n-1}-\tilde{p})'. \label{idcancond2}
\end{equation}
From \eqref{idcancond1} and \eqref{idcancond2}, it can be seen that for the multi-class squared loss the level of exp-concavification by identity link and canonical link are same. When $n=2$, since $k(\tilde{p})=4$, the condition \eqref{idcancond1} is equivalent to
\begin{eqnarray*}
&&\frac{1}{4} \geq \alpha (e_i^{n-1}-\tilde{p}) \cdot (e_i^{n-1}-\tilde{p})', \quad i \in [2], \forall{\tilde{p} \in (0,1)} \\
&\iff&\alpha \leq \frac{1}{4\tilde{p}^2} \quad \text{and} \quad \alpha \leq \frac{1}{4(1-\tilde{p})^2}, \quad \forall{\tilde{p} \in (0,1)} \\
&\iff&\alpha \leq \frac{1}{4}.
\end{eqnarray*}
When $n=3$, using the fact that a $2 \times 2$ matrix is positive semi-definite if its trace and determinant are both non-negative, it can be easily verified that the condition \eqref{idcancond1} is equivalent to $\alpha \leq \frac{1}{12}$.

For binary squared loss, the link functions constructed by geometric (Proposition \ref{geoprop}) and calculus (Corollary \ref{specialcoro}) approach are:
\begin{equation*}
\tilde{\psi}(\tilde{p}) = e^{-2(1-\tilde{p})^2} - e^{-2\tilde{p}^2} \quad \text{and} \quad \tilde{\psi}_{\ell}^*(\tilde{p}) = \frac{4}{4} \int_{0}^{\tilde{p}}{\frac{w_{\ell^{\mathrm{sq}}}(v)}{w_{\ell^{\mathrm{log}}}(v)}dv} = 4 \left(\frac{\tilde{p}^2}{2} - \frac{\tilde{p}^3}{3}\right),
\end{equation*}
respectively. By applying these link functions we can get 1-exp-concave composite squared loss.

\section{Boosting Loss}
\label{sec:boosting_loss}

\begin{figure}
  \centering
    \includegraphics[width=0.50\textwidth]{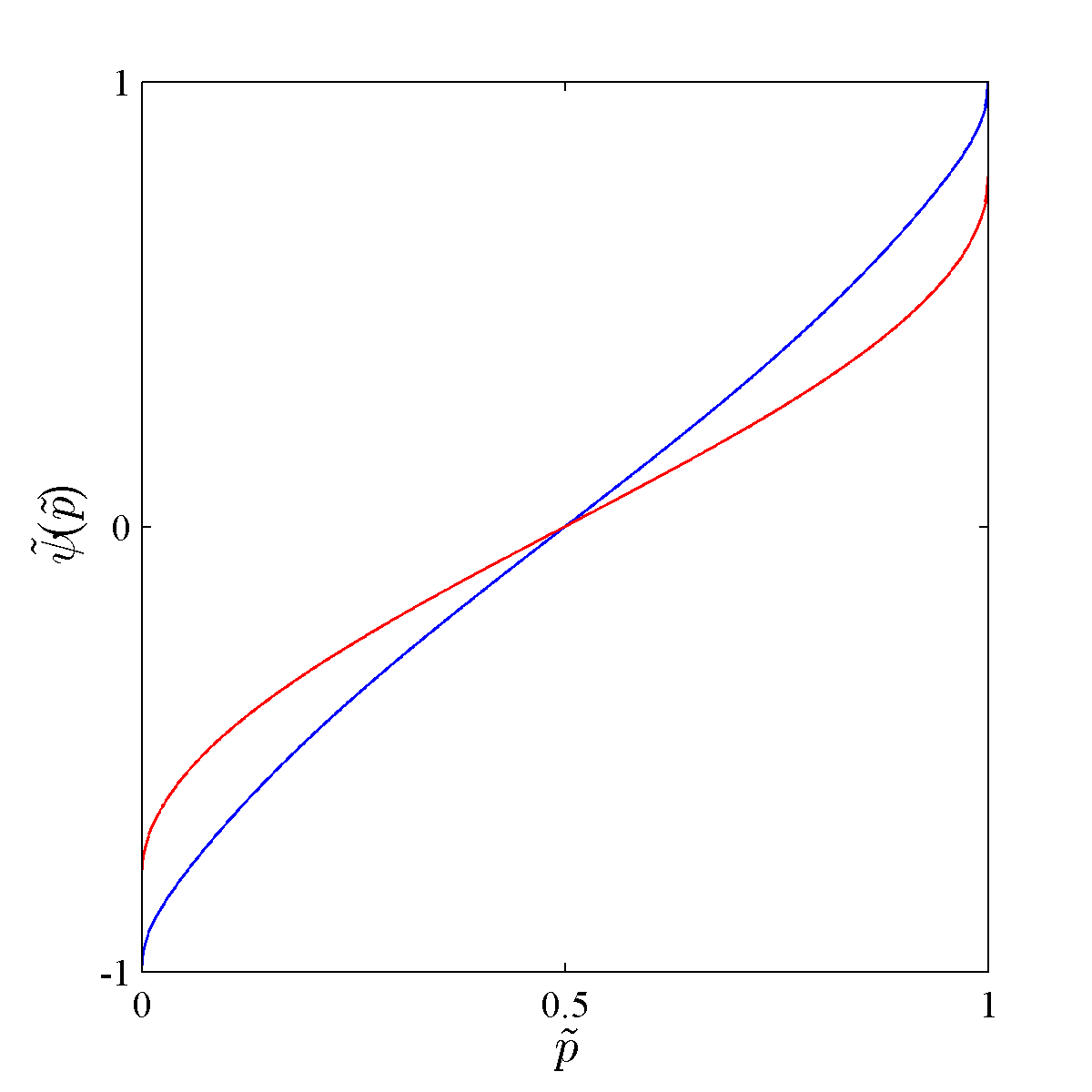}
  \caption{Exp-concavifying link functions for binary boosting loss constructed by Proposition \ref{geoprop} (\textcolor{blue}{---}) and Corollary \ref{specialcoro} (\textcolor{red}{---}).}
  \label{fig:expboost}
\end{figure}

Consider the binary ``boosting loss" (\cite{buja2005loss}) with partial losses given by
\begin{equation*}
\ell^{\mathrm{boost}}_1(\tilde{p}) = \frac{1}{2} \sqrt{\frac{1-\tilde{p}}{\tilde{p}}} \quad \text{and} \quad \ell^{\mathrm{boost}}_2(\tilde{p}) = \frac{1}{2} \sqrt{\frac{\tilde{p}}{1-\tilde{p}}}, \quad \forall \tilde{p} \in (0,1).
\end{equation*}
This loss has weight function
\begin{equation*}
w_{\ell^{\mathrm{boost}}}(\tilde{p}) = \frac{1}{4(\tilde{p} (1-\tilde{p}))^{3/2}}, \quad \forall \tilde{p} \in (0,1).
\end{equation*}
By applying the results of \cite{van2012mixability}, we can show that this loss is mixable with mixability constant 2 (since $\beta_{\ell}=\inf_{\tilde{p} \in (0,1)}{\frac{w_{\ell^{\mathrm{log}}}(\tilde{p})}{w_{\ell}(\tilde{p})}}$).

Now we can check the level of exp-concavification of this loss for different choices of link functions. By considering the identity link $\tilde{\psi}(\tilde{p}) = \tilde{p}$, from \eqref{identitynscond}
\begin{align*}
-\frac{1}{\tilde{p}} + \alpha w_{\ell^{\mathrm{boost}}}(\tilde{p}) \tilde{p} \enspace &\leq \enspace \frac{w_{\ell^{\mathrm{boost}}}'(\tilde{p})}{w_{\ell^{\mathrm{boost}}}(\tilde{p})}, \quad \forall \tilde{p} \in (0,1) \\
\Rightarrow -\frac{1}{\tilde{p}} + \alpha w_{\ell^{\mathrm{boost}}}(\tilde{p}) \tilde{p} \enspace &\leq \enspace 6 w_{\ell^{\mathrm{boost}}}(\tilde{p}) \sqrt{\tilde{p} (1-\tilde{p})} (2\tilde{p}-1), \quad \forall \tilde{p} \in (0,1) \\
\Rightarrow \alpha \tilde{p} - 6 \sqrt{\tilde{p} (1-\tilde{p})} (2\tilde{p}-1) \enspace &\leq \enspace \frac{1}{w_{\ell^{\mathrm{boost}}}(\tilde{p}) \tilde{p}}, \quad \forall \tilde{p} \in (0,1) \\
\Rightarrow \alpha \enspace &\leq \enspace 8 \sqrt{\frac{1-\tilde{p}}{\tilde{p}}} (\tilde{p}-1/4), \quad \forall \tilde{p} \in (0,1) \\
\Rightarrow \alpha \enspace &\leq \enspace 0,
\end{align*}
we see that the boosting loss is non-exp-concave. Similarly from \eqref{binarycanonical}
\begin{equation}
\label{boosteq}
\alpha \enspace \leq \enspace \frac{1}{w_{\ell^{\mathrm{boost}}}(\tilde{p}) \tilde{p}^2} \enspace = \enspace 4 \sqrt{\frac{1-\tilde{p}}{\tilde{p}}} (1-\tilde{p}), \quad \forall \tilde{p} \in (0,1)
\end{equation}
it can be seen that the RHS of \eqref{boosteq} approaches $0$ as $p \rightarrow 1$, thus it is not possible to exp-concavify (for some $\alpha > 0$) this loss using the canonical link. For binary boosting loss, the link functions constructed by geometric (Proposition \ref{geoprop}) and calculus (Corollary \ref{specialcoro}) approach are:
\begin{equation*}
\tilde{\psi}(\tilde{p}) = e^{-\sqrt{\frac{1-\tilde{p}}{\tilde{p}}}} - e^{-\sqrt{\frac{\tilde{p}}{1-\tilde{p}}}} \quad \text{and} \quad \tilde{\psi}_{\ell}^*(\tilde{p}) = \frac{4}{2} \int_{0}^{\tilde{p}}{\frac{w_{\ell^{\mathrm{boost}}}(v)}{w_{\ell^{\mathrm{log}}}(v)}dv} = \frac{1}{2} \arcsin(-1+2 \tilde{p}),
\end{equation*}
respectively (as shown in Figure~\ref{fig:expboost}). By applying these link functions we can get 2-exp-concave composite boosting loss.


\section{Log Loss}
\label{sec:log_loss}

By using the results from this paper and \cite{van2012mixability} one can easily verify that the multi-class log loss is both 1-mixable and 1-exp-concave.  For binary log loss, the link functions constructed by geometric (Proposition \ref{geoprop}) and calculus (Corollary \ref{specialcoro}) approach are:
\begin{equation*}
\tilde{\psi}(\tilde{p}) = e^{\log{\tilde{p}}} - e^{\log{1-\tilde{p}}} = 2 \tilde{p} - 1 \quad \text{and} \quad \tilde{\psi}_{\ell}^*(\tilde{p}) = \frac{4}{4} \int_{0}^{\tilde{p}}{\frac{w_{\ell^{\mathrm{log}}}(v)}{w_{\ell^{\mathrm{log}}}(v)}dv} = \tilde{p},
\end{equation*}
respectively.

\end{document}